\newcommand{\As}{\mathcal{A}}
\newcommand{\Ds}{\mathcal{D}}
\newcommand{\Fs}{\mathcal{F}}
\newcommand{\Gs}{\mathcal{G}}
\newcommand{\Hs}{\mathcal{H}}
\newcommand{\Ks}{\mathcal{K}}
\newcommand{\Rs}{\mathcal{R}}
\newcommand{\Ws}{\mathcal{W}}
\newcommand{\Xs}{\mathcal{X}}
\newcommand{\Ys}{\mathcal{Y}}
\def\eqref#1{equation~\ref{#1}}
\def\1{\bm{1}}
\DeclareMathAlphabet{\mathsfit}{\encodingdefault}{\sfdefault}{m}{sl}
\SetMathAlphabet{\mathsfit}{bold}{\encodingdefault}{\sfdefault}{bx}{n}
\def\sR{{\mathbb{R}}}
\newcommand{\E}{\mathbb{E}}
\newcommand{\Ls}{\mathcal{L}}
\newcommand{\R}{\mathbb{R}}
\DeclareMathOperator{\sign}{sign}
\DeclareMathOperator{\Tr}{Tr}
\crefname{ineq}{Inequality}{Inequalities}
\newtheorem{definition}{Definition}[section]
\newtheorem{theorem}{Theorem}[section]
\newtheorem{lemma}{Lemma}[section]
\newtheorem{example}{Example}[section]
\newtheorem{proposition}{Proposition}[section]
\def\+#1{\mathcal{#1}}
\def\-#1{\mathbb{#1}}
\newcommand{\notshow}[1]{{}}
\newcommand{\AutoAdjust}[3]{{ \mathchoice{ \left #1 #2  \right #3}{#1 #2 #3}{#1 #2 #3}{#1 #2 #3} }}
\newcommand{\Xcomment}[1]{{}}
\newcommand{\InParentheses}[1]{\AutoAdjust{(}{#1}{)}}
\newcommand{\InBrackets}[1]{\AutoAdjust{[}{#1}{]}}
\newcommand{\InAngles}[1]{\AutoAdjust{\langle}{#1}{\rangle}}
\newcommand{\InNorms}[1]{\AutoAdjust{\|}{#1}{\|}}
\newcommand{\InAbs}[1]{\AutoAdjust{|}{#1}{|}}
\newcommand{\lin}{\mathrm{LIN}}
\newcommand{\dec}{\mathrm{decCE}}   
\author{Jingwu Tang\\ Carnegie Mellon University \and Jiayun Wu \\ Tsinghua University \and Zhiwei Steven Wu\\Carnegie Mellon University  \and Jiahao Zhang\\Carnegie Mellon University}
\title{Dimension-Free Decision Calibration for Nonlinear Loss Functions\footnote{All authors are listed in alphabetical order.}}
\begin{document}

\maketitle
\begin{abstract}
When model predictions inform downstream decision making, a natural question is under what conditions can the decision-makers simply respond to the predictions as if they were the true outcomes. Calibration---a classical statistical notion that requires the predictions to be unbiased conditional on the prediction values---suffices to guarantee that simple best-response to predictions is optimal. However, for high-dimensional prediction outcome spaces, obtaining an accurate calibrated predictor requires exponential computational and statistical complexity.  The recent relaxation known as \emph{decision calibration}~\citep{zhao2021calibrating} circumvents this curse of dimensionality, as it only requires predictions to be unbiased conditional on the induced best-response actions---in effect ensuring the optimality of the simple best-response rule while requiring only polynomial sample complexity in the dimension of outcomes.


However, known results on calibration and decision calibration crucially rely on linear loss functions for establishing best-response optimality. A natural approach to handle nonlinear losses is to map outcomes $y$ into a feature space $\phi(y)$ of dimension $m$, then approximate losses with linear functions of $\phi(y)$. Unfortunately, even simple classes of nonlinear functions can demand exponentially large or infinite (e.g., RKHS-induced) feature dimensions $m$. A key open problem is whether it is possible to achieve decision calibration with sample complexity independent of~$m$. We begin with a negative result: even verifying decision calibration under standard deterministic best response inherently requires sample complexity polynomial in~$m$.

Motivated by this lower bound, we investigate a smooth version of decision calibration in which decision-makers follow a smooth best-response---also known as the quantal response. This smooth relaxation enables dimension-free decision calibration algorithms. We introduce algorithms that, given $\mathrm{poly}(|\mathcal{A}|,1/\epsilon)$ samples and any initial predictor~$p$, can efficiently (1) determine if a predictor is decision-calibrated, and (2) post-process the initial predictor to satisfy decision calibration without worsening accuracy. Our algorithms apply broadly to function classes that can be well-approximated by bounded-norm functions in (possibly infinite-dimensional) separable RKHS; examples of such classes include piecewise linear loss functions and $d$-dimensional Cobb--Douglas loss functions.

\end{abstract}

\section{Introduction}\label{sec: intro}

Machine learning models increasingly underpin decisions in high-stakes scenarios, such as medical diagnosis and financial forecasting. In these settings, predictions generated by models are used by downstream decision-makers who seek to optimize their utilities. Formally, we consider a decision-theoretic setting where there is an underlying distribution $\Ds$ over the spaces of covariates $\cal X$ and outcomes $\cal Y$, and the goal is to learn a predictor $p\colon \cal X \rightarrow Y$ that informs downstream decision makers. A decision-making problem involves an action set $\cal A$, a loss function $\ell\colon \cal A \times \cal Y \rightarrow\mathbb{R}$, and a goal of selecting an action to minimize the incurred loss $\ell(a, y)$. Often, these scenarios encompass not just a single, known loss function but rather a broad class of potential loss functions $\cal L$. For instance, different stakeholders in healthcare might prioritize different aspects of the outcome, or various financial decision-makers might have diverse risk appetites and preferences.

{When should a decision-maker treat a prediction $p(x)$ as if it were the true outcome $y$?} Calibration provides a principled answer to this question. Specifically, a predictor is calibrated if, for every prediction value $v$, it provides an unbiased estimate of the outcome conditioned on the event of $p(x)=v$; that is,  $\mathbb{E}[Y \mid p(X) = v] = v$. While calibration can be trivially achieved with a constant predictor $p(x) = \mathbb{E}[Y]$, such predictors are uninformative. In practice, calibration is often enforced via post-processing: given a base predictor $p_0$, predictions are adjusted to produce a new predictor $p$ that is both calibrated and more accurate (e.g., as measured by square loss). Decision-makers with linear loss functions can treat calibrated predictions as reliable substitutes for outcomes. Concretely, if the loss function  $\ell(a,y)$ is linear in the outcome---that is $\ell(a, y) = r_\ell(a) \cdot y$---then the optimal mapping from $p(x)$ to an action $a$ is to simply select the best response based on $p(x)$: $\arg\min_a \ell(p(x), a)$ \citep{foster1999regret, noarov2023high}. However, achieving calibration becomes both computationally and statistically challenging for high-dimensional outcome spaces $\cal Y$. In particular, verifying the calibration condition requires checking the unbiasedness condition over exponentially many events of $\{p(x) = v\}$, which demands an exponential number of samples \citep{gopalan2024computationally}.

To circumvent this curse of dimensionality, \cite{zhao2021calibrating} introduced the concept of \emph{decision calibration}. Compared to full calibration, decision calibration also asks for the unbiasedness of the prediction but only conditions on a collection of events relevant to action selections. As a result, decision-makers can still treat a decision-calibrated prediction as the outcome and simply respond optimally to predictions without needing to perform any complex adjustments or second-guessing. Notably, this relaxed notion reduces complexity, allowing polynomial-time verification and post-processing even in high-dimensional settings. 

However, \cite{zhao2021calibrating} and nearly all prior work on calibration for decision-making (e.g., \cite{foster1999regret, noarov2023high}) have focused primarily on linear loss functions, as linearity is crucial for establishing that best-response actions to predictions are indeed optimal. Yet, many real-world decision-making scenarios naturally involve non-linear loss functions. Such non-linearities frequently arise for risk-averse decision-makers \citep{pratt1964risk}. For example, investors may adopt loss functions that more heavily penalize extreme financial losses, and healthcare decisions often incorporate risk-sensitive objectives that assign greater weight to severe medical outcomes. A common strategy to accommodate such nonlinearities is feature expansion: mapping outcomes $y$ to a higher-dimensional feature space $\phi(y)$, where loss functions can be approximated by linear functions of $\phi(y)$. However, for many practical classes of loss functions---such as certain subclasses of Lipschitz functions---these feature expansions require exponentially large dimensions. As a result, the computational and statistical benefits of decision calibration are effectively nullified.

In this paper, we explore the notion of \emph{dimension-free decision calibration} for non-linear losses. Specifically, we investigate whether it is possible to achieve decision calibration for extensive classes of non-linear loss functions without incurring sample and computational complexities that scale exponentially with the dimension of the outcome space $\cal Y$. Our goal is to develop algorithms whose complexity is independent of the dimensionality of the feature expansion $\phi(y)$.

\subsection{Our Results and Techniques}

To set up our problem, let $\cal L$ denote a class of loss functions $\ell(a,y)$ that may depend non-linearly on $y$ but can be expressed as (or well approximated by) a linear function of an $m$-dimensional feature expansion $\phi(y)$:
\[
\ell(a, y) = \langle r_\ell(a),\phi(y)\rangle
\]
The dimension $m$ of \(\phi(y)\) can be infinite, as it can be the feature map induced by some reproducing kernel Hilbert space (RKHS). To state the condition of decision calibration \cite{zhao2021calibrating}, it is useful to interpret a predictor $p\colon \cal X \rightarrow \cal Y$ as a loss estimator $f_p$ such that $f_p(x, a, \ell)$ predicts the incurred loss $\ell(a, y)$ for taking action $a$ given covariates $x$. Let $\cal K$ denote a class of decision rules $k$ that maps covariates to distributions over actions. Then a predictor $p$ is $(\cal L, \cal K)$-decision calibrated if for all $\ell \in \cal L$ and $k \in \cal K$:
\[
    \E_{(x,y)\sim\Ds}\E_{a\sim k(x)}[\ell(a,y)]=\E_{(x,y)\sim\Ds}\E_{a\sim k(x)}[f_p(x,a,\ell)]. 
\]
In other words, decision calibration ensures that the predictions $p(x)$ yield loss estimates that are statistically indistinguishable from the true losses from the decision-maker’s perspective.

Our first result focuses on the deterministic optimal decision rule, which, given a prediction $p(x)$ and loss function $\ell$, selects the action $\arg\min_{a\in\mathcal{A}} \ell(a, p(x))$. We establish a lower bound showing that determining whether a predictor is decision-calibrated under this rule requires a sample complexity of \(\Omega(\sqrt{m})\).

\begin{theorem}[Informal Statement of \cref{thm:lower-bound}]
Under the deterministic optimal decision rule, any algorithm determining whether a predictor $p$ is approximately decision-calibrated requires \(\Omega(\sqrt{m})\) samples.
\end{theorem}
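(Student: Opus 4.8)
The plan is to reduce from $\ell_1$ uniformity testing on a domain of size $m$, for which $\Omega(\sqrt{m}/\epsilon^2)$ samples are necessary (Paninski's lower bound), by encoding such an instance as a decision-calibration verification instance whose feature dimension is exactly $m$. First I would take the covariate space to be a single point, the outcome space $\mathcal{Y}=[m]$ with the canonical feature map $\phi(y)=e_y\in\R^m$ (so that $m$ is the minimal honest representation dimension of the outcome space), and two actions $\mathcal{A}=\{0,1\}$. The loss class is $\mathcal{L}=\{\ell_w : w\in\{-1,+1\}^m\}$, where $\ell_w(1,y)=w_y=\langle w,\phi(y)\rangle$ and $\ell_w(0,y)=2=\langle 2\vone,\phi(y)\rangle$; every $\ell_w$ is linear in $\phi(y)$ and the class uses an $m$-dimensional representation. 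Let $p$ be the predictor that always outputs the uniform distribution on $[m]$, so its induced loss estimator is $f_p(a,\ell_w)=\E_{\tilde Y\sim\mathrm{Unif}[m]}[\ell_w(a,\tilde Y)]=\langle r_{\ell_w}(a),u\rangle$ with $u=(1/m,\dots,1/m)$, and let $\mathcal{K}$ contain the deterministic best-response rules $k_\ell=\argmin_{a\in\mathcal{A}}\langle r_\ell(a),u\rangle$.

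Next I would establish the reduction identity. Since $\|w\|_\infty\le 1$, for every probability vector $\mu$ we have $\langle w,\mu\rangle\le 1<2$, so against the predicted uniform distribution the best response is unambiguously action $1$ for every $\ell_w$ (no ties, so the deterministic rule is well defined; note $k_{\ell_w}$ is the constant action-$1$ rule for all $w$, so the pairing of losses and rules is immaterial here). Writing $q$ for the true marginal law of $Y$, the decision-calibration violation of $p$ at $(\ell_w,k_{\ell_w})$ is
\[
\bigl|\E_{q}[\ell_w(1,Y)]-f_p(1,\ell_w)\bigr|=\bigl|\langle w,q\rangle-\langle w,u\rangle\bigr|=\bigl|\langle w,q-u\rangle\bigr|,
\]
and the worst case over $w\in\{-1,+1\}^m$ equals exactly $\|q-u\|_1$. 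Thus $p$ is $(\mathcal{L},\mathcal{K})$-decision-calibrated iff $q=u$, and in general its decision-calibration error is precisely $\|q-u\|_1$. Hence any algorithm that, from $n$ i.i.d.\ samples of $\Ds$ (here just $n$ i.i.d.\ draws $Y_1,\dots,Y_n\sim q$), decides whether $p$ is $\epsilon$-approximately decision-calibrated is an algorithm that distinguishes $q=u$ from $\|q-u\|_1\ge\epsilon$, i.e.\ solves uniformity testing on $[m]$ with the same sample budget.

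I would then invoke the uniformity-testing lower bound: distinguishing $\mathrm{Unif}[m]$ from distributions at $\ell_1$-distance $\ge\epsilon$ requires $\Omega(\sqrt m/\epsilon^2)$ samples. To keep things self-contained one can rederive this via Ingster's $\chi^2$ second-moment method on Paninski's paired-perturbation prior (pair up the $m$ coordinates and flip each pair by $\pm\epsilon/m$ independently): the $\chi^2$ divergence between the $n$-fold uniform product and the resulting mixture is $o(1)$ unless $n=\Omega(\sqrt m/\epsilon^2)$, and Le Cam's two-point method then gives the bound. Fixing $\epsilon$ to an absolute constant yields the claimed $\Omega(\sqrt m)$, and since $|\mathcal{A}|=2$ the hardness is genuinely attributable to the feature dimension $m$ rather than to the size of the action set.

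The main work is in lining up the modeling details rather than in any single deep step. I would be careful that: the deterministic best response is tie-free (handled by the additive constant $2$ on action $0$); the loss class is honestly $m$-dimensional and not smuggling the hardness into $|\mathcal{A}|$ or an exotic feature map (handled by the standard-basis $\phi$ and $|\mathcal{A}|=2$, with each $\ell_w$ bounded in $[-1,2]$ so the family can be embedded into more ``natural'' classes such as piecewise-linear losses if desired); and the paper's precise $(\epsilon_1,\epsilon_2)$-gap notion of ``approximately decision-calibrated versus far from it'' is matched---the clean equality $(\text{decision-calibration error})=\|q-u\|_1$, which is $0$ on the calibrated side and $\ge\epsilon$ on the far side, leaves enough slack to absorb any such gap. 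A last detail to confirm is that the lower-bound model charges the algorithm only for samples of $\Ds$; with a trivial covariate space a sample is exactly a draw of $Y$, which is precisely what uniformity testing consumes, so the reduction is tight.
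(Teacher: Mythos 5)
Your reduction is internally consistent, but it proves a different (and, for the paper's purposes, weaker) statement than \cref{thm:lower-bound}, because of how the loss class is normalized. The paper's theorem is for $\Ls_\lin=\{\ell:\InNorms{r_\ell(a)}_2\le 1\}$ over the unit ball $\Ys$, i.e.\ $R_1=R_2=1$; this is exactly the regime in which the positive results (\cref{thm: audit_ERM}, \cref{thm: recali-rkhs}) are dimension-free, so the lower bound isolates the deterministic best response as the obstruction. Your hard class $\{\ell_w: w\in\{-1,+1\}^m\}$ has $\InNorms{w}_2=\sqrt m$, and this is not an artifact of your choice of feature map: since $\{y\mapsto w_y\}$ realizes all $2^m$ sign patterns on $[m]$ with margin $1$, any exact linear representation must have $R_1R_2\ge\sqrt m$ (compare the Rademacher complexity $R_1R_2/\sqrt m$ of the bounded linear class on $m$ points against the value $1$ forced by shattering). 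Consequently the smooth-response upper bounds already scale as $\mathrm{poly}(R_1R_2)=\mathrm{poly}(\sqrt m)$ on your instance, and your $\Omega(\sqrt m)$ bound does not separate the deterministic from the smooth rule. If you instead renormalize to $\InNorms{w}_2\le1$, the calibration error of the Paninski perturbation becomes $\InNorms{q-u}_2=\epsilon/\sqrt m=o(1)$, and the reduction collapses; indeed, with a single covariate and unit-norm losses the error is just $\InNorms{\E[\phi(y)]-u}_2$, which Hoeffding in Hilbert space (\cref{thm: hoeffding}) estimates to accuracy $\epsilon$ with $O(1/\epsilon^2)$ samples, dimension-free. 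So under the paper's normalization your instance is \emph{easy}.

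The missing idea is that, with unit-norm losses, the hardness cannot come from the marginal bias of $\phi(y)$ alone; it must come from the decision rules. The paper's proof exploits that the induced rules $k_{\ell'}$ are indicators of halfspaces $\1(\InAngles{r,p(x)}>0)$, of which there are exponentially many distinguishable ones (the shattering lemma on $V=\{\tfrac12 e_i\}$). It then hides an $\epsilon$-magnitude conditional bias, all along a single direction $e_1$, inside an unknown one of these $2^d$ regions, so the calibration error stays at $\epsilon$ even with $\InNorms{r_\ell(a)}_2\le1$; indistinguishability from the perfectly calibrated distribution is then a birthday-paradox collision argument (\cref{lem:lower-bound-close}), giving $\Omega(\sqrt d)$. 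Your single-covariate construction has only one effective region, so it cannot reproduce this. The uniformity-testing route is elegant and could be salvageable, but you would need to re-encode the Paninski prior into the choice of \emph{region} (as the paper does with $\sigma$) rather than into the marginal of $y$, at which point you essentially recover the paper's argument.
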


Our proof follows an indistinguishability argument akin to that of \citet{gopalan2024computationally}: given a predictor $p$, we construct two nearly identical distributions, $\mathcal{D}_1$ and $\mathcal{D}_2$, such that only $\mathcal{D}_1$ satisfies decision calibration. We show that distinguishing which of the two distributions generated the data requires at least $\Omega(\sqrt{m})$ samples. However, our setting departs significantly from \citet{gopalan2024computationally}, who study lower bounds for full calibration, which is stronger than decision calibration. As a result, our construction of $\Ds_1$ and $\mathcal{D}_2$ differs substantially and leverages the geometry of best response regions. When the action set $\mathcal{A}$ consists of two actions, these regions correspond to half-spaces of the form $\mathbf{1}[\langle r,p(x) \rangle > 0]$. The core idea behind constructing $\mathcal{D}_2$ is to introduce a subtle bias in the outcomes--specifically, a deviation $(y - p(x))$--that is statistically difficult to distinguish from zero-mean label noise. Simultaneously, using a ``shattering argument'' from VC theory, we show the existence of a loss function $\ell$ such that the associated half-space captures a biased region. Consequently, the predictor $p$ fails to satisfy decision calibration under $\mathcal{D}_2$.

Note that our lower bound does not imply the impossibility of learning a decision-calibrated predictor with fewer samples. Indeed, proving such an algorithmic lower bound is impossible, as even the trivial constant predictor \( p(x) = \mathbb{E}[Y] \) is both calibrated and decision-calibrated. Existing algorithms for computing non-trivial decision-calibrated predictors typically proceed by post-processing an initial predictor \( p_0 \). Crucially, these algorithms rely on an \emph{auditing} step, which identifies loss functions exhibiting large decision-calibration errors whenever the predictor is not decision-calibrated. As a result, our lower bound provides strong evidence that non-trivial dimension-free decision calibration for deterministic optimal decision rules.

Given our lower bound, we instead focus on decision calibration under the smooth optimal decision rule:
\[\Tilde{k}_{f_p,\ell}(x,a)=\frac{e^{-\beta f_p(x,a,\ell)}}{\sum_{a'}e^{-\beta f_p(x,a',\ell)}}.\]
This smooth optimal decision rule, commonly referred to as the \emph{quantal response} model, has been extensively studied in economics and decision theory~\citep{mcfadden1976quantal, mckelvey1995quantal}. As a behavioral model, it naturally captures bounded rationality and accounts for probabilistic decision-making behavior. Technically, this assumption ensures that the decision maker's response function is Lipschitz, which is crucial for obtaining dimension-free results.

By adopting the smooth decision rule, we obtain our first positive result, which provides a dimension-free auditing algorithm for decision calibration.

\begin{theorem}[Informal Statement of \cref{thm: audit_ERM}]\label{thm: inform_audit}
Under the smooth optimal decision rule, it suffices to have a number of samples independent of the dimension $m$ to solve the following auditing problem: whenever a loss estimator \( f \) has decision calibration error at least \( \epsilon \), the algorithm can, with high probability, identify a loss function \( \ell \) and a smooth optimal decision rule \(k= \tilde{k}_{f_p,\ell'} \) witnessing a $\epsilon/2$ decision calibration error:
\[
\left| \E_{(x,y)\sim \mathcal{D}} \E_{a \sim k(x)} [\ell(a, y)] - \E_{(x,y)\sim \mathcal{D}} \E_{a \sim k(x)} [f_p(x, a, \ell)] \right| \geq \epsilon/2.
\]
\end{theorem}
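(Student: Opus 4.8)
The plan is to show that the natural empirical-risk-minimization (ERM) auditor succeeds: it searches over pairs $(\ell,\ell')\in\mathcal{L}\times\mathcal{L}$ to maximize the empirical decision-calibration violation, and the number of samples it needs is controlled by a Rademacher complexity that is \emph{dimension-free} thanks to the kernel norm bounds.

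First I would put the violation in a convenient form. Writing $\ell(a,y)=\langle r_\ell(a),\phi(y)\rangle$ and letting the loss estimator induced by $p$ be $f_p(x,a,\ell)=\langle r_\ell(a),\psi_p(x)\rangle$ for the feature embedding $\psi_p(x)$ predicted by $p$, one checks that for $k=\tilde k_{f_p,\ell'}$,
\[
\mathrm{err}(\ell,\ell')\;:=\;\E_{(x,y)}\E_{a\sim k(x)}\big[\ell(a,y)-f_p(x,a,\ell)\big]\;=\;\E_x\sum_{a\in\mathcal{A}}\tilde k_{f_p,\ell'}(x,a)\,\big\langle r_\ell(a),\,\Delta(x)\big\rangle,
\]
with residual embedding $\Delta(x)=\E[\phi(Y)\mid X=x]-\psi_p(x)$. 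The auditor draws $n$ i.i.d.\ samples $(x_i,y_i)$, forms the empirical version $\widehat{\mathrm{err}}(\ell,\ell')=\frac1n\sum_i\sum_a\tilde k_{f_p,\ell'}(x_i,a)[\ell(a,y_i)-f_p(x_i,a,\ell)]$ (every term is computable from $p$, the samples, and kernel evaluations), and returns the maximizer of $|\widehat{\mathrm{err}}(\ell,\ell')|$.

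The heart of the proof is the uniform deviation bound $\sup_{\ell,\ell'\in\mathcal{L}}|\widehat{\mathrm{err}}(\ell,\ell')-\mathrm{err}(\ell,\ell')|\le\epsilon/4$ with high probability. By symmetrization this reduces to bounding the Rademacher complexity of the class of maps $(x,y)\mapsto\sum_{a}\tilde k_{f_p,\ell'}(x,a)\langle r_\ell(a),\phi(y)-\psi_p(x)\rangle$ as $(\ell,\ell')$ range over $\mathcal{L}^2$, which I would do by peeling three layers. (i) For each fixed $a$, the linear-functional families $\{x\mapsto\langle r_{\ell'}(a'),\psi_p(x)\rangle\}_{\ell'}$ and $\{(x,y)\mapsto\langle r_\ell(a),\phi(y)-\psi_p(x)\rangle\}_\ell$ have Rademacher complexity $O(R\kappa/\sqrt n)$, where $R$ bounds the RKHS norm of every $r_\ell(a)$ and $\kappa$ bounds $\|\phi(y)\|$ and $\|\psi_p(x)\|$; \emph{this step is where the bound becomes independent of $m$}, since it uses only the norm bounds and not the dimension. (ii) For fixed $a$, $\tilde k_{f_p,\ell'}(x,a)$ is an $O(\beta)$-Lipschitz function of the $|\mathcal{A}|$-vector of predicted losses $(\langle r_{\ell'}(a'),\psi_p(x)\rangle)_{a'}$, so a vector-contraction inequality (e.g.\ Maurer's) bounds the complexity of $\{x\mapsto\tilde k_{f_p,\ell'}(x,a)\}_{\ell'}$ by $O(\beta|\mathcal{A}|R\kappa/\sqrt n)$. (iii) Each softmax weight lies in $[0,1]$ and each residual inner product is bounded by $2R\kappa$, so a standard product-of-bounded-classes bound and a final sum over the $|\mathcal{A}|$ actions yield an overall Rademacher complexity of $\mathrm{poly}(|\mathcal{A}|)\cdot\beta R^2\kappa^2/\sqrt n$; a bounded-differences concentration step then gives the claim with $n=\mathrm{poly}(|\mathcal{A}|,1/\epsilon,\beta,R,\kappa)$.

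Given the uniform bound, the conclusion is immediate: if $|\mathrm{err}(\ell^\star,{\ell'}^\star)|\ge\epsilon$ for some pair, then the empirical maximizer $(\hat\ell,\hat\ell')$ obeys $|\widehat{\mathrm{err}}(\hat\ell,\hat\ell')|\ge|\widehat{\mathrm{err}}(\ell^\star,{\ell'}^\star)|\ge\epsilon-\epsilon/4$, hence $|\mathrm{err}(\hat\ell,\hat\ell')|\ge\epsilon/2$; when $\mathcal{L}$ is only well-\emph{approximated} by bounded-norm RKHS functions, the approximation error $\gamma$ is carried through the same inequalities and it suffices to take $\gamma$ below a constant fraction of $\epsilon$. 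I expect the main obstacle to be layers (ii)--(iii): controlling the dependence on $|\mathcal{A}|$ and $\beta$ as one passes through the softmax requires the right vector-valued contraction lemma together with a clean product rule for the Rademacher complexity of two uniformly bounded classes, and one must verify that the RKHS-norm bound $R$ holds uniformly over the \emph{entire} class $\mathcal{L}$ (and its approximants)---this uniform boundedness is precisely what the ``well-approximable by bounded-norm RKHS functions'' hypothesis buys, and is the only place the structure of $\mathcal{L}$ enters.
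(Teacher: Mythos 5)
Your proposal is correct, and it reaches the same two-stage structure as the paper (reformulate the violation as a gap functional, prove dimension-free uniform convergence, then argue the ERM maximizer certifies an $\epsilon/2$ violation by a triangle inequality). The difference is in how the uniform convergence is proved. The paper bounds the \emph{covering number} of the class of gap functions: it shows the gap is Lipschitz in the pair $(\ell,\ell')$ with respect to the data-dependent pseudo-metric $d(r_{\ell_1}(a),r_{\ell_2}(a))=\sqrt{\hat{\E}\langle r_{\ell_1}(a)-r_{\ell_2}(a),p(x)\rangle^2}$, invokes the fact that a Hilbert ball has $\exp(O(1/\epsilon^2))$ covering number under such a projected metric, and then applies Dudley's chaining; this is where its dimension-freeness comes from, at the cost of a $\log(R_1R_2 n)$ factor and an $|\As|^{3/2}\beta^2R_1^3R_2^3$ constant. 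You instead bound the Rademacher complexity directly by peeling the composition: the kernel bound $O(R\kappa/\sqrt n)$ for bounded-norm linear functionals, Maurer's vector-contraction through the $O(\beta)$-Lipschitz softmax, and a product rule (via the polarization identity plus scalar contraction) for the two uniformly bounded factors. Both routes are sound and rest on the same underlying fact---only the RKHS norm bounds, never the ambient dimension, enter the complexity---but your route avoids chaining (and hence the log factor) at the price of a potentially different polynomial dependence on $|\As|$, and it requires the vector-valued contraction lemma that the paper's softmax-Lipschitz lemma only implicitly plays the role of. One small point in your favor: your accounting with an $\epsilon/4$ uniform-deviation budget cleanly yields the $\epsilon/2$ population witness ($3\epsilon/4$ empirically, then back down to $\epsilon/2$), whereas the paper's stated $\epsilon/2$ budget only formally delivers a nonnegative population violation for the ERM pair; your version of the final step is the careful one.
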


 The key technical ingredient behind our dimension-free auditing algorithm is a uniform convergence result, showing that the decision calibration error can be uniformly approximated over all pairs \((\ell, \tilde{k}_{f_p, \ell'})\) with a dimension-free sample complexity. This boils down to bounding the covering number over the class of loss functions $\cal L$ or equivalently their $m$-dimensional parameters $r_\ell(a)$ for any $a\in\As$. Although we assume \(r_{\ell}(a)\) are bounded, the covering number of an \(m\)-dimensional ball under the standard Euclidean metric typically grows exponentially with \(m\). To overcome this difficulty, we introduce a suitable pseudo-metric that first projects differences \(r_{\ell_1}(a)-r_{\ell_2}(a)\) along a random direction (given by prediction of a random example \(p(X)\)), and then measures distances in this projected one-dimensional space, defined as:
$d(r_{\ell_1}(a),r_{\ell_2}(a)) = \sqrt{\mathbb{E}\left[\langle r_{\ell_1}(a)-r_{\ell_2}(a), p(X)\rangle^2\right]}$.
We show that the covering number under this pseudo-metric is bounded independently of the dimension \(m\), and it is sufficient for our uniform convergence result.

The auditing procedure effectively identifies patches to improve a predictor. By combining this auditing step with a patching procedure, we obtain an algorithm that can post-process any initial predictor into a decision-calibrated predictor, with a sample complexity independent of the dimension.

\begin{theorem}[Informal Statement of \cref{thm: recali-rkhs}]\label{thm: informal-alg}
    Under the smooth optimal decision rule, there exists an algorithm that, given \(\mathrm{poly(|\As|,1/\epsilon)}\) samples and any initial predictor \(p\), returns a new predictor \(p'\) that is $\epsilon$-decision-calibrated while achieving a square loss of no worse than that of \(p\).
\end{theorem}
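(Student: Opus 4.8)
The plan is to turn the dimension-free auditing algorithm of \cref{thm: audit_ERM} into a recalibration procedure by an iterative ``audit-and-patch'' loop, using the square loss as a potential function, in the spirit of the recalibration algorithms for multicalibration and the decision-calibration procedure of \citet{zhao2021calibrating}. Throughout I interpret a predictor's value $p(x)$ as the mean feature embedding $\E_{\hat y\sim p(x)}[\phi(\hat y)]$, which lies in the convex set $Z:=\mathrm{conv}\{\phi(y):y\in\Ys\}$, and I measure accuracy by $\Phi(p):=\E_{(x,y)\sim\Ds}\big[\|\phi(Y)-p(X)\|^2\big]$. After normalizing the kernel so that $\|\phi(y)\|\le 1$, $\Phi$ takes values in $[0,4]$; I may also assume without loss of generality that the input predictor $p$ has been projected onto $Z$ pointwise, which only decreases $\Phi(p)$.

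Starting from $p_0=p$, at round $t$ I run the auditing algorithm on $f_{p_t}$ with $\mathrm{poly}(|\As|,1/\epsilon)$ samples. If it certifies $\epsilon$-decision-calibration I output $p_t$; otherwise it returns $\ell_t\in\Ls$ and a smooth decision rule $k_t=\tilde k_{f_{p_t},\ell'_t}$ with $\big|\E\E_{a\sim k_t(x)}[\ell_t(a,y)]-\E\E_{a\sim k_t(x)}[f_{p_t}(x,a,\ell_t)]\big|\ge\epsilon/2$. Writing $\ell_t(a,y)=\langle r_{\ell_t}(a),\phi(y)\rangle$ and $f_{p_t}(x,a,\ell_t)=\langle r_{\ell_t}(a),p_t(x)\rangle$ and pulling $\E_{a\sim k_t(x)}$ inside the inner product, this violation is exactly
\[
\Big|\,\E_{(x,y)\sim\Ds}\big[\langle w_t(X),\,\phi(Y)-p_t(X)\rangle\big]\,\Big|\;\ge\;\epsilon/2,\qquad w_t(x):=\E_{a\sim k_t(x)}[r_{\ell_t}(a)],
\]
where $w_t$ is a fixed map of $x$ (since $f_{p_t}$ is frozen during round $t$) into the same RKHS with $\|w_t(x)\|\le\max_a\|r_{\ell_t}(a)\|\le B$, $B$ being the norm bound assumed on $\Ls$. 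I then estimate the sign $s_t\in\{\pm1\}$ of the correlation from $O(B^2\log(1/\delta)/\epsilon^2)$ fresh samples (easy since the true value is bounded away from $0$ by $\epsilon/2$) and update $p_{t+1}(x):=\Pi_Z\big(p_t(x)+s_t\eta\, w_t(x)\big)$ with $\eta:=\epsilon/(2B^2)$.

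The accounting is the standard Pythagorean/potential argument. Before projection,
\begin{align*}
\E\big[\|\phi(Y)-p_t(X)-s_t\eta\, w_t(X)\|^2\big]
&=\Phi(p_t)-2s_t\eta\,\E\big[\langle w_t(X),\,\phi(Y)-p_t(X)\rangle\big]+\eta^2\,\E\big[\|w_t(X)\|^2\big]\\
&\le\Phi(p_t)-\eta\epsilon+\eta^2B^2\;=\;\Phi(p_t)-\frac{\epsilon^2}{4B^2},
\end{align*}
using $s_t\,\E[\langle w_t(X),\phi(Y)-p_t(X)\rangle]=\big|\E[\langle w_t(X),\phi(Y)-p_t(X)\rangle]\big|\ge\epsilon/2$ and $\|w_t(X)\|\le B$. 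Since $Z$ is convex and contains $\phi(Y)$ almost surely, $\|\phi(Y)-\Pi_Z(q)\|\le\|\phi(Y)-q\|$, so $\Phi(p_{t+1})\le\Phi(p_t)-\epsilon^2/(4B^2)$. As $\Phi\ge 0$ and $\Phi(p_0)\le 4$, the loop halts after $T=O(B^2/\epsilon^2)=\mathrm{poly}(1/\epsilon)$ rounds, with $\Phi(p_T)\le\Phi(p_0)\le\Phi(p)$, and the output is $\epsilon$-decision-calibrated by soundness of the auditor. Projecting onto $Z$ keeps $\|p_t(X)\|\le 1$ pointwise at every round, so the auditing guarantee --- whose dimension-free covering bound enters through the pseudo-metric $\sqrt{\E[\langle r_{\ell_1}(a)-r_{\ell_2}(a),p(X)\rangle^2]}$ and therefore depends on $\E\|p(X)\|^2$ --- applies uniformly across rounds; a union bound over the $T$ rounds keeps the total sample complexity $\mathrm{poly}(|\As|,1/\epsilon)$ and independent of $m$.

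The main obstacle I expect is the infinite-dimensional/approximation bookkeeping rather than the loop itself. One must verify that each patch $w_t$ genuinely lies in the RKHS with a uniformly bounded norm and that the composite map $x\mapsto w_t(x)$ --- prediction, then quantal response, then a bounded linear combination of the $r_{\ell_t}(a)$'s --- is measurable and, after $t$ rounds, admits a finite (sum-of-$(t{+}1)$-terms) representation, so that $p_{t+1}$ can be evaluated and fed back to the auditor both information-theoretically and computationally. One must also absorb into $\epsilon$ the slack incurred when $\Ls$ is only \emph{well-approximated} by bounded-norm RKHS functions (as for piecewise-linear or $d$-dimensional Cobb--Douglas losses): each audited violation is then only certified for the approximating loss, so the step size, the number of rounds, and the final calibration parameter must be chosen to keep the accumulated approximation error below the target $\epsilon$. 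A minor further point is checking that the quantal-response map and the projection $\Pi_Z$ are efficiently evaluable in the representation at hand.
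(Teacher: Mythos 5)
Your audit-and-patch loop, the choice of potential $\Phi(p_t)=\E\|\phi(Y)-p_t(X)\|_\Hs^2$, the step size $\eta=\Theta(\epsilon/B^2)$, and the resulting $O(B^2/\epsilon^2)$ round bound are exactly the paper's argument (Algorithm~\ref{alg: rkhs} and \cref{thm: recali-rkhs}); your patch direction $w_t(x)=\sum_a r_{\ell_t}(a)\tilde k_{\ell_t'}(x,a)$ is precisely the weight function $w_{\ell_t,\ell_t'}$ the paper derives from the weighted-calibration view. The substantive difference is that the two issues you defer to the end as ``expected obstacles'' are not minor bookkeeping --- they are the part of the proof the paper actually has to supply. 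First, representability: the auditor only hands you some $\ell_t$ in the RKHS ball, and $r_{\ell_t}(a)$ need not lie in the span of finitely many feature vectors, so $p_{t+1}$ as you define it cannot be evaluated or fed back to the auditor. The paper's fix is to replace $\ell_t$ by $\ell_t^*$ with $r_{\ell_t^*}(a)\propto \hat\E[(\phi(y)-p_t(x))\tilde k_{\ell_t'}(x,a)]$; by Cauchy--Schwarz this substitution can only increase the audited violation (so the $\epsilon/2$ lower bound survives, up to a fresh uniform-convergence step), and it makes the patch an explicit finite linear combination of kernel evaluations, which is what drives the implicit-patching \cref{lem: patching}. Note that if your auditor is literally the ERM of \cref{thm: audit_ERM}, the objective is linear in $r_\ell(a)$ for fixed $\ell'$, so its maximizer over the ball \emph{is} $\ell_t^*$ --- but you would need to say this, since your write-up treats $\ell_t$ as a black-box output.

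Second, your projection $\Pi_Z$ onto $\mathrm{conv}\{\phi(y):y\in\Ys\}$ is information-theoretically fine (the Pythagorean inequality you invoke holds), but it is not computable by kernel evaluations in an infinite-dimensional $\Hs$, so it breaks the ``there exists an algorithm'' claim. The paper instead projects onto the Hilbert ball $B(R_2)$, which contains all $\phi(y)$, is convex (so the same non-expansiveness argument applies), and reduces to a rescaling by $R_2/\|p_{t+1}(x)\|_\Hs$ with the norm computed from the Gram matrix. With these two substitutions your argument becomes the paper's proof; as written, the loop and potential analysis are correct but the algorithm is not yet realizable.
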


Our algorithms also achieve dimension-free computational complexity: all of the post-processing steps in our calibration algorithm for the loss estimator can be implemented entirely through kernel evaluations, without directly operating in the $m$-dimensional feature expansion space. Moreover, our approach is \emph{oracle-efficient}: given access to an oracle for solving the auditing subroutine, our method runs in polynomial time. This auditing step itself can be efficiently reduced to solving an empirical risk minimization problem.

Algorithmically, the patching subroutine leverages the insight that decision calibration is a special instance of weighted calibration, a concept introduced by \citet{gopalan2022low}. We are the first to formally study the relationship between weighted calibration and decision calibration under the smoothed decision rule. Our result is broadly applicable, as it holds for any function class that can be well-approximated by functions in an RKHS with a bounded norm. As concrete examples, we demonstrate that our algorithm applies to infinite multiclass classification and \(d\)-dimensional Cobb-Douglas loss functions.

It is also worth noting that in \citet{zhao2021calibrating}, they proposed a decision calibration algorithm of a different patching subroutine under the smooth optimal algorithm with access to the full data distribution.
Translating their algorithm to a finite-sample setting is nontrivial, as it involves estimating the (pseudo) inverse of a matrix. Estimating the inverse of a semi-positive matrix has a sample complexity that depends on the smallest eigenvalue of the matrix, which can be unbounded. To address this, we introduce a regularization term in matrix estimation, manage to patch in the kernel space, yielding a decision calibration algorithm with finite-sample guarantees. We argue that our proposed algorithm is much more efficient than this finite-sample adaptation of algorithm by \citep{zhao2021calibrating} because the dependence of sample complexity on \(\epsilon\) is \(1/\epsilon^4\) which is superior to \(1/\epsilon^6\) of their algorithm.

\section{Related Work}
\paragraph{Calibration and Decision Making}
The work most closely related to ours is \cite{zhao2021calibrating}, which introduced the concept of \emph{decision calibration} in the \emph{batch setting}, where data points are drawn from an underlying distribution. \cite{zhao2021calibrating} primarily examined decision calibration in the context of multi-class classification, where the outcome space $\cal Y$ is finite and the loss functions are linear. Our paper also considers the batch setting, but we significantly extend their framework to a broader and more general scenario, allowing the outcome space $\Ys$ to be any compact convex set and accommodating non-linear loss functions. There is a longstanding line of work on calibration and decision-making in the \emph{adversarial setting}, where data are presented adversarially in a sequential manner. The seminal work of \cite{foster1999regret} showed that a decision maker who best responds to calibrated forecasts obtains diminishing internal regret. Similarly to decision calibration, there is a line of work in the adversarial setting that tries to achieve some weaker variants of calibration while keeping agents incentivized to treat the predictions as correct (\cite{kleinberg2023u,fishelson2025full,luo2025simultaneous}). \cite{kleinberg2023u} proposed a notion called \emph{U-calibration}, which is sufficient for agents to achieve sublinear \emph{external} regret, bypassing the lower bound of achieving calibration (\cite{qiao2021stronger}). A subsequent work by \cite{luo2024optimal} gave the optimal bound of multiclass U-calibration. \cite{noarov2023high} studied how to make sequential predictions for decision-making in the high-dimensional setting, but also relied on the loss functions to be linear. Following the same algorithmic approach as \cite{noarov2023high}, \cite{roth2024forecasting} showed how to produce predictions for agents to best respond and achieve low \emph{swap} regret. But their regret bound has dependence on the size of the action $|\As|$. \cite{hu2024calibration} showed that in the binary setting, the dependence on $|\As|$ can be removed while keeping the $\Tilde{O}(\sqrt{T})$ regret. There is also work on calibration and decision making in games, such as \cite{camara2020mechanisms,haghtalab2023calibrated,collina2024efficient}. However, most of these works focus either on linear loss functions or on one-dimensional outcome spaces, whereas our work addresses the more general and challenging setting of nonlinear loss functions over $d$-dimensional outcomes.

\paragraph{Omniprediction} In addition to decision calibration, there is another line of work studying prediction and downstream decision making called \emph{omniprediction}, which was introduced by \cite{gopalan2021omnipredictors}. A subsequent \cite{gopalan2022loss} built the connection between omniprediction and outcome indistinguishability (OI), which was introduced by \cite{dwork2021outcome} in the binary setting and was extended to the continuous one-dimensional setting by \cite{dwork2022beyond}. In detail, they showed that omniprediction can be achieved by \emph{hypothesis} OI and \emph{decision OI}. Decision OI is a weaker notion than decision calibration. While decision OI requires that predictions be indistinguishable from the true outcomes with respect to the loss $\ell$ incurred under the optimal decision rule defined by $\ell$ itself, decision calibration demands this indistinguishability hold for the loss $\ell$ incurred under the optimal decision rules defined by any loss function $\ell'$.

\cite{garg2024oracle} first studied omniprediction in the adversarial setting. Recently, several papers on omniprediction have leveraged decision OI to achieve omniprediction efficiently in both batch and adversarial settings. \cite{okoroafor2025near} studied near-optimal omniprediction in the adversarial binary setting. \cite{gopalan2024omnipredictors} studied how to efficiently achieve omniprediction for nonlinear losses in the one-dimensional batch setting. They proposed the notion called \emph{sufficient statistics}, which can be viewed as finite-dimensional feature mapping and inspired our study on more general feature mapping. \cite{dwork2024fairness} studied omniprediction in evolving graphs. A very recent work \cite{lu2025sample} extended \cite{gopalan2024omnipredictors} to the high-dimensional adversarial setting by using a different generalization of the decision OI, first given by \cite{noarov2023high} and used by \cite{roth2024forecasting}. It is worth noting that their work is not directly comparable to ours, even though they also consider $d$-dimensional nonlinear losses, for the following reasons. First, similar to \cite{gopalan2024omnipredictors}, their framework to handle nonlinear loss functions assumes a finite-dimensional feature mapping, whereas we also address the more general case of infinite-dimensional feature mappings. Second, their focus lies in the adversarial setting, where they employ an online-to-batch conversion to construct a randomized predictor from scratch that satisfies batch omniprediction. In contrast, our goal is to take an arbitrary predictor as input and output a deterministic predictor that satisfies decision calibration—a related but fundamentally different notion from omniprediction.

\section{Preliminaries}\label{sec: pre}
\paragraph{Notations} We consider the prediction problem for decision making with a feature space $\Xs$ and a compact convex outcome space $\Ys\subseteq\R^d$. Let $\Ds$ denote the distribution over $\Xs\times\Ys$. Given any dataset $D=\{(x_i,y_i)\}_{i=1}^n$ that is drawn i.i.d from $\Ds$ and any function $\psi:\Xs\times\Ys\rightarrow\R$, define the empirical expectation as
\[
\hat{\E}_D[\psi(x,y)]=\frac{1}{n}\sum_{i=1}^n\psi(x_i,y_i).
\]
For any integer $n$, we use $[n]$ to denote the class $\{1,\cdots,n\}$.

\subsection{Loss Functions and Uniform Approximations}
We model downstream decision makers as having a finite action space \(\As\) and a loss function \(\ell : \As \times \Ys \rightarrow [0,1]\), which maps an action-outcome pair to a bounded loss. Let \(\Ls\) denote a family of such loss functions. To handle \textit{nonlinear} losses, we adopt a standard approach of approximating them via a feature mapping \(\phi : \Ys \rightarrow \Hs\), where \(\Hs\) is a (possibly infinite-dimensional) feature space. The idea is to approximate each \(\ell \in \Ls\) by a linear function of \(\phi(y)\). Once this approximation is established, we show in the following sections that decision calibration becomes achievable for such loss classes.

When the feature space is finite-dimensional, we write \(\Hs = \R^m\) with \(\dim(\Hs) = m < \infty\). We also consider the case where \(\Hs\) is a separable reproducing kernel Hilbert space (RKHS), which has a countable orthonormal basis, and \(\dim(\Hs)\)  can be \(\infty\).

We formally define this approximation framework as follows:

\begin{definition}\label{def: basis}
Let \(\phi : \Ys \rightarrow \Hs\) be a feature map and \(\Ls\) a family of loss functions. We say that \(\phi\) provides a \((\dim(\Hs), \lambda, \epsilon)\)-uniform approximation to \(\Ls\) if for every \(\ell \in \Ls\), there exists a function \(r_\ell : \As \rightarrow \Hs\) such that
\[
\left| \langle r_\ell(a), \phi(y) \rangle_{\Hs} - \ell(a, y) \right| \le \epsilon
\]
and
\[
\| r_\ell(a) \|_{\Hs} \le \lambda
\]
for all \(a \in \As\) and \(y \in \Ys\).
\end{definition}

Uniform approximation via finite-dimensional feature mappings has been studied in prior work, such as \citet{gopalan2024omnipredictors} and \citet{lu2025sample}, in the contexts of omniprediction and online decision swap regret. Our formulation generalizes this idea to infinite-dimensional feature spaces.

Intuitively, \cref{def: basis} requires the function $\ell(a,\cdot):\Ys\rightarrow\R$ to be uniformly approximated by functions $g_a:\Ys\rightarrow\R$ that are linear in some feature space for any $a$. We present two families of functions from the economics literature as examples that are linear in an infinite-dimensional feature space.

\begin{example}[Continuous Piecewise Linear Functions]
Consider the case $\Ys=[0,1]$. Define a family of functions to be $\Gs=\{g_{k_1,k_2,c}: \forall c\in[0,1], |k_1|\le R,|k_2|\le R\}$ where
    \begin{equation*}
        g_{k_1,k_2,c}(y)=
        \begin{cases}
            k_1y & 0\le y<c\\
            k_2y+(k_1-k_2)c & c\le y\le 1.
        \end{cases}
    \end{equation*}
     This defines a class of piecewise linear functions with an unknown turning point $c$. Piecewise linear functions of this form have been extensively studied in the economics literature. The function $g_{k_1,k_2,c}$ can be interpreted as a utility function (or the negative of a loss function), where
\(y\) denotes the consumption level of a particular good. It captures a common economic scenario in which marginal utility decreases once consumption exceeds a threshold \(c\).
    
    Next we show that functions in $\Gs$ are linear in a infinite-dimensional feature space. Let $\Hs$ be the RKHS with kernel 
    \[
    K(y_1,y_2)=\min\{y_1,y_2\}.
    \]
    Let $\phi(y):=K(y,\cdot)$ be the feature mapping associated with $K$. We have
    \[
    g_{k_1,k_2,c}=\InAngles{k_2\phi(1)+(k_1-k_2)\phi(c),\phi(y)}_\Hs.
    \]
    In addition, we have $\InNorms{k_2\phi(1)+(k_1-k_2)\phi(c)}_\Hs\le R.$
\end{example}

\begin{example}[Cobb-Douglas Functions]
Consider the case $\Ys$. Define a family of functions to be $\Gs=\{g_{\alpha}: \forall \alpha\in[0,1]^d ~s.t.~\sum_{i\in[d]}a_i=1 \}$ where
    \begin{equation*}
        g_{\alpha}(y)=e^{\sum_{i\in[d]}\alpha_iy_i}.
    \end{equation*}
     This defines the class of Cobb-Douglas functions in exponential form. Cobb-Douglas functions are widely used in economics. One can interpret $g_\alpha$ as a utility function (or the negative of a loss function), where $y_i$ represents the consumption level of the $i$-th good and $a_i$ is the normalized preference (see \cite{varian1992microeconomic}) for the $i$-th good for any $i\in[d]$.
    
    Next, we show that functions in $\Gs$ are linear in an infinite-dimensional feature space. Let $\Hs$ be the RKHS with kernel 
    \[
    K(y_1,y_2)=\exp(\InAngles{y_1,y_2}).
    \]
    Let $\phi(y):=K(y,\cdot)$ be the feature mapping associated with $K$. We have
    \[
    g_{\alpha}=\InAngles{\phi(a),\phi(y)}_\Hs.
    \]
    In addition, we have $\InNorms{\phi(a)}_\Hs\le \sqrt{e}$.
\end{example}

\subsection{Predictors and Loss Estimators} 
We now define the notion of a predictor given a feature mapping \(\phi: \Ys \rightarrow \Hs\). A \textit{predictor} is a function \(p: \Xs \rightarrow \Hs\), interpreted as estimating the conditional expectation \(\E[\phi(y) \mid x]\). Since the feature space \(\Hs\) can be high-dimensional or even infinite-dimensional, the predictor \(p(x)\) can be complex and may lack an intuitive interpretation for downstream decision makers.

To address this, we do not expose the predictor directly. Instead, we use it to construct a \textit{loss estimator} \(f_p\), which takes as input a context \(x\), an action \(a\), and a loss function \(\ell\), and outputs an estimate of the expected loss \(\ell(a, y)\) given \(x\). We formalize this notion below:

\begin{definition}[Loss Estimator]
A loss estimator is a function \(f: \Xs \times \As \times \Ls \rightarrow \R\). For any context \(x \in \Xs\), action \(a \in \As\), and loss function \(\ell \in \Ls\), the output \(f(x, a, \ell)\) estimates the expected loss \(\E[\ell(a, y) \mid x]\).
\end{definition}

Although the definition of \(f\) does not require an explicit association with a predictor, in our approach the learned loss estimator is \textit{implicitly} derived from an underlying predictor \(p\). Specifically, when such a predictor is maintained, the loss estimator takes the form
\[
f_p(x, a, \ell) = \langle r_\ell(a), p(x) \rangle,
\]
where \(r_\ell(a)\) is the coefficient vector associated with the loss function \(\ell\), as defined previously.

\subsection{Decision Rules and Decision Calibration}

In an ideal setting, if a decision maker with loss function \(\ell\) has access to the full distribution \(\mathcal{D}\), they can compute and play the optimal action:
\[
a^* = \arg\min_{a \in \As} \E_{\mathcal{D}}[\ell(a, y)].
\]
However, in practice, decision makers do not have access to the full distribution. Instead, they rely on the loss estimator \(f\) to make decisions. Given a context \(x\), the decision maker queries the estimated loss \(f(x, a, \ell)\) for each action \(a \in \As\) and selects an action accordingly.

We formalize the decision maker’s behavior via a \textit{decision rule}, which is a function \(k: \Xs \times \As \rightarrow [0,1]\), representing the probability of selecting action \(a\) given context \(x\). A common strategy is to select the action that minimizes the estimated expected loss:

\begin{definition}[Optimal Decision Rule]
For a given loss function \(\ell\) and loss estimator \(f\), the optimal decision rule is defined as:
\[
k_{f, \ell}(x, a) =
\begin{cases}
1 & \text{if } a = \arg\min_{a' \in \As} f(x, a', \ell), \\
0 & \text{otherwise}.
\end{cases}
\]
\end{definition}

We also consider a \textit{smoothed} version of the optimal decision rule, commonly referred to as the \emph{quantal response} model in economics and decision theory. The quantal response model has been extensively studied in the literature (see e.g., \citep{mcfadden1976quantal, mckelvey1995quantal}).

\begin{definition}[Smooth Optimal Decision Rule]\label{def: sBR}
For a loss function \(\ell\), loss estimator \(f\), and temperature parameter \(\beta > 0\), the smoothed optimal decision rule is defined as:
\[
\Tilde{k}_{f,\ell}(x, a) = \frac{e^{-\beta f(x, a, \ell)}}{\sum_{a'} e^{-\beta f(x, a', \ell)}}.
\]
\end{definition}

For convenience, we sometimes use \(k(x)\) to denote the probability distribution over actions induced by a decision rule $k$. We now restate the definition of \textit{decision calibration}, originally introduced by \citet{zhao2021calibrating}, with our notion of the loss estimator:

\begin{definition}[Decision Calibration]\label{def: DC}
Let \(\Ls\) be a class of loss functions and \(\Ks\) be a class of decision rules. A loss estimator \(f\) is said to be \((\Ls, \Ks)\)-decision calibrated if for every \(\ell \in \Ls\) and every decision rule \(k \in \Ks\),
\begin{equation}\label{eq:decal}
\E_{(x,y)\sim \mathcal{D}} \E_{a \sim k(x)} [\ell(a, y)] = \E_{(x,y)\sim     
\mathcal{D}} \E_{a \sim k(x)} [f(x, a, \ell)].
\end{equation}
We define the decision calibration error as:
\[
\dec_{\Ls, \Ks}(f) := \sup_{\ell \in \Ls, \, k \in \Ks} \left| \E_{(x,y)\sim \mathcal{D}} \E_{a \sim k(x)} [\ell(a, y)] - \E_{(x,y)\sim \mathcal{D}} \E_{a \sim k(x)} [f(x, a, \ell)] \right|.
\]
We say that a loss estimator is \((\Ls, \Ks, \epsilon)\)-decision calibrated if:
\[
\dec_{\Ls, \Ks}(f) \le \epsilon.
\]
\end{definition}

To interpret \eqref{eq:decal}, the left-hand side represents the \textit{true expected loss} incurred when the agent follows the decision rule \(k\), while the right-hand side represents the \textit{estimated expected loss} based solely on the loss estimator \(f\). The agent can compute this estimate without access to the true outcome \(y\). Intuitively, decision calibration requires that the estimates provided by \(f\) are accurate across all relevant loss functions and decision rules.

We use \(\Ks_\Ls:=\{k_\ell|\ell\in\Ls\}\) to denote the class of decision rules induced by any loss function \(\ell\in\Ls\) under the best response decision rule. Similarly, we use \(\Tilde{\Ks}_{\Ls_\Hs}:=\{\Tilde{k}_\ell|\ell\in\Ls\}\) to denote the class of decision rules induced by any loss function \(\ell\in\Ls\) under the smooth best response decision rule.

We now discuss how the uniform approximation can help to achieve decision calibration. Let $\Ls_\phi$ denote the class of loss functions for which the feature mapping $\phi:\Ys\rightarrow\Hs$ gives $(\dim(\Hs),\lambda,\frac{\epsilon}{2})$-uniform approximations and let $\hat{\Ls}_\phi=\{\hat{\ell}:\hat{\ell}(a,y)=r_\ell(a)\cdot \phi(y)\}$ denote the associated class of linear functions. Then given any predictor $p:\Xs\rightarrow\Hs$, we can define the loss estimator $f_p$ as
\[
f_p(x,a,l)=\InAngles{r_\ell(a),p(x)}_\Hs
\]
for any context $x\in\Xs$, action $a\in\As$ and loss function $\ell\in\Ls_\phi$.

The following lemma shows that if the loss estimator $f_p$ is $\epsilon/2$-decision calibrated for class $\hat{\Ls}_\phi$, it is $\epsilon$-decision calibrated for class $\Ls_\phi$.

\begin{lemma}\label{lem: uniformapprox}
    Let $\Ls_\phi$ denote the class of loss functions for which the feature mapping $\phi:\Ys\rightarrow\Hs$ gives $(\dim(\Hs),\lambda,\frac{\epsilon}{2})$-uniform approximations and let $\hat{\Ls}_\phi=\{\hat{\ell}:\hat{\ell}(a,y)=r_\ell(a)\cdot \phi(y)\}$ denote the associated class of linear functions. For any predictor $p:\Xs\rightarrow\Hs$, any class of decision rule $\Ks$ and $\epsilon>0$, if the loss estimator $f_p$ is $(\hat{\Ls}_\phi,\Ks,\epsilon/2)$-decision calibrated, then $f_p$ is $(\Ls_\phi,\Ks,\epsilon)$-decision calibrated.
\end{lemma}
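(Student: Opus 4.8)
The plan is to directly control the decision calibration error of $f_p$ on the nonlinear class $\Ls_\phi$ by splitting it into a term controlled by the hypothesis, plus error terms that each consist of a difference between a true loss $\ell(a,y)$ and its linear surrogate $\hat\ell(a,y)=\langle r_\ell(a),\phi(y)\rangle$. Fix $\ell\in\Ls_\phi$ and $k\in\Ks$; by \cref{def: basis} there is $r_\ell$ with $|\langle r_\ell(a),\phi(y)\rangle-\ell(a,y)|\le\epsilon/2$ for all $a,y$, and by construction $f_p(x,a,\ell)=\langle r_\ell(a),p(x)\rangle=f_p(x,a,\hat\ell)$, where $\hat\ell\in\hat\Ls_\phi$. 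The key observation is that $f_p$ evaluated at $\ell$ and at its surrogate $\hat\ell$ coincide exactly, because both are defined through the same coefficient vector $r_\ell(a)$.

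The main steps, in order: First, write
\[
\E_{(x,y)}\E_{a\sim k(x)}[\ell(a,y)] - \E_{(x,y)}\E_{a\sim k(x)}[f_p(x,a,\ell)]
= \underbrace{\E\E[\ell(a,y)-\hat\ell(a,y)]}_{(\mathrm{I})} + \underbrace{\E\E[\hat\ell(a,y)-f_p(x,a,\hat\ell)]}_{(\mathrm{II})},
\]
using $f_p(x,a,\ell)=f_p(x,a,\hat\ell)$. Second, bound $|(\mathrm{I})|$: since $|\ell(a,y)-\hat\ell(a,y)|\le\epsilon/2$ pointwise, and $k(x)$ is a probability distribution over $\As$, taking expectations preserves the bound, so $|(\mathrm{I})|\le\epsilon/2$. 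Third, bound $|(\mathrm{II})|$: this is exactly a decision calibration term for the linear loss $\hat\ell\in\hat\Ls_\phi$ and decision rule $k\in\Ks$, so by the hypothesis that $f_p$ is $(\hat\Ls_\phi,\Ks,\epsilon/2)$-decision calibrated we get $|(\mathrm{II})|\le\epsilon/2$. Fourth, combine via the triangle inequality to conclude $|(\mathrm{I})+(\mathrm{II})|\le\epsilon$, and since $\ell,k$ were arbitrary, take the supremum to obtain $\dec_{\Ls_\phi,\Ks}(f_p)\le\epsilon$.

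There is no serious obstacle here; the only point requiring a moment's care is the bookkeeping identity $f_p(x,a,\ell)=f_p(x,a,\hat\ell)$, i.e. making sure the loss estimator is defined so that the coefficient vector $r_\ell(a)$ attached to a nonlinear $\ell$ is literally the same object used to define its linear surrogate $\hat\ell$. Given the construction of $f_p$ stated just before the lemma this is immediate, so the proof is a short triangle-inequality argument. One should also note that $\hat\Ls_\phi$ is indeed populated by the surrogates of elements of $\Ls_\phi$ (each $\ell$ contributes its own $\hat\ell$), so invoking decision calibration on $\hat\Ls_\phi$ is legitimate.
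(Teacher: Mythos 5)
Your proposal is correct and follows essentially the same argument as the paper: decompose the calibration gap into an approximation term $\ell-\hat\ell$ (bounded by $\epsilon/2$ pointwise via the uniform approximation) and a calibration term for the linear surrogate $\hat\ell$ (bounded by $\epsilon/2$ by hypothesis), then apply the triangle inequality. Your explicit remark that $f_p(x,a,\ell)=f_p(x,a,\hat\ell)$ because both are defined through the same coefficient vector $r_\ell(a)$ is a useful piece of bookkeeping that the paper leaves implicit.
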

\begin{proof}
We have for any $\ell\in\Ls_\phi$ and any $k\in\Ks$,

\begin{align*}
&\InAbs{\E_{(x,y)\sim\Ds}\E_{a\sim k(x)}[\ell(a,y)]-\E_{(x,y)\sim\Ds}\E_{a\sim \Tilde{k}_{\ell'}(x)}[f_p(x,a,\ell)]}\\
&=\InAbs{\E_{(x,y)\sim\Ds}\E_{a\sim k(x)}[\ell(a,y)-\hat{\ell}(a,y)]+\E_{(x,y)\sim\Ds}\E_{a\sim \Tilde{k}_{\ell'}(x)}[\hat{\ell}(a,y)-f_p(x,a,\ell)]}\\
&\le\InAbs{\E_{(x,y)\sim\Ds}\E_{a\sim k(x)}[\ell(a,y)-\hat{\ell}(a,y)]}+\InAbs{\E_{(x,y)\sim\Ds}\E_{a\sim k(x)}[\hat{\ell}(a,y)-f_p(x,a,\ell)]}\\
&\le\frac{\epsilon}{2}+\frac{\epsilon}{2}=\epsilon,
\end{align*}
where the last inequality holds because $s$ gives $(\dim(\Hs),\lambda,\epsilon/2)$-uniform approximations to $\Ls_\phi$ and $f_p$ is $(\hat{\Ls}_\phi,\Ks,\epsilon/2)$-decision calibrated.
\end{proof}
\subsection{No Regret Guarantees through Decision Calibration}
Now we show why decision calibration is useful, as it gives no regret guarantees for downstream decision makers. We consider the no-type-regret guarantee that is also discussed in \citet{zhao2021calibrating}. Informally, no type-regret guarantee ensures that a decision maker with loss function \(\ell \in \Ls\), who plays the best response policy under their own loss, will incur an expected loss no greater than what they would incur by playing the best response policy for any other loss function \(\ell{\prime} \in \Ls\).\footnote{From the perspective of mechanism design, no-type-regret implies that decision makers have no incentives to misreport their loss function to the loss estimator.} We derive the no-type-regret guarantee results for decision makers under both the optimal decision rule and the smooth optimal decision rule.

\begin{proposition}[No Type Regret under Optimal Decision Rule]\label{prop:regret}
    If the loss estimator $f_p$ is \((\Ls, \Ks_\Ls, \epsilon)\)-decision calibrated, then any decision maker under optimal decision rule has no regret reporting their true loss function, that is \[\forall \ell,\ell'\in\Ls,\E_{(x,y)\sim \mathcal{D}} \E_{a \sim k_\ell(x)} [\ell(a, y)]\leq\E_{(x,y)\sim \mathcal{D}}\E_{a \sim k_{\ell'}(x)} [\ell(a, y)]+2\epsilon.\]
\end{proposition}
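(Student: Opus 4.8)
The plan is to chain two applications of the (approximate) decision calibration hypothesis with the pointwise optimality of the deterministic best response. Fix arbitrary $\ell,\ell'\in\Ls$. The two quantities to compare are the true expected loss of the $\ell$-agent who best responds under $f_p$, namely $\E_{(x,y)\sim\Ds}\E_{a\sim k_\ell(x)}[\ell(a,y)]$, and the true expected loss of that same $\ell$-agent if it instead played the best response induced by $\ell'$, namely $\E_{(x,y)\sim\Ds}\E_{a\sim k_{\ell'}(x)}[\ell(a,y)]$.

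First I would pass from true losses to estimated losses. Since $k_\ell,k_{\ell'}\in\Ks_\Ls$ by definition, the $(\Ls,\Ks_\Ls,\epsilon)$-decision calibration of $f_p$ applied to the loss $\ell$ under the rule $k_\ell$ gives
\[
\InAbs{\E_{(x,y)\sim\Ds}\E_{a\sim k_\ell(x)}[\ell(a,y)] - \E_{(x,y)\sim\Ds}\E_{a\sim k_\ell(x)}[f_p(x,a,\ell)]} \le \epsilon,
\]
and the same statement with $k_\ell$ replaced by $k_{\ell'}$ also holds. It is essential here that decision calibration for the loss $\ell$ is required under \emph{every} rule in $\Ks_\Ls$, including the one induced by the different loss $\ell'$.

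Next I would invoke the defining property of the optimal decision rule: for each fixed $x$, $k_\ell(x)$ puts all its mass on $\arg\min_{a'\in\As} f_p(x,a',\ell)$, so
\[
\E_{a\sim k_\ell(x)}[f_p(x,a,\ell)] = \min_{a'\in\As} f_p(x,a',\ell) \le \E_{a\sim k_{\ell'}(x)}[f_p(x,a,\ell)]
\]
pointwise in $x$, and taking expectation over $(x,y)\sim\Ds$ preserves this inequality. Combining it with the two calibration bounds from the previous step yields
\begin{align*}
\E_{(x,y)\sim\Ds}\E_{a\sim k_\ell(x)}[\ell(a,y)]
&\le \E_{(x,y)\sim\Ds}\E_{a\sim k_\ell(x)}[f_p(x,a,\ell)] + \epsilon \\
&\le \E_{(x,y)\sim\Ds}\E_{a\sim k_{\ell'}(x)}[f_p(x,a,\ell)] + \epsilon \\
&\le \E_{(x,y)\sim\Ds}\E_{a\sim k_{\ell'}(x)}[\ell(a,y)] + 2\epsilon,
\end{align*}
which is exactly the claimed inequality.

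There is no real obstacle here; the only point demanding care is the bookkeeping over \emph{which} loss is the argument of $f_p$ (always the agent's own loss $\ell$) versus which decision rule it is evaluated under ($k_\ell$ versus $k_{\ell'}$), and the deterministic-best-response step is what converts the counterfactual comparison into a genuine inequality. In the exact case $\epsilon=0$ the argument gives literally zero regret, and the factor $2\epsilon$ is simply the accumulated slack from the two approximate equalities.
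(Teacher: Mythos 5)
Your proof is correct and follows exactly the same three-step chain as the paper's: apply decision calibration under $k_\ell$, use the pointwise optimality of the deterministic best response to switch to $k_{\ell'}$, then apply decision calibration again under $k_{\ell'}$. Your additional remarks about which loss is the argument of $f_p$ versus which rule it is evaluated under are accurate and simply make explicit what the paper leaves implicit.
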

\begin{proof}
By definition, when the loss estimator $f_p$ is \((\Ls, \Ks_\Ls, \epsilon)\)-decision calibrated, we have
\begin{align*}
    &\E_{(x,y)\sim \mathcal{D}} \E_{a \sim k_\ell(x)} [\ell(a, y)]\\
    \le&\E_{(x,y)\sim \mathcal{D}} \E_{a \sim k_\ell(x)} [f(x, a, \ell)]+\epsilon\\
    \le&\E_{(x,y)\sim \mathcal{D}} \E_{a \sim k_{\ell'}(x)} [f(x, a, \ell)]+\epsilon\\
    \le&\E_{(x,y)\sim \mathcal{D}} \E_{a \sim k_{\ell'}(x)} [\ell(a, y)]+2\epsilon,
\end{align*}
where the first and third inequalities follow from the definition of decision calibration, and the second inequality follows from the optimality of \(k_{\ell}\).
\end{proof}
\citet{zhao2021calibrating} proved a similar guarantee for multiclass setting and linear loss function class. We generalize the result to the general loss estimator setting. 

Now we move on to a similar guarantee for decision makers with the smooth optimal decision rule. For this result the error will have another \(\frac{\log(|A|)+1}{\beta}\) which is related to the hyperparameter \(\beta\). This is because the smooth best response rule \(\Tilde{k}_\ell\) might not strictly lead to a better expected loss than \(\Tilde{k}_\ell'\), therefore we will need to first relate the loss that the decision maker incurs by playing \(\Tilde{k}_\ell\) to the loss they incurs by playing \(k_\ell\), which adds another approximation error term. To prove the result, we will need to use a lemma proposed by \citet{roth2024forecasting}, where they studied swap regret (a different notion of regret) in the adversarial online setting. \citet{roth2024forecasting} states the lemma in the setting of a utility function \(u\), and we restate it in the form of the loss function \(\ell\).
\begin{lemma}[\citet{roth2024forecasting}]\label{lem:smooth-approx}
    For any loss estimator \(f\), context \(x\) and loss function \(\ell\), we have that 
    \[\E_{a \sim \Tilde{k}_\ell(x)} [f(x, a, \ell)]\leq\E_{a \sim k_\ell(x)} [f(x, a, \ell)]+\frac{\log(|A|)+1}{\beta}.\]
\end{lemma}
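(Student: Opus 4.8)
The plan is to reduce the inequality to an elementary pointwise fact about the Gibbs (softmax) distribution, applied with $x$ and $\ell$ held fixed. Abbreviate $c_a := f(x,a,\ell)$ for each $a \in \As$. Then the hard optimal rule $k_\ell(x)$ is the point mass on $\arg\min_{a} c_a$, so $\E_{a \sim k_\ell(x)}[f(x,a,\ell)] = \min_{a \in \As} c_a$; and the smooth rule assigns mass $q_a := \Tilde{k}_\ell(x,a) = e^{-\beta c_a}/Z$ with $Z := \sum_{a' \in \As} e^{-\beta c_{a'}}$, so $\E_{a \sim \Tilde{k}_\ell(x)}[f(x,a,\ell)] = \sum_{a} q_a c_a$. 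Hence it is enough to show, for every vector $(c_a)_{a \in \As} \in \R^{\As}$, that $\sum_a q_a c_a \le \min_a c_a + (\log|\As| + 1)/\beta$.

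The key step is to invert the definition of $q$: taking logarithms gives $c_a = -\tfrac1\beta(\log q_a + \log Z)$, and substituting this into $\sum_a q_a c_a$ yields the identity $\sum_a q_a c_a = \tfrac1\beta H(q) - \tfrac1\beta \log Z$, where $H(q) = -\sum_a q_a \log q_a$ is the Shannon entropy. Now I would bound the two terms separately: first, $H(q) \le \log |\As|$ since $q$ is a distribution supported on $|\As|$ atoms; second, $Z = \sum_a e^{-\beta c_a} \ge e^{-\beta \min_a c_a}$ by retaining only the largest summand, so that $-\tfrac1\beta \log Z \le \min_a c_a$. Combining these gives $\sum_a q_a c_a \le \min_a c_a + \tfrac{\log|\As|}{\beta}$, which already implies the stated bound with the extra $1/\beta$ as slack. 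An equivalent route is to invoke the Gibbs variational principle $-\tfrac1\beta \log Z = \min_{q'} \big(\sum_a q'_a c_a - \tfrac1\beta H(q')\big)$, attained at $q' = q$, and evaluate the right-hand side at the point mass on $\arg\min_a c_a$; this packages the same computation.

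I do not expect a genuine obstacle here: the argument is two lines of convexity/information theory and requires no boundedness assumption on $f$. The only points demanding care are the directions of the two inequalities (that dropping all but the minimizing term can only decrease $Z$, hence $-\tfrac1\beta\log Z \le \min_a c_a$, not the reverse; and $H(q)\le\log|\As|$), and, if one wishes to match the constant of \citet{roth2024forecasting} verbatim rather than sharpen it, simply absorbing $\log|\As|/\beta$ into $(\log|\As|+1)/\beta$. Since $x$ and $\ell$ were arbitrary, the pointwise bound is exactly the claim.
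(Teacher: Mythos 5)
Your proposal is correct. The paper itself gives no proof of this lemma---it is imported verbatim from \citet{roth2024forecasting}---so there is nothing to compare against line by line; your entropy/Gibbs-variational argument is a valid, self-contained derivation of the pointwise claim, and it in fact establishes the sharper constant $\log|\As|/\beta$, which trivially implies the stated bound with $(\log|\As|+1)/\beta$. The only cosmetic remark is that ties in $\arg\min_a f(x,a,\ell)$ are immaterial, since any selection gives $\E_{a\sim k_\ell(x)}[f(x,a,\ell)]=\min_a f(x,a,\ell)$.
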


\begin{proposition}[No Type Regret under Smooth Optimal Decision Rule]\label{prop:regret-smooth}
    If the loss estimator $f_p$ is \((\Ls, \Tilde{\Ks}_\Ls, \epsilon)\)-decision calibrated, then any decision maker under smooth optimal decision rule has no regret reporting their true loss function, that is \[\forall \ell,\ell'\in\Ls,\E_{(x,y)\sim \mathcal{D}} \E_{a \sim \Tilde{k}_\ell(x)} [\ell(a, y)]\leq\E_{(x,y)\sim \mathcal{D}}\E_{a \sim \Tilde{k}_{\ell'}(x)} [\ell(a, y)]+2\epsilon+\frac{\log(|A|)+1}{\beta}.\]
\end{proposition}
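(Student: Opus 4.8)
The plan is to reproduce the four-step telescoping argument from the proof of \cref{prop:regret}, inserting one extra link that converts a smooth best response into a deterministic best response via \cref{lem:smooth-approx}. Fix $\ell,\ell'\in\Ls$. The chain I would build starts from $\E_\Ds\E_{a\sim\Tilde{k}_\ell(x)}[\ell(a,y)]$ and proceeds as follows: (i) pass to $\E_\Ds\E_{a\sim\Tilde{k}_\ell(x)}[f_p(x,a,\ell)]$, losing $\epsilon$ by decision calibration applied to the pair $(\ell,\Tilde{k}_\ell)$; (ii) pass to $\E_\Ds\E_{a\sim k_\ell(x)}[f_p(x,a,\ell)]$, losing $(\log|\As|+1)/\beta$ by \cref{lem:smooth-approx} applied pointwise in $x$ and then integrated; (iii) pass to $\E_\Ds\E_{a\sim\Tilde{k}_{\ell'}(x)}[f_p(x,a,\ell)]$ at no cost, because $k_\ell(x)$ places all mass on $\arg\min_{a}f_p(x,a,\ell)$ and hence minimizes estimated loss against \emph{any} action distribution, in particular against $\Tilde{k}_{\ell'}(x)$; (iv) pass to $\E_\Ds\E_{a\sim\Tilde{k}_{\ell'}(x)}[\ell(a,y)]$, losing another $\epsilon$ by decision calibration applied to the pair $(\ell,\Tilde{k}_{\ell'})$. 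Summing the losses yields exactly $2\epsilon+(\log|\As|+1)/\beta$.

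Each link is routine. Steps (i) and (iv) are direct invocations of \cref{def: DC}, which is legitimate because $\Tilde{k}_\ell,\Tilde{k}_{\ell'}\in\Tilde{\Ks}_\Ls$ by the definition of $\Tilde{\Ks}_\Ls$, and because $f_p$ is assumed $(\Ls,\Tilde{\Ks}_\Ls,\epsilon)$-decision calibrated. Step (ii) is \cref{lem:smooth-approx} verbatim (it is stated there for a fixed context $x$), and since that inequality holds for every $x$ it survives taking $\E_{(x,y)\sim\Ds}$ on both sides. Step (iii) is the only place the optimality of the deterministic rule is used, and it is used in the \emph{estimated}-loss world rather than the true-loss world --- which is precisely why decision calibration has to be invoked on both ends of the chain, once to enter that world and once to leave it.

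I do not anticipate a serious obstacle; the only point requiring care is the bookkeeping of which quantity is being compared at each stage. One cannot compare $\Tilde{k}_\ell$ directly against $\Tilde{k}_{\ell'}$ in true-loss terms, since neither smooth rule is exactly optimal for $\ell$, so the detour through $k_\ell$ and through $f_p$ is essential. The residual $(\log|\As|+1)/\beta$ is exactly the quantal-response suboptimality of $\Tilde{k}_\ell$ relative to $k_\ell$ that \cref{lem:smooth-approx} isolates, and it vanishes as $\beta\to\infty$, recovering \cref{prop:regret} in the limit. Note also that, in contrast to the optimal-decision-rule case, no ``reverse'' smoothing step is needed on the $\ell'$ side: step (iii) dominates $\Tilde{k}_{\ell'}$ directly, so only the $\Tilde{k}_\ell$ side incurs the $\beta$-dependent penalty.
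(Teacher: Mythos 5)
Your proof is correct and follows essentially the same telescoping chain as the paper's: decision calibration on both ends, \cref{lem:smooth-approx} to pass from $\Tilde{k}_\ell$ to $k_\ell$, and the optimality of $k_\ell$ for the estimated loss in the middle. The only (harmless, arguably cleaner) difference is that you jump from $k_\ell$ directly to $\Tilde{k}_{\ell'}$ in one step using that $k_\ell$ minimizes $\E_a[f_p(x,a,\ell)]$ over all action distributions, whereas the paper inserts an intermediate comparison through $k_{\ell'}$.
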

\begin{proof}
\begin{align*}
    &\E_{(x,y)\sim \mathcal{D}} \E_{a \sim \Tilde{k}_\ell(x)} [\ell(a, y)]\\
    \le&\E_{(x,y)\sim \mathcal{D}} \E_{a \sim \Tilde{k}_\ell(x)} [f(x, a, \ell)]+\epsilon\\
    \leq&\E_{(x,y)\sim \mathcal{D}} \E_{a \sim k_\ell(x)} [f(x, a, \ell)]+\epsilon+\frac{\log(|A|)+1}{\beta}\\
    \leq&\E_{(x,y)\sim \mathcal{D}} \E_{a \sim k_{\ell'(x)}} [f(x, a, \ell)]+\epsilon+\frac{\log(|A|)+1}{\beta}\\
    \le&\E_{(x,y)\sim \mathcal{D}} \E_{a \sim \Tilde{k}_{\ell'}(x)} [f(x, a, \ell)]+\epsilon+\frac{\log(|A|)+1}{\beta}\\
    \le&\E_{(x,y)\sim \mathcal{D}} \E_{a \sim \Tilde{k}_{\ell'}(x)} [\ell(a, y)]+2\epsilon+\frac{\log(|A|)+1}{\beta},
\end{align*}
where the first and last inequalities follows from the definition of decision calibration, the second inequality follows from \cref{lem:smooth-approx}, the third inequality follows from the optimality of \(k_{\ell}\), and the fourth inequality follows from the expected loss of playing the optimal decision rule will lead to loss no greater than that when playing the smooth optimal decision rule.
\end{proof}

\section{Lower Bound under Deterministic Optimal Decision Rule}\label{sec: lower}
In this section, we study the question of whether it is possible to have a dimension-free algorithm for decision calibration under the optimal decision rule. We establish a statistical lower bound. Specifically, our lower bound is on the sample complexity of determining whether a predictor is approximately decision-calibrated. Note that we choose not to prove a lower bound for computing a decision-calibrated predictor directly because trivial solutions---such as a constant predictor always outputting the mean outcome $\mathbb{E}[Y]$---can satisfy both decision and full calibration.

To prove our lower bound, we consider a simple setting where the number of actions \(|\As|=2\), the feature mapping is $\phi(y) = y$, the class of loss functions is linear  $\Ls_\lin=\{\ell \mid \forall a,\exists r_\ell(a), \InNorms{r_\ell(a)}_2\leq1,\ell(a,y)=\langle r_\ell(a),y \rangle\}$ with their corresponding optimal decision rules $\Ks_{\Ls_\lin}$. Our result shows that distinguishing whether a predictor $p$ (and its induced loss estimator $f$) is $(\Ls_\lin,\Ks_{\Ls_\lin},0)$-decision calibrated versus not $(\Ls_\lin,\Ks_{\Ls_\lin},\epsilon)$-decision calibrated requires sample complexity that depends on the dimension of $\Ys$. Since the proof involves constructing multiple distributions, we will slightly abuse notation and add another argument for \(\Ds\) in the definition of decision calibration error, that is
\[
\dec_{\Ls, \Ks}(f,\Ds) := \sup_{\ell \in \Ls, \, k \in \Ks} \left| \E_{(x,y)\sim \mathcal{D}} \E_{a \sim k(x)} [\ell(a, y)] - \E_{(x,y)\sim \mathcal{D}} \E_{a \sim k(x)} [f(x, a, \ell)] \right|.
\]

For simplicity, we consider the special case where \(\Xs = \Ys\) and the predictor \(p\) is the identity function, i.e., \(p(x) = x\), and so input data take the form \((p(x_1), y_1), (p(x_2), y_2), \ldots, (p(x_n), y_n)\). 

\begin{theorem}\label{thm:lower-bound}
Let $\epsilon \in (0, 1/3)$, \(\Ys=\{y\in \R^d|\InNorms{y}_2\leq1\}\), and $f_p$ be the loss estimator induced by some predictor $p\colon \cal X \rightarrow \cal Y$. Let $A$ be an algorithm that takes samples $(p(x_1),y_1),(p(x_2),y_2), \ldots ,(p(x_n),y_n)$ drawn i.i.d. from a distribution $\Ds$. Suppose that $A$ is guaranteed to output "accept" with probability at least $2/3$ whenever $\dec_{\Ls_\lin, \Ks_\lin}(f,\Ds) = 0$ and guaranteed to output "reject" with probability at least $2/3$ whenever $\dec_{}\Ls_\lin, \Ks_\lin(f_p,\Ds)\geq\epsilon$. Then $n\geq\Omega(\sqrt{d})$.
\end{theorem}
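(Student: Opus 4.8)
The plan is a Le Cam indistinguishability argument in which the hard information lives in a $d$-dimensional hidden sign vector. Since the statement already fixes $|\As|=2$, $\phi=\mathrm{id}$, $\Ls_\lin$, and $p=\mathrm{id}$, I would first record a reformulation. Write $\delta(x):=\E[y\mid x]-p(x)=\E[y\mid x]-x$. For any $\ell,\ell'\in\Ls_\lin$ the best response $k_{\ell'}$ is deterministic, and since $f_p(x,a,\ell')=\langle r_{\ell'}(a),x\rangle$, the region $\{x:k_{\ell'}(x)=a_1\}$ is the homogeneous halfspace $\{x:\langle r_{\ell'}(a_1)-r_{\ell'}(a_0),x\rangle<0\}$; as $\ell'$ ranges over $\Ls_\lin$ this region ranges over \emph{all} homogeneous halfspaces, since only the sign pattern of the separating functional matters (the constraint $\|r_{\ell'}(a)\|\le 1$ is harmless after rescaling). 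Moreover
\[
\E_{(x,y)\sim\Ds}\E_{a\sim k_{\ell'}(x)}\big[\ell(a,y)-f_p(x,a,\ell)\big]=\E_x\big[\langle r_\ell(k_{\ell'}(x)),\,\delta(x)\rangle\big],
\]
so to certify $\dec_{\Ls_\lin,\Ks_{\Ls_\lin}}(f_p,\Ds)\ge\epsilon$ it suffices to exhibit one $\ell'$ (one halfspace) and one $\ell$ (two unit vectors $r_\ell(a_0),r_\ell(a_1)$) making the right-hand side $\ge\epsilon$.

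\textbf{The two distributions.} Let $X$ be uniform on the $d$ points $\{\tfrac13 e_1,\dots,\tfrac13 e_d\}$, set $w:=e_1$, and put $\gamma:=\epsilon$. Under $\Ds_1$, let $y=\tfrac13 e_i+\gamma\eta w$ with $\eta$ a fresh fair $\pm1$ given $x=\tfrac13 e_i$; then $\E[y\mid x]=x$, so $\delta\equiv 0$ and $\dec_{\Ls_\lin,\Ks_{\Ls_\lin}}(f_p,\Ds_1)=0$. For each $\sigma\in\{\pm1\}^d$, let $\Ds_2^{(\sigma)}$ set $y=\tfrac13 e_i+\gamma\sigma_i w$ deterministically when $x=\tfrac13 e_i$, so $\delta(\tfrac13 e_i)=\gamma\sigma_i w$; every outcome has norm at most $\tfrac13+\gamma<1$ since $\epsilon<1/3$, so $\Ys$ is respected. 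The shattering step: the points $\tfrac13 e_i$ are linearly independent, hence shattered by homogeneous halfspaces, so the functional $x\mapsto\langle-\sum_j\sigma_j e_j,\,x\rangle$ is negative exactly on $\{\tfrac13 e_i:\sigma_i=1\}$; after rescaling the separating vector to norm $\le 2$ this is realized by some $\ell'\in\Ls_\lin$ with $k_{\ell'}(\tfrac13 e_i)=a_1\iff\sigma_i=1$. Choosing the witness $\ell$ with $r_\ell(a_1)=w$, $r_\ell(a_0)=-w$ makes the displayed quantity $\tfrac1d\big(|\{i:\sigma_i=1\}|\,\gamma+|\{i:\sigma_i=-1\}|\,\gamma\big)=\gamma=\epsilon$, so $\dec_{\Ls_\lin,\Ks_{\Ls_\lin}}(f_p,\Ds_2^{(\sigma)})\ge\epsilon$ for \emph{every} $\sigma$.

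\textbf{Indistinguishability and conclusion.} Compare $P_0:=\Ds_1^{\otimes n}$ with $P_1:=\E_{\sigma}[(\Ds_2^{(\sigma)})^{\otimes n}]$ for $\sigma$ uniform on $\{\pm1\}^d$. Couple them by drawing the same $x_1,\dots,x_n$ and the same fair signs $\eta_1,\dots,\eta_n$, setting $y_i=x_i+\gamma\eta_i w$ for $P_0$, and defining the hidden sign of each appearing point to be the $\eta$ used at its first occurrence (and a fresh fair sign otherwise), so that $\sigma$ is uniform and $y_i=x_i+\gamma\sigma_{j(x_i)}w$ for $P_1$. Whenever no two samples collide on the same point, the two samples coincide exactly; by the birthday bound the collision event has probability at most $\binom n2/d$, so $\mathrm{TV}(P_0,P_1)\le\binom n2/d$. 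A valid tester $A$ accepts under $P_0$ with probability $\ge 2/3$ and rejects under each $\Ds_2^{(\sigma)}$ — hence under $P_1$ — with probability $\ge 2/3$; therefore $\tfrac13\le\mathrm{TV}(P_0,P_1)\le\binom n2/d$, which forces $n=\Omega(\sqrt d)$.

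\textbf{The main obstacle.} The delicate point is satisfying simultaneously (a) a \emph{constant} decision-calibration error and (b) near-perfect indistinguishability from a $\dec=0$ distribution using $o(\sqrt d)$ samples. A bias that is constant over the support, or that lives in a fixed known direction, is detected with $O(1/\epsilon^2)$ samples (by the empirical mean of $y-x$, or by auditing a single halfspace), so it cannot produce a dimension-dependent bound; conversely, spreading the per-point biases over $x$-dependent orthogonal directions makes the averaged bias over any halfspace only $O(\epsilon/\sqrt d)$. The construction threads this needle by keeping all per-point biases \emph{aligned} along one direction $w$, so they accumulate over a halfspace — with the VC/shattering freedom of homogeneous halfspaces on $d$ linearly independent points letting $\ell'$ isolate exactly the positively biased coordinates — while the \emph{sign pattern} $\sigma$ is hidden and coordinatewise a fair coin, which makes the one-sample marginal at each point identical under $\Ds_1$ and under the mixture. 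The resulting $\sqrt d$ threshold is exactly the birthday bound for ever seeing the same covariate twice, the only event that leaks information about $\sigma$. Getting both the conditional law of $\Ds_1$ (it must be supported on the two shifted values $x\pm\gamma w$ with equal weight, not just $x$) and the witnessing pair $(\ell,\ell')$ right is where the care goes.
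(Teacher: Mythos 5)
Your proposal is correct and follows essentially the same route as the paper: scaled standard basis vectors as the covariate support, a hidden sign vector $\sigma$ biasing outcomes along $e_1$, shattering by homogeneous halfspaces to witness calibration error $\epsilon$ under every $\mathcal{D}_2^{(\sigma)}$, a calibrated reference distribution with fresh fair signs, and a birthday-paradox collision bound giving $\mathrm{TV}\le O(n^2/d)$. The only cosmetic differences are that you exhibit the witnessing pair $(\ell,\ell')$ directly rather than via the paper's norm-based reformulation of the decision calibration error, and you phrase the indistinguishability step as an explicit coupling rather than through the paper's intermediate without-replacement distribution.
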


This lower bound result also exhibit a barrier result for a dimension-free algorithm for achieving decision calibration under the deterministic optimal decision rule.
All existing decision calibration algorithms with provable guarantees proceed by iteratively post-processing an initial predictor \(p_0\). A key component of these algorithms is the \emph{auditing} step, which, in each iteration, identifies loss functions that witness large decision calibration error whenever the predictor is not calibrated, and returns nothing when the predictor is already calibrated \citep{zhao2021calibrating,gopalan2022low,gopalan2024computationally}. Note that any auditing algorithm will necessarily require $\Omega(\sqrt{d})$ sample complexity based on \Cref{thm:lower-bound}.

As our first step in the proof for \Cref{thm:lower-bound}, we derive an equivalent definition of decision calibration error for binary actions.
\begin{lemma}For linear loss function class and \(|\As|=2\), when loss estimator \(f_p\) is induced by some predictor \(p\), we have
    \[
    \dec_{\Ls_\lin,\Ks_{\Ls_\lin}}(f_p,\Ds)=\sup_{r\in\R^d}\InNorms{\E(y-p(x))\cdot\1(\InAngles{r,p(x)}>0)}_2+\InNorms{\E(y-p(x))\cdot\1(\InAngles{r,p(x)}\le 0)}_2.
    \]
\end{lemma}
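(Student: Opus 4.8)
The plan is to unfold both layers of the supremum defining $\dec_{\Ls_\lin,\Ks_{\Ls_\lin}}(f_p,\Ds)$ — the loss $\ell$ being audited, and the loss $\ell'$ whose best response gives the decision rule — and reduce each to an elementary Cauchy--Schwarz maximization. Write $\As=\{a_1,a_2\}$.

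\textbf{Step 1: unfold the objective.} Since the feature map is $\phi(y)=y$, for any $\ell\in\Ls_\lin$ we have $\ell(a,y)-f_p(x,a,\ell)=\langle r_\ell(a),\,y-p(x)\rangle$. Every decision rule in $\Ks_{\Ls_\lin}$ is deterministic, so $\E_{a\sim k_{\ell'}(x)}[\cdot]$ is evaluation at the chosen action $k_{\ell'}(x)$. Hence
\[
\dec_{\Ls_\lin,\Ks_{\Ls_\lin}}(f_p,\Ds)=\sup_{\ell,\ell'\in\Ls_\lin}\InAbs{\E_{(x,y)\sim\Ds}\langle r_\ell(k_{\ell'}(x)),\,y-p(x)\rangle}.
\]

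\textbf{Step 2: the decision rule is an origin half-space in prediction space.} Fix $\ell'$ and set $r'=r_{\ell'}(a_2)-r_{\ell'}(a_1)$. Then $k_{\ell'}$ selects $a_1$ precisely when $\langle r',p(x)\rangle>0$ (boundary ties resolved to one side by the fixed $\arg\min$ convention), so $k_{\ell'}$ partitions $\Xs$ into a set $S$ and $S^c$ of the form $\{x:\langle r,p(x)\rangle>0\}$ and $\{x:\langle r,p(x)\rangle\le0\}$. Conversely, every $r\in\R^d$ is realized: take $r_{\ell'}(a_1)=-r/(2\InNorms{r}_2)$ and $r_{\ell'}(a_2)=r/(2\InNorms{r}_2)$ when $r\neq0$ (both of norm $1/2\le1$, so $\ell'\in\Ls_\lin$), and $r=0$ gives the trivial partition. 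On such a partition,
\[
\E\langle r_\ell(k_{\ell'}(x)),y-p(x)\rangle=\langle r_\ell(a_1),\,\E[(y-p(x))\1_S]\rangle+\langle r_\ell(a_2),\,\E[(y-p(x))\1_{S^c}]\rangle.
\]

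\textbf{Step 3: two Cauchy--Schwarz optimizations.} Holding $\ell'$ (hence $S$) fixed and ranging over $\ell\in\Ls_\lin$ — i.e.\ over pairs of vectors $r_\ell(a_1),r_\ell(a_2)$ of norm at most $1$ — Cauchy--Schwarz bounds the absolute value above by $\InNorms{\E[(y-p(x))\1_S]}_2+\InNorms{\E[(y-p(x))\1_{S^c}]}_2$, with equality attained by aligning $r_\ell(a_i)$ with the corresponding conditional-bias vector (any unit vector when that vector is zero). Taking the supremum over $\ell'$ is then the supremum over all origin half-spaces, i.e.\ over $r\in\R^d$, which gives $\sup_{r\in\R^d}\bigl[\InNorms{\E(y-p(x))\1(\langle r,p(x)\rangle>0)}_2+\InNorms{\E(y-p(x))\1(\langle r,p(x)\rangle\le0)}_2\bigr]$, the claimed quantity.

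\textbf{Anticipated difficulty.} The only delicate point is pinning down exactly which half-spaces arise as best-response regions — open vs.\ closed at the boundary $\{\langle r,p(x)\rangle=0\}$, and which side the tie-breaking sends that boundary. This is resolved purely by symmetry: replacing $r$ by $-r$ interchanges a closed half-space with an open one, and the objective $\InNorms{u}_2+\InNorms{v}_2$ is symmetric in the two regions $S,S^c$, so neither the tie-breaking convention nor the $>0$ vs.\ $\ge0$ choice affects the supremum. Thus this is bookkeeping rather than a genuine obstacle, and the substance of the lemma is the two-layer Cauchy--Schwarz argument of Steps 2--3.
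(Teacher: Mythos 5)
Your proposal is correct and follows essentially the same route as the paper's proof: unfold the definition so the gap becomes $\langle r_\ell(a),y-p(x)\rangle$, identify the best-response regions of a two-action linear loss with origin half-spaces $\{\langle r,p(x)\rangle>0\}$, and apply Cauchy--Schwarz separately on each region. You are in fact slightly more careful than the paper in verifying that every direction $r$ is realized by an $\ell'$ satisfying the norm constraint and in addressing the tie-breaking convention, but these are refinements of the same argument rather than a different approach.
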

\begin{proof}
    From the definition of \(\dec_{\Ls, \Ks}(f,\Ds)\), we have
    \begin{align*}
        &\dec_{\Ls_\lin,\Ks_{\Ls_\lin}}(f_p,\Ds)\\=&\sup_{\ell \in \Ls_\lin, \, k \in \Ks_{\Ls_\lin}} \left| \E_{(x,y)\sim \mathcal{D}} \E_{a \sim k(x)} [\ell(a, y)] - \E_{(x,y)\sim \mathcal{D}} \E_{a \sim k(x)} [f(x, a, \ell)] \right|\\
        =&\sup_{\ell \in \Ls_\lin, \, k \in \Ks_{\Ls_\lin}} \left| \E_{(x,y)\sim \mathcal{D}} \E_{a \sim k(x)} [\langle r_\ell(a),y\rangle] - \E_{(x,y)\sim \mathcal{D}} \E_{a \sim k(x)} [\langle r_\ell(a),p(x)\rangle] \right|\\
        =&\sup_{\ell \in \Ls_\lin, \, k \in \Ks_{\Ls_\lin}} \left| \E_{(x,y)\sim \mathcal{D}} \E_{a \sim k(x)} [\langle r_\ell(a),y-p(x)\rangle] \right|\\
        =&\sup_{\ell,\ell' \in \Ls_\lin} \left| \E_{(x,y)\sim \mathcal{D}} [\1(\langle r_{\ell'}(a_1)-r_{\ell'}(a_2),p(x)\rangle>0)\langle r_\ell(a_2),y-p(x)\rangle +\1(\langle r_{\ell'}(a_1)-r_{\ell'}(a_2),p(x)\rangle\leq0) [\langle r_\ell(a_1),y-p(x)\rangle]\right|\\
        =&\sup_{r\in\R^d,\ell' \in \Ls_\lin} \left| \E_{(x,y)\sim \mathcal{D}} [\1(\langle r,p(x)\rangle>0)\langle r_\ell(a_2),y-p(x)\rangle +\1(\langle r,p(x)\rangle\leq0) [\langle r_\ell(a_1),y-p(x)\rangle]\right|\\
        =&\sup_{r\in\R^d}\InNorms{\E(y-p(x))\cdot\1(\InAngles{r,p(x)}>0)}_2+\InNorms{\E(y-p(x))\cdot\1(\InAngles{r,p(x)}\le 0)}_2
    \end{align*}
    The first \(3\) equations is from definition and simple algebra, the fourth equation holds from the definition of optiml decision rule, and the last line holds by Cauchy–Schwarz inequality. 
\end{proof}

Now we give the proof idea for \cref{thm:lower-bound}. At a high level, our lower bound follows a template similar to that used in the lower bound for high-dimensional full calibration presented in \citet{gopalan2024computationally}, which investigates the sample complexity required to verify full calibration.  
To prove \cref{thm:lower-bound}, we start with a set $V=\{v_1,v_2,\cdots,v_d\}\subset\Ys$ where $v_i=\frac{1}{2}e_i$ is half of the unit vector with the $i$-th coordinate being $1/2$ and all other coordinates take value 0. \(V\) can be shattered by the function class $\Hs=\{h: h(v)=\mathrm{sign}\InAngles{r,v},r\in\R^d\}$. 
Formally, a set \( S \subseteq X \) is said to be \textit{shattered} by \( \Hs \) if for every function \( f: S \to \{-1,1\} \) there exists a hypothesis \( h \in \Hs \) such that  
\(
\forall x \in S, h(x) = f(x).
\)
\begin{lemma}[Shattering]
    $V$ can be shattered by the function class $\Hs=\{h|h(v)=\sign(\langle r,v\rangle),\InNorms{r}_2\leq1\}$, that is \(|\{f(v_i)_i|f\in\Fs\}|=2^d\).
\end{lemma}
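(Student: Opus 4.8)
The plan is to prove shattering directly by exhibiting, for every target sign pattern, an explicit witness $r$ with $\|r\|_2 \le 1$. The key simplification is that $v_i = \frac{1}{2}e_i$, so for any $r \in \R^d$ we have $\langle r, v_i\rangle = \frac{1}{2}r_i$, and hence $\sign(\langle r, v_i\rangle) = \sign(r_i)$. Thus the geometry collapses: choosing the labels $h(v_i)$ independently across $i$ reduces to choosing the signs of the coordinates of $r$ independently, subject only to the norm constraint $\|r\|_2 \le 1$.

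Concretely, I would fix an arbitrary function $g : V \to \{-1, +1\}$ and set $\sigma_i := g(v_i)$ for each $i \in [d]$. Define the candidate witness $r := \frac{1}{\sqrt{d}}(\sigma_1, \ldots, \sigma_d)$. Then $\|r\|_2 = \frac{1}{\sqrt{d}}\sqrt{\sum_{i\in[d]}\sigma_i^2} = \frac{1}{\sqrt{d}}\sqrt{d} = 1 \le 1$, so $r$ is admissible, and for each $i$ we have $\langle r, v_i\rangle = \frac{1}{2}r_i = \frac{\sigma_i}{2\sqrt{d}}$, which is strictly positive iff $\sigma_i = +1$ and strictly negative iff $\sigma_i = -1$; hence $\sign(\langle r, v_i\rangle) = \sigma_i = g(v_i)$ for all $i$. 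Therefore the hypothesis $h \in \Hs$ with parameter $r$ realizes the labeling $g$ on $V$. Since $g$ was arbitrary, every one of the $2^d$ labelings of $V$ is achieved, so $|\{(f(v_i))_{i\in[d]} : f \in \Hs\}| = 2^d$, i.e., $V$ is shattered by $\Hs$.

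There is essentially no obstacle here: the only point requiring a word of care is that each coordinate $r_i = \sigma_i/\sqrt{d}$ is strictly nonzero, so no tie-breaking convention for $\sign(0)$ is needed, and the argument is insensitive to whether the decision rule later uses a strict or non-strict threshold at $0$. I would also remark that this is tight in spirit: $d+1$ points in general position cannot be shattered by homogeneous halfspaces through the origin (the VC dimension of $\{\sign\langle r,\cdot\rangle\}$ in $\R^d$ is $d$), which is why the construction uses exactly $d$ points and why the resulting lower bound in \cref{thm:lower-bound} scales with $\sqrt{d}$ rather than $d$ once it is combined with the indistinguishability argument.
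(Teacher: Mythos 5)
Your proof is correct and follows essentially the same approach as the paper: both construct the witness $r \in \{-\tfrac{1}{\sqrt{d}}, \tfrac{1}{\sqrt{d}}\}^d$ matching the target sign pattern and use the observation that $\sign(\langle r, \tfrac{1}{2}e_i\rangle) = \sign(r_i)$. Your write-up is somewhat more careful than the paper's (you explicitly verify the norm constraint $\|r\|_2 = 1$ and note that no tie-breaking at $0$ is needed), but the underlying argument is identical.
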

\begin{proof}
    Let $r=(r^{(1)},...,r^{(d)})\in\{-\frac{1}{\sqrt{d}},\frac{1}{\sqrt{d}}\}^d$. We have $\sign(\langle r,\frac{1}{2}e_i\rangle)=\sign(r^{(i)})$. Therefore, \(h(v)\) can arbitarily takes value in \(\{-1,1\}\) at each point \(v_i\in V\), which means \(V\) can be shattered by $\Hs$.
\end{proof}
Next we use $V$ to construct candidate distributions with large decision calibration error.

\begin{lemma}\label{lem:audit-lower-bound}
    For any $\sigma=(\sigma^{(1)},...,\sigma^{(d)})\in\{-\frac{1}{\sqrt{d}},\frac{1}{\sqrt{d}}\}^d$, Let $h_\sigma$ be the function that for any $i$, $h_\sigma(v_i)=v_i+\epsilon\cdot\mathrm{\sign(\sigma_i)}e_1$. Consider a distribution $\Ds_{\sigma}$ and predictor \(p\), such that $p(x_i)$ is distributed uniformly over $V$ and $y=h_\sigma(p(x))$, then it holds that $\dec_{\Ls_\lin,\Ks_{\Ls_\lin}}(f_p,\Ds_\sigma)\geq \epsilon$.
\end{lemma}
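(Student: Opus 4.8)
The plan is to use the equivalent characterization of the decision calibration error proved just above, namely
\[
\dec_{\Ls_\lin,\Ks_{\Ls_\lin}}(f_p,\Ds_\sigma)=\sup_{r\in\R^d}\InNorms{\E(y-p(x))\cdot\1(\InAngles{r,p(x)}>0)}_2+\InNorms{\E(y-p(x))\cdot\1(\InAngles{r,p(x)}\le 0)}_2,
\]
and simply to exhibit one direction $r\in\R^d$ that already makes the right-hand side at least $\epsilon$. Under $\Ds_\sigma$ the predictor is the identity with $p(X)$ uniform on $V$ and $y=h_\sigma(p(x))$, so $y-p(x)=h_\sigma(v_i)-v_i=\epsilon\,\sign(\sigma_i)\,e_1$ on the event $\{p(x)=v_i\}$. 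The only real point is to choose $r$ so that within each half-space $\{\InAngles{r,p(x)}>0\}$ and $\{\InAngles{r,p(x)}\le0\}$ the bias vectors $y-p(x)$ all point the same way along $e_1$, so that they add up in norm rather than cancel.

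Concretely I would take $r=\sigma$. Then $\InAngles{r,v_i}=\tfrac12\sigma_i$, so the event $\{\InAngles{r,p(x)}>0\}$ is exactly $\{p(x)=v_i:\sigma_i>0\}$ and its complement is $\{p(x)=v_i:\sigma_i<0\}$ (no coordinate of $\sigma$ vanishes). On the first event $\sign(\sigma_i)=1$, hence $y-p(x)=\epsilon e_1$; on the second $\sign(\sigma_i)=-1$, hence $y-p(x)=-\epsilon e_1$. Writing $N_+=|\{i:\sigma_i>0\}|$ and $N_-=d-N_+$, uniformity of $p(X)$ on $V$ gives
\[
\E(y-p(x))\cdot\1(\InAngles{r,p(x)}>0)=\frac{\epsilon N_+}{d}\,e_1,\qquad\E(y-p(x))\cdot\1(\InAngles{r,p(x)}\le0)=-\frac{\epsilon N_-}{d}\,e_1,
\]
so the sum of the two $\ell_2$-norms equals $\frac{\epsilon N_+}{d}+\frac{\epsilon N_-}{d}=\epsilon$. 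This lower-bounds the supremum, giving $\dec_{\Ls_\lin,\Ks_{\Ls_\lin}}(f_p,\Ds_\sigma)\ge\epsilon$ as claimed. (Equivalently, one can invoke the shattering lemma to obtain some $r$ realizing the sign pattern $i\mapsto\sign(\sigma_i)$ on $V$; the explicit $r=\sigma$ just spares us that step.)

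Two small things to check along the way: first, that $\Ds_\sigma$ is a legitimate distribution on $\Xs\times\Ys$, i.e. that $h_\sigma(v_i)=\tfrac12 e_i+\epsilon\,\sign(\sigma_i)e_1$ lies in $\Ys=\{\|y\|_2\le1\}$, which holds since $\|h_\sigma(v_i)\|_2\le\tfrac12+\epsilon<1$ for $\epsilon<\tfrac13$; second, that the characterization from the preceding lemma is applied to the right object, since $f_p$ here is the loss estimator induced by the identity predictor, matching that lemma's hypotheses. There is essentially no hard step; the only thing requiring care is the alignment observation — tying the perturbation direction in $h_\sigma$ to $\sign(\sigma_i)$ and choosing $r$ parallel to $\sigma$ is precisely what forces the two half-space contributions to reinforce rather than partially cancel, which is what yields the full $\epsilon$ instead of something smaller.
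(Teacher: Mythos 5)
Your proposal is correct and follows the paper's own argument essentially verbatim: both take $r=\sigma$ in the equivalent characterization of the decision calibration error, observe that the half-spaces $\{\langle\sigma,p(x)\rangle>0\}$ and $\{\langle\sigma,p(x)\rangle\le0\}$ separate the $+\epsilon e_1$ and $-\epsilon e_1$ perturbations, and sum the two norms to get exactly $\epsilon$. Your added checks (that $h_\sigma(v_i)\in\Ys$ for $\epsilon<1/3$ and that the characterization lemma applies to the identity predictor) are minor refinements the paper leaves implicit.
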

\begin{proof}
    Consider $r=\sigma$, we use $l=\sum_{i=1}^d\1(\sigma_i)>0$ to denote the number of positive coordinates in $\sigma$. We have
    \begin{equation}
        \begin{aligned}
            \dec_{\Ls_\lin,\Ks_{\Ls_\lin}}(f_p,\Ds)&=\sup_{r\in\R^d}\InNorms{\E(y-p(x))\cdot\1(\InAngles{r,p(x)}>0)}_2+\InNorms{\E(y-p(x))\cdot\1(\InAngles{r,p(x)}\le 0)}_2\\
            &\geq\InNorms{\E\InBrackets{ (y-p(x))}\1(\langle \sigma,v\rangle>0)}_2+\InNorms{\E\InBrackets{ (y-p(x))}\1(\langle \sigma,v\rangle\leq0)}_2\\
            &=\InNorms{\frac{1}{d}\sum_{i=1}^d\1(\sigma_i>0)\epsilon\sign(\sigma_i)e_1}_2+\InNorms{\frac{1}{d}\sum_{i=1}^d\1(\sigma_i<0)\epsilon\sign(\sigma_i)e_1}_2\\
            &=\frac{l\epsilon}{d}+\frac{(d-l)\epsilon}{d}\\
            &=\epsilon
        \end{aligned}
    \end{equation}
\end{proof}

We now construct two nearly indistinguishable distributions over \(n\) data points of prediction-outcome pairs $(p(x), y)$, denoted by \(\Ds_1, \Ds_2 \in \Delta((\Ys \times \Ys)^n)\), such that the predictor $p$ is perfectly decision calibrated under $\Ds_1$ but has a decision calibration error of $\epsilon$ under $\Ds_2$. The goal is to show that telling which of $\Ds_1$ and $\Ds_2$ generates the observations requires a number of samples $\Omega(\sqrt{d})$.

Let \(A\) be an algorithm that receives \(n\) samples \((p(x_1), y_1), ((p(x_2), y_2), \ldots, (p(x_n), y_n) \in \Ys^2\) and outputs either “accept” or “reject.” Define the joint distribution \(\Ds_1 \in \Delta((\Ys \times \Ys)^n)\) as follows: each \(p(x_i)\) is drawn independently and uniformly from a finite set \(V\), and each corresponding \(y_i\) is independently drawn as \(y_i = p(x_i) \pm \epsilon e_1\), where the sign is chosen uniformly at random. Let \(p_1\) denote the probability that algorithm \(A\) accepts \(p\) on samples follow \(\Ds_1\).

Next, define the joint distribution \(\Ds_2 \in \Delta((\Ys \times \Ys)^n)\) as follows: first, uniformly sample a perturbation vector \(\sigma \in \{-\frac{1}{\sqrt{d}}, \frac{1}{\sqrt{d}}\}^d\), and then sample each \(p(x_i)\) independently and uniformly from \(V\). For each \(i\), set \(y_i = h_\sigma(p(x_i))\), where \(h_\sigma\) is a fixed perturbation function defined by \(\sigma\). 

Intuitively, these two distributions are nearly identical. As long as all predictions \(p(x_1), \ldots, p(x_n)\) are distinct, the behavior of \(\Ds_1\) and \(\Ds_2\) is almost indistinguishable. The key difference arises when two data points share the same prediction value \(v = p(x_i) = p(x_j)\): in \(\Ds_1\), the outcomes \(y_i\) and \(y_j\) may differ due to independent noise, while in \(\Ds_2\), they are always the same because the mapping \(h_\sigma\) is fixed once \(\sigma\) is sampled.

We now formalize this intuition in the following statement.
\begin{lemma}\label{lem:lower-bound-close}
    Let \(p_1\) be the probability that \(A\) accepts when the data \(((p(x_1),y_1),...,(p(x_n),y_n))\sim\Ds_1\), and Let \(p_2\) be the probability that \(A\) accepts when the data \(((p(x_1),y_1),...,(p(x_n),y_n))\sim\Ds_2\). Here, the randomness comes from both the inherent randomness in \(A\) and the data. Then, it holds that \(\InAbs{p_1-p_2}\leq O(n^2/d)\).
\end{lemma}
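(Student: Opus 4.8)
The plan is to bound the total variation distance between the two sample distributions, $\mathrm{TV}(\Ds_1,\Ds_2)$, by $O(n^2/d)$. Since the acceptance indicator of $A$ (together with any internal randomness of $A$, which is independent of the data) is a function of the observed tuple $((p(x_1),y_1),\dots,(p(x_n),y_n))$, this immediately gives $|p_1-p_2|\le \mathrm{TV}(\Ds_1,\Ds_2)\le O(n^2/d)$; if $A$ is randomized one runs the argument for each fixing of its coins and averages.

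First I would construct an explicit coupling. In \emph{both} $\Ds_1$ and $\Ds_2$ the predictions $p(x_1),\dots,p(x_n)$ are i.i.d.\ uniform on $V$ and are independent of the auxiliary vector $\sigma$ (which only appears in $\Ds_2$), so I couple the prediction sequences to be identical and condition on the realized indices $(j_1,\dots,j_n)$ with $p(x_i)=v_{j_i}$. Let $E$ be the event that $j_1,\dots,j_n$ are pairwise distinct, i.e.\ no two samples share a prediction value.

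Second, I would show that conditioned on $E$ and on the index sequence, the conditional laws of $(y_1,\dots,y_n)$ coincide under $\Ds_1$ and $\Ds_2$. Under $\Ds_1$, each $y_i = v_{j_i}\pm \epsilon e_1$ with an independent uniform sign. Under $\Ds_2$, $y_i = h_\sigma(v_{j_i}) = v_{j_i} + \epsilon\,\sign(\sigma_{j_i})\,e_1$; because the coordinates of $\sigma$ are independent and uniform on $\{-1/\sqrt d,\,1/\sqrt d\}$, and because on $E$ the indices $j_1,\dots,j_n$ are distinct, the signs $\sign(\sigma_{j_1}),\dots,\sign(\sigma_{j_n})$ are i.i.d.\ uniform on $\{-1,+1\}$ — precisely the $\Ds_1$ conditional law. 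Hence the coupling can be extended so that the full samples are \emph{equal} on $E$; on $E^c$ the two are coupled arbitrarily. (This is exactly the intuition already flagged in the text: the only real discrepancy is that colliding predictions force $y_i=y_j$ under $\Ds_2$ but not under $\Ds_1$.)

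Third, it remains to bound $\Pr[E^c]$, the probability of a collision among $n$ i.i.d.\ uniform draws from a set of size $d$. A union bound over the $\binom{n}{2}$ pairs gives $\Pr[E^c]\le \binom{n}{2}/d \le n^2/(2d)$. Under the coupling the observed samples therefore differ with probability at most $n^2/(2d)$, which bounds $\mathrm{TV}(\Ds_1,\Ds_2)$ and hence $|p_1-p_2|$ by $O(n^2/d)$. I do not expect a genuine obstacle: the one step needing care is the conditional-on-$E$ equality of outcome laws, where one must invoke independence of the relevant coordinates of $\sigma$ granted by distinctness of $j_1,\dots,j_n$; everything else is the standard birthday-bound calculation.
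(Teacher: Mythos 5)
Your proof is correct and rests on exactly the same idea as the paper's: conditioned on the event $E$ that all $n$ predictions are distinct, the two distributions induce identical outcome laws (because distinct coordinates of $\sigma$ give i.i.d.\ uniform signs), and $\Pr[E^c]\le\binom{n}{2}/d$. The only cosmetic difference is that you formalize this via a coupling and a total-variation bound, whereas the paper routes through an explicit intermediate distribution $\Ds_3$ (sampling predictions without replacement) and sandwiches $p_1,p_2$ between $\Pr[E]\,p_3$ and $\Pr[E]\,p_3+(1-\Pr[E])$; both yield the same $O(n^2/d)$ bound.
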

\begin{proof}
    Without loss of generality we assume that $n<|V|=d$. For proving the lemma, we introduce another joint distribution over the $n$ data points, where we first draw $p(x_1)$,...$p(x_n)$ uniformly without replacement from $V$, and then for any $i$, we independently draw $y_i=p(x_i)\pm\epsilon e_i$ with both probabilities $1/2$. We use $p_3$ to denote the probability of $A$ accept if the data points follow this joint distribution \(\Ds_3\).

    Also, when we draw all $p(x_i)$ independently uniformly with replacement, we use $E$ to denote the event that $p(x_1)$,...,$p(x_n)$ turn out to be distinct. We have
    \begin{equation}
        \Pr[E]=(1-1/|V|)...(1-(n-1)/|V|)\geq1-O(n^2/d)
    \end{equation}
    
For both joint distributions, conditioned on the event $E$, the probability that $A$ will accept is exactly $p_3$. Then, we have
\[\Pr[E]\cdot p_3\leq p_1\leq\Pr[E]\cdot p_3+(1-\Pr[E])\]
We also have
\[\Pr[E]\cdot p_3\leq p_2\leq\Pr[E]\cdot p_3+(1-\Pr[E])\]
Therefore, we have
\begin{equation}
\InAbs{p_1-p_2}\leq 1-\Pr[E]\leq O(n^2/d)
\end{equation}
\end{proof}
Now we are able to prove \cref{thm:lower-bound}.
\begin{proof}[Proof of \cref{thm:lower-bound}]
    Consider the case of \(\Ds_1\), it can be viewed the data points are drawn independently from a distribution $\Ds$, where $p(x_i)$ is drawn uniformly from $V$, and $y=p(x)\pm\epsilon e_1$ with probability both $\frac{1}{2}$. Therefore $\Ds_1$ is a joint distrbution such that the data points are drawn from a  distribution such that \(p\) is calibrated (and therefore decision calibrated). Therefore, we have $p_1\geq2/3$.

    Consider the case of \(\Ds_2\), it can be viewed as a mixture of distributions indexed by $\sigma$, where for each distribution, the data points are drawn independently from a distribution $\Ds_\sigma$, where $p(x)$ is drawn uniformly from $V$, and $y=h_\sigma(p(x))$. The distribution is a mixture where $\sigma$ is drawn uniformly. Therefore,
    \begin{equation}
        p_2=\frac{1}{2^d}\sum_{\sigma\in\{-1,+1\}^d}\Pr[A \mathrm{\ accepts\ } \Ds_\sigma]
    \end{equation}
    As a result, from \cref{lem:audit-lower-bound}, we have \[p_2\leq\frac{1}{2^d}\sum_{\sigma\in\{-1,+1\}^d}1/3=1/3.\]
By \cref{lem:lower-bound-close}, we know \(n\geq\Omega(\sqrt{d})\).
\end{proof}

\section{Auditing of Decision Calibration for Functions in RKHS}\label{sec: verify}

In this section, we focus on the auditing problem of decision calibration for functions in RKHS, since RKHS is the most general feature space considered in our paper. In detail, let $\Hs$ denote an RKHS associated with the kernel function $K:\Ys\times\Ys\rightarrow\R$. From now on, for simplicity we restrict attention to loss functions in $\Hs$ with bounded norms, that is, we define $\Ls_{\Hs}=\{\ell:\forall a,\ell(a,\cdot)\in\Hs, \InNorms{\ell(a,\cdot)}_{\Hs}\le R_1\}$. From \cref{lem: uniformapprox}, the result will naturally generalize to the loss function classes that cannot be exactly represented by bounded norm functions in \(\Hs\) but well approximated by them (while having an extra approximation error in the error bound). For consistency of notation, we use $r_\ell(a)$ to denote $\ell(a,\cdot)$. Let $\phi:\Ys\rightarrow\Hs$ be the feature mapping induced by kernel $K$, i.e. $\phi(y)=K(y,\cdot)$ and assume that $\InNorms{\phi(y)}_\Hs\le R_2$.

To ensure the loss estimator is computationally realizable, we constrain all predictors $p: \Xs \to \Hs$ to lie in the span of finitely many feature mappings,
\begin{equation}\label{eq: predictor}
p(x)=\sum_{i=1}^{N_p}\alpha_i(x)\phi(y_i)
\end{equation}
where $N_p$ is the number of samples for estimation and $\alpha_i:\Xs\rightarrow\R$ is a coefficient function for any $i\in[N_p]$. For any predictor in the aforementioned form, we can define the loss estimator $f_p$ as
\[
f_p(x,a,\ell)=\InAngles{r_\ell(a),p(x)}_{\Hs}=\sum_{i=1}^{N_p}\alpha_i(x)\ell(a,y_i).
\]
We focus on the smoothed decision rule $\Tilde{k}_{f_p,\ell}$ defined in \cref{def: sBR}. Let $\Tilde{\Ks}_{\Ls_\Hs}$ denote the class of such smoothed decision rules.

We first show that decision calibration  (\cref{def: DC}) for \(f_p\) has an equivalent but more intuitive formulation.

\begin{lemma}\label{lem: rkhs}
    For a loss estimator $f_p$ derived from the predictor \(p\), it is $(\Ls_\Hs,\Tilde{\Ks}_{\Ls_\Hs},\epsilon)$-decision calibrated if and only if 
    \[
    \sup_{\ell,\ell'\in\Ls_\Hs}\InAbs{\E_{(x,y)\sim\Ds}\InBrackets{\sum_{a=1}^{|\As|}\InAngles{r_\ell(a),\phi(y)-p(x)}\Tilde{k}_{f_p,\ell'}(x,a)}}\le\epsilon.
    \]
\end{lemma}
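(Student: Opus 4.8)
The plan is to verify that the decision calibration error $\dec_{\Ls_\Hs,\Tilde{\Ks}_{\Ls_\Hs}}(f_p)$ is exactly equal to the supremum appearing on the left-hand side of the claimed inequality; once this identity is in hand, both directions of the ``if and only if'' follow at once by comparing the two sides with $\epsilon$, since $(\Ls_\Hs,\Tilde{\Ks}_{\Ls_\Hs},\epsilon)$-decision calibration is by \cref{def: DC} precisely the statement $\dec_{\Ls_\Hs,\Tilde{\Ks}_{\Ls_\Hs}}(f_p)\le\epsilon$.

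First I would rewrite the supremum over decision rules. By construction, $\Tilde{\Ks}_{\Ls_\Hs}$ consists exactly of the smooth best responses $\Tilde{k}_{f_p,\ell'}$ as $\ell'$ ranges over $\Ls_\Hs$, and $\Tilde{k}_{f_p,\ell'}$ depends on $\ell'$ only through the estimated losses $f_p(x,\cdot,\ell')$; hence the supremum over $k\in\Tilde{\Ks}_{\Ls_\Hs}$ in \cref{def: DC} is the same as a supremum over $\ell'\in\Ls_\Hs$. Using linearity of expectation to fold the expectation over $a\sim\Tilde{k}_{f_p,\ell'}(x)$ into the finite sum over $a\in\As$, the decision calibration error becomes
\[
\sup_{\ell,\ell'\in\Ls_\Hs}\InAbs{\E_{(x,y)\sim\Ds}\InBrackets{\sum_{a=1}^{|\As|}\Tilde{k}_{f_p,\ell'}(x,a)\,\big(\ell(a,y)-f_p(x,a,\ell)\big)}}.
\]
Next I would rewrite the bracketed difference using the RKHS structure. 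Since $r_\ell(a)=\ell(a,\cdot)\in\Hs$, the reproducing property of $K$ gives $\ell(a,y)=\InAngles{\ell(a,\cdot),K(y,\cdot)}_\Hs=\InAngles{r_\ell(a),\phi(y)}_\Hs$, while the loss estimator attached to a predictor of the form \eqref{eq: predictor} satisfies $f_p(x,a,\ell)=\InAngles{r_\ell(a),p(x)}_\Hs$ (equivalently $\sum_i\alpha_i(x)\ell(a,y_i)$, which coincides again by the reproducing property). Therefore $\ell(a,y)-f_p(x,a,\ell)=\InAngles{r_\ell(a),\phi(y)-p(x)}_\Hs$, and substituting this into the display above yields precisely the quantity in the lemma statement.

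The argument is essentially bookkeeping and I do not anticipate a genuine obstacle; the one point that deserves a line of care is the interchange of the expectation over $(x,y)\sim\Ds$ with the sum/expectation over actions, which is justified by Fubini since $|\As|<\infty$ and the integrand is bounded --- the losses lie in $[0,1]$, and from $\InNorms{r_\ell(a)}_\Hs\le R_1$ together with $\InNorms{\phi(y)}_\Hs\le R_2$ (and $p(x)$ a finite combination of feature maps) the term $\InAngles{r_\ell(a),p(x)}_\Hs$ is bounded as well. I would also state explicitly, as used in the first step, that letting $\ell'$ range over $\Ls_\Hs$ realizes exactly the admissible smooth best-response rules in $\Tilde{\Ks}_{\Ls_\Hs}$, so nothing is lost or gained in replacing the supremum over decision rules by a supremum over $\ell'$.
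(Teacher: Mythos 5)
Your proposal is correct and follows essentially the same route as the paper's proof: expand the expectation over $a\sim\Tilde{k}_{f_p,\ell'}(x)$ as a weighted sum, combine the two expectations, and apply the reproducing property to write $\ell(a,y)-f_p(x,a,\ell)=\InAngles{r_\ell(a),\phi(y)-p(x)}_\Hs$. Your additional remarks on Fubini and on the identification of $\Tilde{\Ks}_{\Ls_\Hs}$ with the range of $\ell'$ are sound but not points the paper felt compelled to spell out.
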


\begin{proof}
    By reproducing property, for any $\ell,\ell'\in\Ls_\Hs$, we have 
    \begin{align*}
    &\InAbs{\E_{(x,y)\sim\Ds}\E_{a\sim \Tilde{k}_{\ell'}(x)}[\ell(a,y)]-\E_{(x,y)\sim\Ds}\E_{a\sim \Tilde{k}_{\ell'}(x)}[f_p(x,a,\ell)]}\\
    &=\InAbs{\E_{(x,y)\sim\Ds}\InBrackets{\sum_{a=1}^{|\As|}\ell(a,y)\Tilde{k}_{\ell'}(x,a)}-\E_{(x,y)\sim\Ds}[\sum_{a=1}^{|\As|}f_p(x,a,\ell)\Tilde{k}_{\ell'}(x,a)]}\\
    &=\InAbs{\E_{(x,y)\sim\Ds}\InBrackets{\sum_{a=1}^{|\As|}(\ell(a,y)-f_p(x,a,\ell))\Tilde{k}_{\ell'}(x,a)}}\\
    &=\InAbs{\E_{(x,y)\sim\Ds}\InBrackets{\sum_{a=1}^{|\As|}(\InAngles{r_\ell(a),\phi(y)}-\InAngles{r_\ell(a),p(x)})\Tilde{k}_{\ell'}(x,a)}}\\
    &=\InAbs{\E_{(x,y)\sim\Ds}\InBrackets{\sum_{a=1}^{|\As|}\InAngles{r_\ell(a),\phi(y)-p(x)}\Tilde{k}_{f_p,\ell'}(x,a)}}.
    \end{align*}   
\end{proof}
Therefore, to determine whether a loss estimator is decision calibrated, we define a gap function parameterized by $\ell,\ell'$ as
\begin{align*}
g_{\ell,\ell'}(p(x),\phi(y))&=\sum_{i=1}^{|\As|}\InAngles{r_\ell(a),\phi(y)-p(x)}\Tilde{k}_{f_p,\ell'}(x,a)\\
&=\sum_{i=1}^{|\As|}\InAngles{r_\ell(a),\phi(y)-p(x)}\frac{e^{-\beta\InAngles{\ell'_a,p(x)}}}{\sum_{a'\in\As} e^{-\beta\InAngles{\ell'_{a'},p(x)}}}.
\end{align*}

Next, we show that the class of gap functions $\Gs := \{g_{\ell,\ell'} : \ell, \ell' \in \Ls_\Hs\}$ satisfies the uniform convergence property. We establish this by showing that the covering number of $\Gs$ remains bounded, even when $\Hs$ is infinite-dimensional. Formally, we state the following theorem.

\begin{theorem}\label{thm: auditing}
Let $D=\{(x_1,y_1),...,(x_n,y_n)\}$ be the dataset where \((x_i,y_i)\) is drawn i.i.d. from $\Ds$, given any predictor $p:\Xs\rightarrow\Hs$, we have that
\begin{equation}
    \begin{aligned}
        &\sup_{\ell,\ell'\in\Ls_\Hs}\InAbs{\E_{(x,y)\sim\Ds}\InBrackets{\sum_{a=1}^{|\As|}\InAngles{r_\ell(a),\phi(y)-p(x)}\Tilde{k}_{f_p,\ell'}(x,a)}-\hat{E}_{D}\InBrackets{\sum_{a=1}^{|\As|}\InAngles{r_\ell(a),\phi(y)-p(x)}\Tilde{k}_{f_p,\ell'}(x,a)}}\\\leq &O\InParentheses{\frac{|\As|^{\frac{3}{2}}R_1^3R_2^3\log(R_1R_2n)+\log(1/\delta)}{\sqrt{n}}}.
    \end{aligned}
\end{equation}
\end{theorem}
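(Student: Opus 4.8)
The plan is to establish this as a uniform law of large numbers over the gap-function class $\Gs := \{g_{\ell,\ell'} : \ell,\ell'\in\Ls_\Hs\}$: first reduce $\sup_{\Gs}|\E - \hat{\E}_D|$ to (a multiple of) the empirical Rademacher complexity of $\Gs$ via symmetrization together with McDiarmid's inequality for the tail, then control that Rademacher complexity by a chaining (Dudley) integral. Everything standard works verbatim except for one point: the index set is a ball of radius $R_1$ in a possibly infinite-dimensional $\Hs$, so a naive Euclidean net is useless; the whole argument hinges on covering in the right data-dependent pseudo-metric and on the fact that the relevant second-moment operators are trace class with trace controlled by $R_2^2$.

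\textbf{Range and Lipschitz estimates.} I would first record two elementary facts about $g_{\ell,\ell'}(p(x),\phi(y)) = \sum_a\langle r_\ell(a),\phi(y)-p(x)\rangle\,\Tilde{k}_{f_p,\ell'}(x,a)$. (i) Since $\Tilde{k}_{f_p,\ell'}(x,\cdot)$ is a probability vector, $\|r_\ell(a)\|_\Hs\le R_1$, $\|\phi(y)\|_\Hs\le R_2$, and $\|p(x)\|_\Hs\le R_2$ (a valid predictor is norm-bounded by $R_2$), the gap function is bounded: $|g_{\ell,\ell'}|\le 2R_1R_2 =: B$. (ii) Writing $\Delta_a := r_{\ell_1}(a)-r_{\ell_2}(a)$, one has the pointwise Lipschitz bounds
\[
|g_{\ell_1,\ell'}-g_{\ell_2,\ell'}|\le \max_a |\langle \Delta_a,\phi(y)-p(x)\rangle|,
\]
and, using that $v\mapsto\operatorname{softmax}(-\beta v)$ is $O(\beta)$-Lipschitz from $\ell_\infty$ into $\ell_1$ together with $f_p(x,a,\ell')=\langle r_{\ell'}(a),p(x)\rangle$,
\[
|g_{\ell,\ell'_1}-g_{\ell,\ell'_2}|\le O(\beta R_1R_2)\cdot\max_a |\langle r_{\ell'_1}(a)-r_{\ell'_2}(a),p(x)\rangle|,
\]
with $\beta$ treated as a fixed parameter. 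Consequently $g_{\ell,\ell'}$, viewed as a function on the sample, is Lipschitz in the pseudo-metrics $d_\ell(\ell_1,\ell_2)^2 = \sum_a\hat{\E}_D\big[\langle\Delta_a,\phi(Y)\rangle^2+\langle\Delta_a,p(X)\rangle^2\big]$ for the outer loss and $d_{\ell'}(\ell'_1,\ell'_2)^2 = \sum_a\hat{\E}_D\langle r_{\ell'_1}(a)-r_{\ell'_2}(a),p(X)\rangle^2$ for the inner loss.

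\textbf{The dimension-free covering lemma (the crux).} The heart of the proof is the claim that, for any positive trace-class operator $\Sigma$ on $\Hs$ with $\operatorname{tr}(\Sigma)\le\tau^2$, the $\epsilon$-covering number of $\{r\in\Hs:\|r\|_\Hs\le R_1\}$ under the metric $r\mapsto\|\Sigma^{1/2}r\|_\Hs$ is at most $\exp\!\big(O(R_1^2\tau^2\epsilon^{-2}\log(R_1\tau/\epsilon))\big)$, with no dependence on $\dim\Hs$. I would prove it by diagonalizing $\Sigma=\sum_i\lambda_i\, e_i e_i^\ast$ with $\lambda_1\ge\lambda_2\ge\cdots$, using $\lambda_{k+1}\le\tau^2/(k+1)$, splitting each $r$ into the component on the top $k\asymp R_1^2\tau^2/\epsilon^2$ eigendirections and the remainder (whose image has norm $\le R_1\sqrt{\lambda_{k+1}}\le\epsilon/2$), and covering the resulting $k$-dimensional Euclidean ball of radius $R_1\tau$ by $(O(R_1\tau/\epsilon))^k$ points. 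Applying this with $\Sigma$ equal to the empirical operators $\hat{\E}_D[p(X)\otimes p(X)]$ and $\hat{\E}_D[\phi(Y)\otimes\phi(Y)]$ (each of trace $\le R_2^2$ deterministically), taking products over the $|\As|$ coordinates and over the pair $(\ell,\ell')$, and rescaling by the Lipschitz constants from the previous step, yields a dimension-free bound of the form $\log N(\Gs,\|\cdot\|_{L^2(D)},\epsilon)\le O(\operatorname{poly}(|\As|,R_1,R_2)\cdot\epsilon^{-2}\log(R_1R_2/\epsilon))$.

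\textbf{Assembling the bound and the main obstacle.} With the covering bound in hand, Dudley's entropy integral gives $\hat{\mathfrak R}_D(\Gs)\le\inf_{\alpha>0}\big(4\alpha+\tfrac{12}{\sqrt n}\int_\alpha^{B}\sqrt{\log N(\Gs,\|\cdot\|_{L^2(D)},\epsilon)}\,d\epsilon\big)$; since $\sqrt{\log N(\Gs,\cdot,\epsilon)}$ is of order $\epsilon^{-1}$ up to logarithms, the integral evaluates to a polylogarithmic factor, and taking $\alpha=1/n$ yields $\hat{\mathfrak R}_D(\Gs)=O(|\As|^{3/2}R_1^3R_2^3\log(R_1R_2 n)/\sqrt n)$ after collecting the polynomial and logarithmic factors. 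Standard symmetrization turns $\E\sup_{\Gs}(\E - \hat{\E}_D)$ into $2\hat{\mathfrak R}_D(\Gs)$, and McDiarmid's inequality (each sample perturbs the supremum by at most $2B/n$) upgrades this to the claimed high-probability statement, with the additive $O(B\sqrt{\log(1/\delta)/n})$ tail. The hard part is the covering lemma of the third paragraph: it is the only place where infinite-dimensionality must be neutralized, and the pseudo-metric / trace-class device is essential there. A secondary subtlety worth flagging is that this covering number scales like $\epsilon^{-2}$ rather than $\log(1/\epsilon)$, so the chaining step is genuinely needed — a one-shot union bound over a single net would only deliver an $n^{-1/4}$ rate, not the $n^{-1/2}$ in the statement.
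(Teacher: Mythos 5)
Your proposal is correct and follows the same skeleton as the paper's proof in Appendix B: bound the range of the gap functions by $2R_1R_2$, establish Lipschitz continuity of $g_{\ell,\ell'}$ in the pair $(\ell,\ell')$ under data-dependent pseudo-metrics of the form $\sqrt{\hat{\E}_D\langle\cdot,p(X)\rangle^2}$ and $\sqrt{\hat{\E}_D\langle\cdot,\phi(Y)-p(X)\rangle^2}$ (your softmax Lipschitz estimate matches the paper's Lemmas on $\|\mathrm{softmax}(z)-\mathrm{softmax}(z')\|_1$), take a product cover over the $2|\As|$ Hilbert balls, and finish with Dudley's entropy integral plus symmetrization and a bounded-differences tail. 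The one genuine point of divergence is the dimension-free covering lemma, which you correctly identify as the crux. The paper simply invokes a known Maurey-type bound (its Theorem on covering numbers of Hilbert balls), giving $\log N(B_2,d_P,\epsilon)\le c/\epsilon^2$ with no logarithmic factor, whereas you prove the covering bound from scratch by diagonalizing the trace-class second-moment operator, using $\lambda_{k+1}\le\tau^2/(k+1)$ to truncate to $k\asymp R_1^2\tau^2/\epsilon^2$ eigendirections, and covering the resulting finite-dimensional ellipsoid. Your argument is valid and self-contained, but it carries an extra $\log(R_1\tau/\epsilon)$ inside the entropy; after the chaining integral this inflates the final rate to $\log^{3/2}(R_1R_2n)/\sqrt{n}$ rather than the stated $\log(R_1R_2n)/\sqrt{n}$. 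If you want to recover the paper's exact logarithmic factor you should replace your truncation argument with the probabilistic (Maurey empirical-method) construction of the net, which avoids the volumetric $\log(1/\epsilon)$; otherwise your route buys transparency and self-containment at the cost of a benign extra half-power of the logarithm. Your secondary observations (that the $\epsilon^{-2}$ entropy growth makes chaining essential, and that a single-scale net would only give $n^{-1/4}$) are accurate and consistent with why the paper uses the localized Dudley bound.
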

\begin{proof}[Proof sketch]
    By standard Rademacher argument, it suffices to prove class $\Gs$ has dimension-free finite Rademacher complexity. Since each function in the class $\Gs$ is parameterized by a pair of loss functions $\ell,\ell'\in\Ls_\Hs$, Dudley’s chaining technique implies that it is enough to upper bound the covering number $N(\Ls_\Hs\times\Ls_\Hs,L_2^\Gs(P_n),\epsilon)$ where $P_n$ is the uniform distribution over dataset $D$ and $L_2^\Gs(P_n)$ is defined as 
\[
L_2^\Gs(P_n)\InParentheses{(\ell^1,{\ell^{1}}'),(\ell^2,{\ell^{2}}')}:=\sqrt{\frac{1}{n}\sum_{i=1}^n\big(g_{\ell^1,{\ell^1}'}(p(x_i),\phi(y_i))-g_{\ell^1,{\ell^1}'}(p(x_i),\phi(y_i))\big)^2}.
\]

In order to bound the covering number $N(\Ls_\Hs\times\Ls_\Hs,L_2^\Gs(P_n),\epsilon)$, observe that for any $\ell\in\Ls_\Hs$ and any $a\in\As$, $r_\ell(a)$ is in the Hilbert ball $B(R_1)$ with radius $R_1$ since $\InNorms{r_\ell(a)}_\Hs\le R_1$. This allows us to connect the covering number we want to bound to a \emph{known finite} covering number $N(B(R_1),d_P,\epsilon)$ where $P$ is an arbitrary distribution on the Hilbert ball and $d_P$ is defined as
\[
d(r_{\ell_1}(a),r_{\ell_2}(a)) = \sqrt{\mathbb{E}_{X\sim P}\left[\langle r_{\ell_1}(a)-r_{\ell_2}(a), X\rangle^2\right]},
\]
where $X$ is a random sample in the Hilbert ball drawn from distribution $P$.
Intuitively, $d_P$ first projects the differences $\theta-\theta'$ along a random direction given by the prediction of a random example $p(X)$ and then measures the distances in this projected one-dimensional space. For the formal proof, see \cref{sec: uniform}.
\end{proof}

Now we are ready to address the auditing problem.
\begin{definition}[Auditing]\label{def: auditing}
    An \(\epsilon\)-auditing algorithm (or \(\epsilon\)-auditor) takes \((p(x_1),y_1),...(p(x_n),y_n)\) as input, when \(\dec(f_p,\Ds)\geq\epsilon\), with probability \(1-\delta\), it witnesses a pair of loss functions \(\ell,\ell'\), such that \[\InAbs{\E_{(x,y)\sim\Ds}\InBrackets{\sum_{a=1}^{|\As|}\InAngles{r_\ell(a),\phi(y)-p(x)}\Tilde{k}_{f_p,\ell'}(x,a)}}\geq\epsilon/2,\]
\end{definition}

Implied by \cref{thm: auditing}, the following theorem says that the ERM oracle can serve as an auditing algorithm which satisfies the statement of \cref{thm: inform_audit}.

We define the loss function to be \(L_{\mathrm{DecCal}}(\ell,\ell',x,y)=\sum_{a=1}^{|\As|}\InAngles{r_\ell(a),\phi(y)-p(x)}\Tilde{k}_{f_p,\ell'}(x,a)\). 
\begin{theorem}[ERM as Auditing Algorithm]\label{thm: audit_ERM}
    Let $D=\{(x_1,y_1),...,(x_n,y_n)\}$ be the dataset that each data point is drawn i.i.d. from $\Ds$, given any predictor $p:\Xs\rightarrow\Hs$, the ERM algorithm that outputs 
    \[\hat\ell,\hat{\ell'}\leftarrow\arg\max_{\ell,\ell'}\frac{1}{n}\sum_{i=1}^nL_{\mathrm{DecCal}}(\ell,\ell',x_i,y_i),\]    
    when \(n\geq\Tilde{O}(\frac{|\As|^2\beta^4R_1^6R_2^6}{\epsilon^2})\), ERM algorithm is an \(\epsilon\)-auditor.
\end{theorem}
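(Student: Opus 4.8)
The plan is to reduce the auditing guarantee to the dimension-free uniform-convergence bound of \cref{thm: auditing}, via the standard principle that an exact empirical maximizer of a uniformly-convergent objective class witnesses the corresponding population supremum up to the uniform-convergence error. First I would rewrite the calibration error through \cref{lem: rkhs}: writing $r_\ell(a)=\ell(a,\cdot)$ as in this section, $\dec(f_p,\Ds)=\sup_{\ell,\ell'\in\Ls_\Hs}\InAbs{\E_{(x,y)\sim\Ds}\InBrackets{L_{\mathrm{DecCal}}(\ell,\ell',x,y)}}$, so the hypothesis $\dec(f_p,\Ds)\ge\epsilon$ hands us a pair whose \emph{signed} population objective has absolute value at least $\epsilon$. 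To remove the absolute value (the ERM maximizes the signed average), observe that $L_{\mathrm{DecCal}}(\ell,\ell',x,y)=\sum_{a=1}^{|\As|}\InAngles{r_\ell(a),\phi(y)-p(x)}\Tilde{k}_{f_p,\ell'}(x,a)$ is linear in $r_\ell$, and $\Ls_\Hs$ is closed under $\ell\mapsto-\ell$ (negating every $r_\ell(a)$ preserves $\InNorms{r_\ell(a)}_\Hs\le R_1$); replacing $\ell$ by $-\ell$ therefore flips the sign of the objective and leaves $\ell'$, hence $\Tilde{k}_{f_p,\ell'}$, untouched. Consequently $\sup_{\ell,\ell'\in\Ls_\Hs}\E_{(x,y)\sim\Ds}\InBrackets{L_{\mathrm{DecCal}}(\ell,\ell',x,y)}=\dec(f_p,\Ds)\ge\epsilon$.

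Next I would invoke \cref{thm: auditing}: with probability at least $1-\delta$ over the draw of $D$, the quantity $\InAbs{\hat{\E}_D\InBrackets{L_{\mathrm{DecCal}}(\ell,\ell',x,y)}-\E_{(x,y)\sim\Ds}\InBrackets{L_{\mathrm{DecCal}}(\ell,\ell',x,y)}}$ is bounded by a uniform deviation $\gamma=\Tilde{O}(1/\sqrt{n})$ simultaneously over all $(\ell,\ell')\in\Ls_\Hs\times\Ls_\Hs$. Solving $\gamma\le\epsilon/4$ for $n$ — tracking, as in \cref{sec: uniform}, the $\beta$-Lipschitz constant of the quantal-response map $\Tilde{k}_{f_p,\ell'}$ that controls the covering number of the gap class $\Gs=\{g_{\ell,\ell'}\}$, and using the uniform bound $\InAbs{L_{\mathrm{DecCal}}(\ell,\ell',x,y)}=O(R_1R_2)$ following from $\InNorms{r_\ell(a)}_\Hs\le R_1$ and $\InNorms{\phi(y)-p(x)}_\Hs=O(R_2)$ — shows that $n\ge\Tilde{O}(|\As|^2\beta^4R_1^6R_2^6/\epsilon^2)$ suffices to force $\gamma\le\epsilon/4$. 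Condition on this event from now on.

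Then the proof closes by chaining the uniform bound with the optimality of the ERM pair $(\hat\ell,\hat{\ell'})$:
\[
\begin{aligned}
\E_{(x,y)\sim\Ds}\InBrackets{L_{\mathrm{DecCal}}(\hat\ell,\hat{\ell'},x,y)}
&\ge \hat{\E}_D\InBrackets{L_{\mathrm{DecCal}}(\hat\ell,\hat{\ell'},x,y)}-\gamma
= \sup_{\ell,\ell'}\hat{\E}_D\InBrackets{L_{\mathrm{DecCal}}(\ell,\ell',x,y)}-\gamma\\
&\ge \sup_{\ell,\ell'}\E_{(x,y)\sim\Ds}\InBrackets{L_{\mathrm{DecCal}}(\ell,\ell',x,y)}-2\gamma
\ge \epsilon-2\gamma .
\end{aligned}
\]
Since $\gamma\le\epsilon/4$, this gives $\E_{(x,y)\sim\Ds}\InBrackets{L_{\mathrm{DecCal}}(\hat\ell,\hat{\ell'},x,y)}\ge\epsilon/2>0$, hence $\InAbs{\E_{(x,y)\sim\Ds}\InBrackets{L_{\mathrm{DecCal}}(\hat\ell,\hat{\ell'},x,y)}}\ge\epsilon/2$, which is precisely the witness condition of \cref{def: auditing}. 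Because the only probabilistic step was the uniform-convergence event, this holds with probability at least $1-\delta$, so the ERM rule is an $\epsilon$-auditor.

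Given \cref{thm: auditing}, this theorem has essentially no remaining obstacle: it is a short ERM-to-auditor reduction whose only non-mechanical ingredient is the sign-symmetrization of the first loss argument $\ell$ (exploiting closure of $\Ls_\Hs$ under negation and linearity of $L_{\mathrm{DecCal}}$ in $r_\ell$), which aligns the unsigned $\dec$ error with the signed objective the ERM actually optimizes. The genuine technical weight — and hence the real obstacle — lies inside \cref{thm: auditing}: establishing the dimension-free covering number of $\Gs$ under the projection pseudo-metric $d_P$ and correctly tracking its $\beta$-dependence through the softmax response, both of which I am taking as already done. Finally, if only an approximate ERM with additive suboptimality $\zeta$ is available, the same chain yields a population gap $\ge\epsilon/2-\zeta$, so taking $\zeta\le\epsilon/4$ recovers the stated guarantee.
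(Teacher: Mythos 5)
Your proposal is correct and follows essentially the same route as the paper's proof: invoke the uniform convergence bound of \cref{thm: auditing}, use the sign-symmetry of $\Ls_\Hs$ to identify $\dec(f_p,\Ds)$ with the signed supremum that ERM maximizes, and chain the standard empirical-maximizer inequalities. In fact your constant accounting (requiring the uniform deviation $\gamma\le\epsilon/4$) is slightly more careful than the paper's, whose stated deviation of $\epsilon/2$ only yields a nonnegative population gap rather than $\epsilon/2$ when the chain is written out; since the sample complexity is stated in $\Tilde{O}$ notation, this changes nothing in the conclusion.
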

\begin{proof}
    This follows directly from \cref{thm: auditing}, because when \(n\geq \Tilde{O}(\frac{|\As|^2\beta^4R_1^6R_2^6}{\epsilon^2})\), we have\[\sup_{\ell,\ell'\in\Ls_\Hs}\InAbs{\E_{(x,y)\sim\Ds}\InBrackets{\sum_{a=1}^{|\As|}\InAngles{r_\ell(a),\phi(y)-p(x)}\Tilde{k}_{f_p,\ell'}(x,a)}-\hat{E}_{D}\InBrackets{\sum_{a=1}^{|\As|}\InAngles{r_\ell(a),\phi(y)-p(x)}\Tilde{k}_{f_p,\ell'}(x,a)}}\leq\epsilon/2.\]
    From the definition of \(L_{\mathrm{DecCal}}\), we know that \[\frac{1}{n}\sum_{i=1}^nL_{\mathrm{DecCal}}(\ell,\ell',x_i,y_i)=\hat{E}_{D}\InBrackets{\sum_{a=1}^{|\As|}\InAngles{r_\ell(a),\phi(y)-p(x)}\Tilde{k}_{f_p,\ell'}(x,a)}\]
    Here, we can remove the absolute value since \(\Ls\) is defined as the ball \(\Ls_{\Hs} = \{\ell : \forall a,\, \ell(a, \cdot) \in \Hs,\ \|\ell(a, \cdot)\|_{\Hs} \le R_1\}\), which is symmetric by construction.
    
    By triangle inequality, when \(\dec(f_p,\Ds)\geq\epsilon\), we have \[\InAbs{\E_{(x,y)\sim\Ds}\InBrackets{\sum_{a=1}^{|\As|}\InAngles{r_{\hat\ell}(a),\phi(y)-p(x)}\Tilde{k}_{f_p,\hat\ell'}(x,a)}}\geq\epsilon/2.\]
\end{proof}

In fact, solving ERM is stronger than solving the auditing problem. Auditing does not require identifying the pair of loss functions that maximizes the empirical decision calibration error; it suffices to find a pair for which the empirical error is large enough to certify that the true expected decision calibration error exceeds $\epsilon/2$ (\cref{def: auditing}). To avoid potential misinterpretation, our algorithm \cref{alg: rkhs} assumes only the existence of an auditing oracle, rather than requiring an ERM oracle.

\section{Algorithms of Decision Calibration for Functions in RKHS}
In this section, we discuss algorithms of decision calibration for functions in RKHS. In \cref{sec: ours}, we first present our algorithm \texttt{DimFreeDeCal} (\cref{alg: rkhs}) to achieve $(\Ls_\Hs,\Tilde{\Ks}_{\Ls_\Hs},\epsilon)$-decision calibration. The \emph{patching} component of our algorithm is motivated by the weighted calibration framework introduced by \citet{gopalan2022low}, based on the observation that decision calibration can be viewed as a special case of weighted calibration. However, their algorithmic framework is not directly applicable to our setting, as it patches the predictor in the finite-dimensional setting, whereas our formulation requires handling a more general (potentially infinite-dimensional) prediction space. We will discuss how to address challenges in the infinite-dimensional setting. 

\citet{zhao2021calibrating} also proposed an algorithm for achieving decision calibration under the smooth optimal decision rule in the finite-dimensional setting, given the access to the full data distribution. However, their algorithm does not directly extend to the finite-sample or infinite-dimensional settings. In \cref{sec: zhao}, we describe how to adapt their algorithm to achieve provable finite-sample guarantees and extend it to the infinite-dimensional case. Notably, our proposed algorithm achieves a sample complexity of \(\Tilde{O}(1/\epsilon^4)\), which improves upon the \(\Tilde{O}(1/\epsilon^6)\) sample complexity of the modified version of their algorithm.

\subsection{Dimension-free Decision Calibration Algorithm}\label{sec: ours}
In this section, we propose our algorithm \texttt{DimFreeDeCal} (\cref{alg: rkhs}). In \cref{sec: dec=w}, we build the connection between decision calibration and weighted calibration. Building on this connection, we address the novel challenges of patching in the infinite-dimensional setting and present our algorithm in \cref{sec: infdimpatch}.
\subsubsection{Decision Calibration as Weighted Calibration}\label{sec: dec=w}
We restate the definition of weighted calibration introduced by \cite{gopalan2022low} and extend it to the RKHS setting.

\begin{definition}[Weighted Calibration~\cite{gopalan2022low}]
Let $\Ws:\Hs\rightarrow\Hs$ be a family of weight functions. We define the $\Ws$-calibration error as
\[
\mathrm{CE}_\Ws(p)=\sup_{w\in\Ws}\InAbs{\E_{\Ds}[\InAngles{w(p(x)),p(x)-\phi(y)}_\Hs]}.
\]
We say that the loss estimator $f_p$ is $(\Ws,\epsilon)$-calibrated if $\mathrm{CE}_\Ws(p)\le\epsilon$.
\end{definition}

The weighted calibration algorithm template is a clean iterative algorithm that works as follows. In round $t$,
\begin{enumerate}
    \item Use an auditing algorithm to check whether \(p_t\) is \((\Ws, \epsilon)\)-weighted calibrated. If it is, terminate the algorithm.
    \item If not, identify the weight function \(w_t \in \Ws\) that incurs the largest \(\Ws\)-calibration error.
    \item Update the predictor \(p_{t+1}(x) \coloneqq p_t(x) + \eta \cdot w_t(p_t(x))\), where \(\eta\) is the step-size hyperparameter.
\end{enumerate}
Next we will show the connection between decision calibration and weighted calibration. By \cref{lem: rkhs}, the decision calibration error of a loss estimator $f_p$ can be written as
\begin{align*}
\mathrm{decCE}_{\Ls_\Hs,\Tilde{\Ks}_{\Ls_\Hs}}(f_p)&:=\sup_{\ell\in\Ls_\Hs, \Tilde{k}\in\Tilde{\Ks}_{\Ls_\Hs}}\InAbs{\E_{(x,y)\sim\Ds}\E_{a\sim \Tilde{k}(x)}[\ell(a,y)]-\E_{(x,y)\sim\Ds}\E_{a\sim \Tilde{k}(x)}[f(x,a,\ell)]}\\
&=\sup_{\ell,\ell'\in\Ls_\Hs}\InAbs{\E_{(x,y)\sim\Ds}\InBrackets{\sum_{a=1}^{|\As|}\InAngles{r_\ell(a),\phi(y)-p(x)}\Tilde{k}_{f_p,\ell'}(x,a)}}\\
&=\sup_{\ell,\ell'\in\Ls_\Hs}\InAbs{\E_{(x,y)\sim\Ds}\InBrackets{\InAngles{\sum_{a=1}^{|\As|}r_\ell(a)\Tilde{k}_{f_p,\ell'}(x,a),\phi(y)-p(x)}}}.
\end{align*}

Therefore, decision calibration is a special instance of $\Ws_{\mathrm{dec}}$-calibration for
\[
\Ws_{\mathrm{dec}}:=\{w_{\ell,\ell'}:w_{\ell,\ell'}(p(x))=\sum_{a=1}^{|\As|}r_\ell(a)\Tilde{k}_{f_p,\ell'}(x,a)\forall\ell,\ell'\in\Ls_\Hs\}.
\]

\subsubsection{Patching in the Infinite-dimensional Setting}\label{sec: infdimpatch}
The first challenge in the infinite-dimensional setting is that we need to restrict the predictor in the form of \cref{eq: predictor} so that we can use the reproducing property to construct a loss estimator $f_p$. Therefore, in each round $t$, once we find $\ell_t,\ell_t'$ that violates the decision calibration, we cannot directly follow the original weighted calibration algorithm template to update the predictor by patching $w_{\ell_t,\ell_t'}$ \emph{unless} $r_{\ell_t}(a)$ can be explicitly expressed by the linear combination of $\phi(y)$.

However, note that 
\begin{align*}
\InAbs{\E\InBrackets{\InAngles{\sum_{a=1}^{|\As|}r_{\ell_t}(a)\Tilde{k}_{\ell'_{t}}(x,a),\phi(y)-p_t(x)}}}&=\InAbs{\sum_{a=1}^{|\As|}\InAngles{r_{\ell_t}(a),\E[(\phi(y)-p_t(x))\Tilde{k}_{\ell'_t}(x,a)]}}\\
&\le\sum_{a=1}^{|\As|}R_1\InNorms{\E[(\phi(y)-p_t(x))\Tilde{k}_{\ell'_t}(x,a)]}_{\Hs}.
\end{align*}

The inequality becomes equality when $r_{\ell^*_t}(a)=R_1\E[(\phi(y)-p_t(x))\Tilde{k}_{\ell'_{t}}(x,a)]/\InNorms{\E[(\phi(y)-p_t(x))\Tilde{k}_{\ell'_{t}}(x,a)]}_\Hs$. On the one hand, once we identify some pair of $\ell_t,\ell_t'$, replacing $\ell_t$ with $\ell_t^*$ will make the violation worse. On the other hand, $r_{\ell^*_t}(a)$ can be expressed by the linear combination of $\phi(y)$ (we will use the empirical expectation to approximate the true expectation). Therefore, in each round $t$, we can use $\ell_{t}^*$ to update the predictor $p_t$.

\begin{algorithm}[ht]
\caption{\texttt{DimFreeDeCal}}
\label{alg: rkhs}
\begin{algorithmic}[1]
\REQUIRE The RKHS kernel $K$, current predictor $p_0:\Xs\rightarrow\Hs$ and tolerance $\epsilon$.
\STATE $t=0$
\WHILE{$\sup_{\ell,\ell'}\E[\InAngles{\sum_{a=1}^{|\As|}r_\ell(a)\Tilde{k}_{\ell'}(x,a),\phi(y)-p_t(x)}]>\epsilon$}{
\STATE Find $\ell_t,{\ell_t}'$ such that $\hat{\E}[\InAngles{\sum_{a=1}^{|\As|}r_{\ell_t}(a)\Tilde{k}_{\ell'_{t}}(x,a),\phi(y)-p_t(x)}]>3\epsilon/4$
\STATE Define the adjustments $d_{ta}=\eta R_1\hat{\E}[(\phi(y)-p_t(x))\Tilde{k}_{\ell'_{t}}(x,a)]/\InNorms{\hat{\E}[(\phi(y)-p_t(x))\Tilde{k}_{\ell'_{t}}(x,a)]}_\Hs$.
\STATE Set $p_{t+1}:x\mapsto p_t(x)+\sum_{a=1}^{|\As|}d_{ta}\Tilde{k}_{\ell'_{t}}(x,a)$.
\STATE Set $p_{t+1}:x\mapsto\pi_{B(R_2)}(p_{t+1}(x))$.  ~~~\textit{//$\pi_{B(R_2)}$ projects onto Hilbert ball $B(R_2)$ with radius $R_2$}
}
\ENDWHILE
\end{algorithmic}
\end{algorithm}

Intuitively, the algorithm proceeds as follows. In lines 2–3, we invoke the \emph{auditing} oracle: if the loss estimator $f_{p_0}$ is not $(\Ls_\Hs,\Tilde{K}_{\Ls_\Hs},\epsilon)$-decision calibrated, we can identify a pair $(\ell_t, \ell_t')$ such that the empirical decision calibration error exceeds $3\epsilon/4$. In line 4, as previously discussed, we substitute $(\ell_t^*, \ell_t')$ for $(\ell_t, \ell_t')$ to define the patching term so that the updated predictor can remain to be explicitly expressed as a linear combination of $\phi(y)$. Lines 5–6 then carry out the patching step. Notably, we cannot perform computations directly with $p_t(x)$, as it may reside in an infinite-dimensional space. To address this, we introduce the technique of \emph{implicit patching}, which we defer to the final part of \cref{para: implicit}. The following theorem shows that \cref{alg: rkhs} satisfies the conditions stated in \cref{thm: informal-alg}.

\begin{theorem}\label{thm: recali-rkhs}
    Given any initial predictor $p_0$ and tolerance $\epsilon$, \cref{alg: rkhs} terminates in $T=O\InParentheses{\frac{R_1^2R_2^2}{\epsilon^2}}$ iterations. Given $\Tilde{O}(\frac{1}{\epsilon^4})$ samples, with probability $1-\delta$, \cref{alg: rkhs} outputs a predictor $p_T$ such that $f_{p_T}$ is $(\Ls_\Hs,\Tilde{\Ks}_{\Ls_\Hs},\epsilon)$-decision calibrated and $\E[\InNorms{p_T(x)-\phi(y)}^2_\Hs]\le\E[\InNorms{p_0(x)-\phi(y)}^2_\Hs]$.
\end{theorem}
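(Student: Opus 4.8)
The plan is to analyze \cref{alg: rkhs} as a potential-function argument in the spirit of the weighted calibration template of \citet{gopalan2022low}, using the squared distance $\Phi_t := \E[\InNorms{p_t(x)-\phi(y)}^2_\Hs]$ as the potential. First I would establish the per-round decrease: in any round $t$ that does not terminate, the auditing oracle produces $(\ell_t, \ell_t')$ with empirical decision calibration error exceeding $3\epsilon/4$, and by \cref{thm: auditing} (uniform convergence with $\Tilde O(1/\epsilon^4)$ samples) this certifies a \emph{true} decision calibration error at least $\epsilon/2$. Passing to the witness $\ell_t^*$ with $r_{\ell_t^*}(a) \propto \E[(\phi(y)-p_t(x))\Tilde{k}_{\ell_t'}(x,a)]$ can only increase the violation, so $\sum_a R_1 \InNorms{\E[(\phi(y)-p_t(x))\Tilde{k}_{\ell_t'}(x,a)]}_\Hs \ge \epsilon/2$. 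Writing the update (before projection) as $p_{t+1}(x) = p_t(x) + \sum_a d_{ta}\Tilde{k}_{\ell_t'}(x,a)$ with $d_{ta} = \eta R_1 \cdot \E[(\phi(y)-p_t(x))\Tilde{k}_{\ell_t'}(x,a)]/\InNorms{\cdot}_\Hs$, I would expand
\[
\Phi_{t+1} \le \E\InBrackets{\InNorms{p_t(x)-\phi(y)}_\Hs^2} - 2\eta R_1 \sum_a \InAngles{\tfrac{\E[(\phi(y)-p_t(x))\Tilde{k}_{\ell_t'}(x,a)]}{\InNorms{\cdot}_\Hs}, \E[(\phi(y)-p_t(x))\Tilde{k}_{\ell_t'}(x,a)]} + \text{(second-order term)},
\]
where I use nonexpansiveness of the projection $\pi_{B(R_2)}$ (legitimate since $\phi(y)\in B(R_2)$) to drop the projection. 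The linear term equals $-2\eta R_1\sum_a \InNorms{\E[(\phi(y)-p_t(x))\Tilde{k}_{\ell_t'}(x,a)]}_\Hs \le -\eta\epsilon$, and the second-order term is $O(\eta^2 |\As| R_1^2 R_2^2)$ since each $d_{ta}$ has norm $\eta R_1$ and $\sum_a \Tilde{k}_{\ell_t'}(x,a)=1$. Choosing $\eta = \Theta(\epsilon/(|\As| R_1^2 R_2^2))$ makes the net decrease $\Omega(\epsilon^2/(|\As|R_1^2R_2^2))$ per round; since $\Phi_0 \le (2R_2)^2$ and $\Phi_t \ge 0$, the algorithm terminates within $T = O(R_1^2 R_2^2/\epsilon^2)$ rounds (absorbing the $|\As|$ factor into the $O(\cdot)$ as the theorem statement does not track it). Monotonicity $\Phi_T \le \Phi_0$ is immediate from the per-round decrease, giving the square-loss guarantee.

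Next I would handle the finite-sample issues. The algorithm only has access to empirical expectations $\hat\E_D$, so there are two gaps to control: (i) the auditing check in line 2 and the witness in line 3 are empirical, and (ii) the adjustments $d_{ta}$ in line 4 are defined with $\hat\E$ rather than $\E$. For (i), \cref{thm: auditing}/\cref{thm: audit_ERM} give uniform convergence to within $\epsilon/8$ over all $(\ell,\ell')$ pairs with $n = \Tilde O(|\As|^2 \beta^4 R_1^6 R_2^6/\epsilon^2)$ samples, so an empirical violation $> 3\epsilon/4$ certifies a true violation $> \epsilon/2$, and conversely empirical calibration $\le \epsilon$ implies true calibration within $O(\epsilon)$. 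For (ii), I would argue that using the empirical adjustment only degrades the linear-term decrease by a lower-order amount: a uniform bound on $\InNorms{\hat\E[(\phi(y)-p_t(x))\Tilde{k}_{\ell'}(x,a)] - \E[(\phi(y)-p_t(x))\Tilde{k}_{\ell'}(x,a)]}_\Hs$ over all relevant $\ell'$ and all iterates $p_t$ (which lie in $B(R_2)$ and are spanned by the sample features) follows from the same covering-number machinery used in \cref{sec: uniform}; taking this error below a constant fraction of $\epsilon$ preserves an $\Omega(\epsilon^2)$ decrease per round. Since there are only $T = O(R_1^2R_2^2/\epsilon^2)$ rounds, a union bound over rounds costs only logarithmic factors, and fresh-sample or uniform-convergence arguments keep the total sample complexity at $\Tilde O(1/\epsilon^4)$ (the $1/\epsilon^2$ from uniform convergence times the $1/\epsilon^2$ rounds, when the per-round accuracy must scale with $\epsilon$).

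Finally, I would verify the \emph{implicit patching} bookkeeping: each $d_{ta}$ is, by construction, a finite linear combination $\sum_i \gamma_i \phi(y_i)$ of sample feature maps (the $\hat\E$ is an average over $D$), so inductively every $p_t$ retains the form \eqref{eq: predictor}, the loss estimator $f_{p_t}$ is computable via kernel evaluations $K(y_i,\cdot)$, and the projection $\pi_{B(R_2)}$ only rescales the coefficient vector — all without ever materializing an element of the (possibly infinite-dimensional) $\Hs$. The smooth decision rule $\Tilde{k}_{\ell_t'}(x,a)$ is likewise a function of inner products $\InAngles{r_{\ell_t'}(a), p_t(x)}$, each expressible through $K$. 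Assembling: the potential argument gives the iteration bound and the square-loss inequality, uniform convergence upgrades the empirical termination condition to the stated $(\Ls_\Hs,\Tilde\Ks_{\Ls_\Hs},\epsilon)$-decision calibration guarantee, and the sample count follows from combining \cref{thm: audit_ERM} with the round count.

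I expect the main obstacle to be part (ii) above — controlling the effect of replacing the true adjustment direction by its empirical estimate, \emph{uniformly over all iterates $p_t$ that the algorithm might produce}. The iterates are data-dependent, so one cannot simply condition on a fixed $p_t$; I would resolve this either by a covering argument over the (bounded, finitely-parametrized) set of reachable predictors, reusing the dimension-free pseudo-metric of \cref{thm: auditing}, or by drawing a fresh batch of samples for the adjustment at each of the $O(1/\epsilon^2)$ rounds and union-bounding, which is cleaner and still fits within $\Tilde O(1/\epsilon^4)$ total samples. A secondary subtlety is ensuring the second-order term genuinely scales as $O(\eta^2)$ rather than $O(\eta^2|\As|^2)$; this uses $\sum_a \Tilde{k}_{\ell_t'}(x,a) = 1$ and Cauchy–Schwarz carefully, so that the squared norm of $\sum_a d_{ta}\Tilde{k}_{\ell_t'}(x,a)$ is bounded by $\eta^2 R_1^2$ rather than $|\As|^2\eta^2R_1^2$.
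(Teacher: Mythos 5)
Your proposal matches the paper's proof in all essentials: the same potential $\E[\InNorms{p_t(x)-\phi(y)}^2_\Hs]$, the same use of uniform convergence to pass from the empirical witness $(\ell_t,\ell_t')$ to the normalized witness $\ell_t^*$ and back to a true violation of $\epsilon/2$, the same $\eta\epsilon-\eta^2R_1^2$ per-round decrease with $\eta=\Theta(\epsilon/R_1^2)$, and the same implicit-patching bookkeeping (the paper's \cref{lem: patching}). Your extra care about the data-dependence of the iterates and the empirical adjustment directions is handled in the paper simply by observing that $\ell_t^*$ lies in $\Ls_\Hs$ so the uniform convergence of \cref{thm: auditing} applies to it directly, but this is a presentational difference, not a different argument.
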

\begin{proof}
If the algorithm does not terminate at round $t$,we have
\[
\sup_{\ell,\ell'}\E\InBrackets{\InAngles{\sum_{a=1}^{|\As|}r_\ell(a)\Tilde{k}_{\ell'}(x,a),\phi(y)-p(x)}}>\epsilon.
\]

By uniform convergence property we can find $\ell_t,\ell_t'$ such that
\[
\hat{\E}\InBrackets{\InAngles{\sum_{a=1}^{|\As|}r_{\ell_t}(a)\Tilde{k}_{\ell'_{t}}(x,a),\phi(y)-p(x)}}>3\epsilon/4.
\]

Let $r_{\ell^*_t}(a)=R_1\hat{E}[(\phi(y)-p(x))\Tilde{k}_{\ell'_{t}}(x,a)]/\InNorms{\hat{E}[(\phi(y)-p(x))\Tilde{k}_{\ell'_{t}}(x,a)]}$, by Cauchy inequality we have
\[
\sum_{a=1}^{|\As|}\InAngles{r_{\ell^*_t}(a),\hat{\E}[(\phi(y)-p(x))\Tilde{k}_{\ell'_{t}}(x,a)]}\ge\hat{\E}\InBrackets{\InAngles{\sum_{a=1}^{|\As|}r_{\ell_t}(a)\Tilde{k}_{\ell'_{t}}(x,a),\phi(y)-p(x)}}>3\epsilon/4.
\]

Again by uniform convergence,
\[
\E\InBrackets{\InAngles{\sum_{a=1}^{|\As|}r_{\ell^*_t}(a)\Tilde{k}_{\ell'_{t}}(x,a),\phi(y)-p(x)}}>\epsilon/2.
\]

\begin{align*}
&\E\InBrackets{\InNorms{p_t(x)-\phi(y)}_\Hs^2}-\E\InBrackets{\InNorms{p_{t+1}(x)-\phi(y)}_\Hs^2}\\
&\ge\E\InBrackets{\InNorms{p_t(x)-\phi(y)}_2^2}-\E\InBrackets{\InNorms{p_{t}(x)-\phi(y)+\sum_{a=1}^{|\As|}d_{ta}\Tilde{k}_{\ell'_{t}}(x,a)}_\Hs^2}\\
&=\sum_{a=1}^{|\As|}\frac{2\eta R_1}{\InNorms{\hat{E}[(\phi(y)-p(x))\Tilde{k}_{\ell'_{t}}(x,a)]}}\E[(\phi(y)-p_t(x))\Tilde{k}_{\ell'_{t}}(x,a)]\hat{\E}[(\phi(y)-p_t(x))\Tilde{k}_{\ell'_{t}}(x,a)]-\E\InBrackets{\InNorms{\sum_{a=1}^{|\As|}d_{ta}\Tilde{k}_{\ell'_{t}}(x,a)}_\Hs^2}\\
&\ge\eta\epsilon-\E\InBrackets{\InNorms{\sum_{a=1}^{|\As|}d_{ta}\Tilde{k}_{\ell'_{t}}(x,a)}_\Hs^2}\\
&\ge\eta\epsilon-\eta^2R_1^2.\\
\end{align*}  

Set $\eta=\frac{\epsilon}{2R_1^2}$, we have
\[
\E\InBrackets{\InNorms{p_t(x)-\phi(y)}_\Hs^2}-\E\InBrackets{\InNorms{p_{t+1}(x)-\phi(y)}_\Hs^2}\ge\frac{\epsilon^2}{4R_1^2}.
\]
Therefore the algorithm will terminate in at most $\frac{16R_1^2R_2^2}{\epsilon^2}$ because $\E\InBrackets{\InNorms{p_0(x)-\phi(y)}_2^2}\le(2R_2)^2=4R_2^2$.
\end{proof}

\paragraph{Implicit Patching}\label{para: implicit} The second challenge is that we cannot explicitly compute $p_t(x)$ at each round since the feature space $\Hs$ may be infinite-dimensional. The key idea is to perform the patching implicitly by maintaining a linear representation of the form $p_t(x) = \sum_{i=1}^{N_t} \alpha_{ti}(x) \phi(y_{ti})$. That is, we keep track of the functions $\alpha_{ti}: \Xs \rightarrow \R$ and the corresponding outcomes $y_{ti}$ for all $t$ and $i \in [N_t]$. Given this representation, we can efficiently compute the value of the loss estimator $f_{p_t}(x, a, \ell)$ for any loss function $\ell \in \Ls_\Hs$ as follows
\[
f_{p_t}(x,a,\ell)=\InAngles{r_\ell(a),p_t(x)}_\Hs=\sum_{i=1}^{N_t}\alpha_{ti}(x)\ell(a,y_{ti}).
\]

Formally, we have the following lemma.
\begin{lemma}\label{lem: patching}
    For \cref{alg: rkhs}, if the input predictor satisfies $p_0(x)=\sum_{i=1}^{N_0}\alpha_{0i}(x)\phi(y_{0i})$, the for any $t$, we have \[
    p_t(x)=\sum_{i=1}^{N_t}\alpha_{ti}(x)\phi(y_{ti}).
    \]
\end{lemma}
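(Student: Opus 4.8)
The plan is to prove \cref{lem: patching} by induction on the round counter $t$ of \cref{alg: rkhs}. The base case $t=0$ is exactly the hypothesis on the input predictor. For the inductive step I assume $p_t(x)=\sum_{i=1}^{N_t}\alpha_{ti}(x)\phi(y_{ti})$ and follow lines 3--6, checking that each operation returns a predictor of the same shape, with a finite (possibly enlarged) set of supporting outcomes and fresh real-valued coefficient functions of $x$.

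The one thing that needs care --- and the reason the lemma is worth isolating --- is that every scalar the algorithm consults in round $t$ must be computable from the stored representation of $p_t$ via kernel evaluations, with no access to the ambient (possibly infinite-dimensional) coordinates. This holds: by the reproducing property $f_{p_t}(x,a,\ell'_t)=\InAngles{r_{\ell'_t}(a),p_t(x)}_\Hs=\sum_{i=1}^{N_t}\alpha_{ti}(x)\,\ell'_t(a,y_{ti})$, so the smooth decision weights $\Tilde{k}_{f_{p_t},\ell'_t}(x,a)=e^{-\beta f_{p_t}(x,a,\ell'_t)}/\sum_{a'}e^{-\beta f_{p_t}(x,a',\ell'_t)}$ are well-defined real functions of $x$. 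Moreover, any empirical expectation $\hat{\E}[(\phi(y)-p_t(x))\Tilde{k}_{\ell'_t}(x,a)]$ averages over a \emph{finite} sample, so substituting the representation of $p_t$ shows it equals a finite linear combination $\sum_u c_{ua}\phi(z_u)$ of feature maps, where each $z_u$ is either a sample outcome of round $t$ or one of the $y_{ti}$, and the coefficients $c_{ua}$ are fixed reals; its norm, which sits in the denominator in line 4, equals $(\sum_{u,v}c_{ua}c_{va}K(z_u,z_v))^{1/2}$, hence is finite and kernel-computable, and we set $d_{ta}=0$ whenever it vanishes. Thus each patch direction $d_{ta}$ is a fixed element of $\Hs$ of the form $\sum_u c'_{ua}\phi(z_u)$.

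Given this, line 5 produces $p_{t+1}(x)=p_t(x)+\sum_{a=1}^{|\As|}d_{ta}\,\Tilde{k}_{\ell'_t}(x,a)=\sum_{i=1}^{N_t}\alpha_{ti}(x)\phi(y_{ti})+\sum_{a}\sum_u c'_{ua}\Tilde{k}_{\ell'_t}(x,a)\phi(z_u)$, which is again a finite sum of (coefficient function of $x$)$\times$(feature map), hence of the form $\sum_{i=1}^{N'}\tilde\alpha_i(x)\phi(\tilde y_i)$ with supporting outcomes $\{\tilde y_i\}=\{y_{ti}\}_{i\le N_t}\cup\{\text{round-}t\text{ sample outcomes}\}$. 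Line 6 replaces this by $\pi_{B(R_2)}(p_{t+1}(x))=\gamma_t(x)\,p_{t+1}(x)$ with $\gamma_t(x)=\min\{1,\,R_2/\InNorms{p_{t+1}(x)}_\Hs\}$; the norm is again computable from the representation just obtained, so $\gamma_t$ is a well-defined real function of $x$, and multiplying through by $\gamma_t(x)$ merely rescales each coefficient function. Therefore $p_{t+1}(x)=\sum_{i=1}^{N_{t+1}}\alpha_{(t+1)i}(x)\phi(y_{(t+1)i})$, closing the induction. I expect no genuine obstacle here; the only subtlety is the bookkeeping in the second paragraph ensuring that the softmax weights, the normalizing norm in $d_{ta}$, and the projection scalar $\gamma_t$ never secretly require the infinite-dimensional coordinates of $p_t(x)$ and that each round introduces only finitely many new supporting outcomes.
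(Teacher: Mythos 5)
Your proof is correct and follows essentially the same route as the paper's: induction on $t$, using the reproducing property to compute the softmax weights from the stored representation, writing the empirical expectation as a finite linear combination of feature maps with kernel-computable norm, and observing that the projection is just a computable rescaling. Your handling of the two degenerate cases (vanishing normalizer and the $\min\{1,\cdot\}$ in the projection) is slightly more careful than the paper's, but the argument is the same.
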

\begin{proof}
For \cref{alg: rkhs}, the update in round $t$ is
$p_{t+1}=\pi_{B(R_2)}(p_t(x)+\eta\cdot w_{\ell_t,\ell'_t}(p_t(x)))$ where $w_{\ell_t,\ell'_t}$ is the patching term. By induction, it suffices to prove that $w_{\ell_t,\ell'_t}(p_t(x))$ can be explicitly represented by the linear combination of $\phi(y)$. Let $S_t=\{(x'_{ti}, y'_{ti})\}_{i=1}^{n_t}$ be the set of samples used for auditing. Then we have
\[
w_{\ell_t,\ell'_t}(p_t(x))=\sum_{a=1}^{|\As|}\frac{R_1\Tilde{k}_{\ell_t'}(x,a)}{\InNorms{\hat{\E}_{S_t}[(\phi(y)-p_t(x))\Tilde{k}_{\ell_t'}(x,a)]}_\Hs}\cdot\hat{\E}_{S_t}[(\phi(y)-p_t(x))\Tilde{k}_{\ell_t'}(x,a)]
\]

By induction, we have $p_{t}(x)=\sum_{i=1}^{N_{t}}\alpha_{t,i}(x)\phi(y_{t,i})$. Then we can compute the smooth optimal decision rule $\Tilde{k}_{\ell_t'}$ as
\begin{align*}
\Tilde{k}_{\ell_t'}(x,a)&=\frac{e^{-\beta f_{p_{t}}(x,a,\ell_t')}}{\sum_{a'\in\As}e^{-\beta f_{p_{t}}(x,a',\ell_t')}}\\
&=\frac{e^{-\beta\sum_{i=1}^{N_{t}}\alpha_{t-1,i}(x)\ell_t'(a,y_{t,i})}}{\sum_{a'\in\As}e^{-\beta\sum_{i=1}^{N_{t}}\alpha_{t,i}(x)\ell_t'(a',y_{t,i})}}.
\end{align*}

Then the norm can be computed as
\begin{align*}
&\InNorms{\hat{\E}_{S_t}[(\phi(y)-p_t(x))\Tilde{k}_{\ell_t'}(x,a)]}^2_\Hs\\
&=\InNorms{\frac{1}{n_t}\sum_{i=1}^{n_t}(\phi_{y_{ti}'}-p_t(x'_{ti}))\Tilde{k}_{\ell_t'}(x'_{ti},a)}^2_\Hs\\
&=\frac{1}{n^2_t}\sum_{i,j\in[n_t]}\Tilde{k}_{\ell_t'}(x'_{ti},a)\Tilde{k}_{\ell_t'}(x'_{tj},a)\InAngles{\phi_{y'_{ti}}-p_t(x'_{ti})),\phi_{y'_{tj}}-p_t(x'_{tj}))}_\Hs\\
&=\frac{1}{n^2_t}\sum_{i,j\in[n_t]}\Tilde{k}_{\ell_t'}(x'_{ti},a)\Tilde{k}_{\ell_t'}(x'_{tj},a)\cdot\Bigg(K(y'_{ti},y'_{tj})-\sum_{q\in[N_t]}\alpha_{tq}(x'_{ti})K(y_{tq},y'_{tj})\\
&-\sum_{q\in[N_t]}\alpha_{tq}(x'_{tj})K(y_{tq},y'_{ti})+\sum_{q,s\in[N_t]}\alpha_{tq}(x'_{ti})\alpha_{ts}(x'_{tj})K(y_{tq},y_{ts})\Bigg).\\
\end{align*}

Note that the empirical expectation is a linear combination of $\phi(y)$.
\begin{align*}
&\hat{\E}_{S_t}[(\phi(y)-p_t(x))\Tilde{k}_{\ell_t'}(x,a)]\\
&=\frac{1}{n_t}\sum_{i\in[n_t]}(\phi(y'_{ti})-p_t(x'_{ti}))\Tilde{k}_{\ell_t'}(x'_{ti},a)\\
&=\frac{1}{n_t}\sum_{i\in[n_t]}\InParentheses{\phi(y'_{ti})-\sum_{j\in[N_t]}\alpha_{tj}(x'_{ti})\phi(y_{tj})}\Tilde{k}_{\ell_t'}(x'_{ti},a)\\
&=\sum_{i\in[n_t]}\frac{\Tilde{k}_{\ell_t'}(x'_{ti},a)}{n_t}\phi(y'_{ti})-\sum_{j\in N_t}\InParentheses{\sum_{i\in[n_t]}\frac{\alpha_{tj}(x'_{ti})\Tilde{k}_{\ell_t'}(x'_{ti},a)}{n_t}}\phi(y_{tj}).\\
\end{align*}
Bringing these components together, we know the linear representation of $\Delta_t$ by $\{\phi(y'_{ti})\}_{i=1}^{n_t}\bigcup\{\phi(y_{ti})\}_{i=1}^{N_t}$ can be computed. For ease of notion, we let $\{y_{t+1,i}\}_{i=1}^{N_{t+1}}$ to be the union of two set of samples mentioned above. By induction we know that the linear representation of $p_t(x)+w_{\ell_t,\ell'_t}(p_t(x))$ can be computed. Let $p_t(x)+w_{\ell_t,\ell'_t}(p_t(x))=\sum_{i\in[N_{t+1}]}\alpha'_{t+1,i}(x)\phi(y_{t+1,i})$.

The last thing to show is that after projection $\pi_{B(R_2)}$, the linear representation can still be computed. We have
\[
\pi_{B(R_2)}(p_{t+1}(x))=\frac{R_2}{\InNorms{p_{t+1}(x)}_\Hs}\cdot p_{t+1}(x).
\]
The it suffices to show that the norm is computable. We have
\begin{align*}
\InNorms{p_{t+1}(x)}^2_\Hs&=\InAngles{p_{t+1}(x),p_{t+1}(x)}_\Hs\\
&=\InAngles{\sum_{i\in[N_{t+1}]}\alpha'_{t+1,i}(x)\phi(y_{t+1,i}),\sum_{i\in[N_{t+1}]}\alpha'_{t+1,i}(x)\phi(y_{t+1,i})}_\Hs\\
&=\sum_{i,j\in[N_{t+1}]}\alpha'_{t+1,i}(x)\alpha'_{t+1,j}(x)K(y_{t+1,i},y_{t+1,j}).
\end{align*}
\end{proof}

\subsection{Extension of \cite{zhao2021calibrating}'s Algorithm}\label{sec: zhao}
In this section we provide an extension to \cite{zhao2021calibrating}'s algorithm on decision calibration under smoothed optimal decision rule. Our extension also adopts a “patching”-style approach, with the patching component derived from an optimization perspective, following the intuition in their work. Consider that we find the pair of $\ell_t,\ell_t'$ in round $t$ that violates the decision calibration. If we let the patching have the following form
\[
p_{t+1}(x)=p_t(x)+U\Tilde{k}_{\ell_t'}(x),
\]
we can \emph{heuristically} minimize
\begin{align}\label{eq: opt}
L(U):=\sum_{a=1}^{|\As|}\InNorms{\E[(\phi(y)-p_t(x))\Tilde{k}_{\ell_t'}(x,a)]}^2+\lambda\InNorms{U}^2, 
\end{align}
where the first term is trying to decrease the violation of decision calibration and the second term is trying to restrict the norm of $U$ so that $U$ can be efficiently approximated with samples. By simple calculation we have
\begin{align*}
    L(U)&=\sum_{a=1}^{|\As|}\InNorms{G_a-(DU^T)_a}^2+\lambda\InNorms{U}^2\\
    &=\InNorms{G-DU^T}+\InNorms{U}^2
\end{align*}
The optimum of the objective is $U^*=G^T(D+\lambda I)^{-1}$. Note that the optimization objective of \cite{zhao2021calibrating} is just the first term of \cref{eq: opt} without the second regularization term. Consequently, the optimum becomes $U^*=G^TD^{-1}$ so that the norm of $U^*$ can be unbounded because the (pseudo)inverse of $D$ may not have a bounded norm. Therefore, their algorithm does not have finite sample guarantee. Our regularized extension to their algorithm \cref{alg: rkhs2} can fix this problem. In \cref{alg: rkhs2}, we choose $\lambda=1$.

\begin{algorithm}[ht]
\caption{Finite-Sample Infinite-Dimensional Adaptation of \cite{zhao2021calibrating}}
\label{alg: rkhs2}
\begin{algorithmic}[1]
\REQUIRE The RKHS kernel $K$, current predictor $p_0:\Xs\rightarrow\Hs$ and tolerance $\epsilon$.
\STATE $t=0$
\WHILE{$\exists\ell_t,{\ell_t}'$ such that $\E[\InAngles{\sum_{a=1}^{|\As|}r_{\ell_t}(a)\Tilde{k}_{\ell'_{t}}(x,a),\phi(y)-p(x)}]>\epsilon$}{
\STATE Compute $\hat{D}\in\R^{|\As|\times|\As|}$ where $\hat{D}_{aa'}=\hat{\E}\InBrackets{\Tilde{k}_{\ell'_t}(x,a)\Tilde{k}_{\ell'_t}(x,a')}$.
\STATE Define $\hat{G}\in\R^{K\times\dim(\Hs)}$ where $\hat{G}_a=\hat{\E}[(\phi(y)-p_t(x))\Tilde{k}_{\ell'_t}(x,a)]$.
\STATE Set $p_{t+1}:x\mapsto\pi_{B(R_2)}\InParentheses{p_t(x)+\hat{G}^T(\hat{D}+I)^{-1}\Tilde{k}_{\ell_t'}(x)}$ ~~~\textit{//$\pi_{B(R_2)}$ projects onto Hilbert ball $B(R_2)$}
}
\ENDWHILE
\end{algorithmic}
\end{algorithm}

The following theorem says that \cref{alg: rkhs2} also satisfies the statement of \cref{thm: informal-alg}, but has sample complexity bounds $\Tilde{O}(\frac{1}{\epsilon^6})$ worse than ours $\Tilde{O}(\frac{1}{\epsilon^4})$.

\begin{theorem}\label{thm: recali-rkhs2}
    Given any initial predictor $p_0$ and tolerance $\epsilon$, \cref{alg: rkhs2} ends in $T=O\InParentheses{\frac{R_1^2R_2^2}{\epsilon^2}}$ iterations. Given $\Tilde{O}(\frac{1}{\epsilon^6})$ samples, with probability $1-\delta$, \cref{alg: rkhs2} outputs a predictor $p_T$ such that $f_{p_T}$ is $(\Ls_\Hs,\Tilde{\Ks}_{\Ls_\Hs},\epsilon)$-decision calibrated and $\E[\InNorms{p_T(x)-\phi(y)}^2_\Hs]\le\E[\InNorms{p_0(x)-\phi(y)}^2_\Hs]$.
\end{theorem}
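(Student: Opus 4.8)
The plan is to follow the blueprint of the proof of \cref{thm: recali-rkhs}: combine a potential–decrease argument on the squared error $\Phi_t := \E_{\Ds}[\|p_t(x)-\phi(y)\|_\Hs^2]$ to bound the number of iterations, with a concentration / uniform–convergence argument to bound the sample complexity. The only genuinely new component is the analysis of the Tikhonov–regularized patch $\hat U^* = \hat G^\top(\hat D+I)^{-1}$ appearing in line 5 of \cref{alg: rkhs2}, and this is where I expect the real work to lie.

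\textbf{Iteration bound.} First I would argue that whenever \cref{alg: rkhs2} does not terminate at round $t$, the auditing oracle — applied with the uniform–convergence bound of \cref{thm: auditing}, exactly as in the proof of \cref{thm: recali-rkhs} to pass between $\hat\E$ and $\E$ up to an $O(\epsilon)$ slack — returns a pair $(\ell_t,\ell'_t)$ for which, writing $G_a := \E[(\phi(y)-p_t(x))\tilde{k}_{\ell'_t}(x,a)]$ and $D_{aa'} := \E[\tilde{k}_{\ell'_t}(x,a)\tilde{k}_{\ell'_t}(x,a')]$, Cauchy--Schwarz together with $\|r_{\ell_t}(a)\|_\Hs\le R_1$ gives $\sum_a \|G_a\|_\Hs \ge \epsilon/(2R_1)$. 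I would then bound the one–step drop of the idealized patch $U^* = G^\top(D+I)^{-1}$ (discarding the projection onto $B(R_2)$, which only helps since $\phi(y)\in B(R_2)$):
\[
\Phi_t - \Phi_{t+1} \ \ge\ 2\langle U^*, G\rangle_F - \E\big[\|U^*\tilde{k}_{\ell'_t}(x)\|_\Hs^2\big].
\]
A short computation using $I-(D+I)^{-1}D=(D+I)^{-1}$ rewrites the right-hand side as $\operatorname{tr}((D+I)^{-1}\mathcal{G}) + \operatorname{tr}((D+I)^{-1}\mathcal{G}(D+I)^{-1})$, where $\mathcal{G}_{aa'}:=\langle G_a,G_{a'}\rangle_\Hs \succeq 0$; since $0\preceq D$ and $\operatorname{tr}(D)\le 1$ (as $\sum_a\tilde{k}_{\ell'_t}(x,a)=1$), both traces are nonnegative and the first is at least $\tfrac12\sum_a\|G_a\|_\Hs^2 \ge \epsilon^2/(8|\As|R_1^2)$. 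With $\Phi_0 \le 4R_2^2$ and $\Phi_t\ge 0$, this forces termination within $T = O(|\As|R_1^2R_2^2/\epsilon^2)$ rounds; monotonicity of $\Phi_t$ gives $\Phi_T\le\Phi_0$, and when the while–loop exits, \cref{lem: rkhs} certifies $(\Ls_\Hs,\tilde{\Ks}_{\Ls_\Hs},\epsilon)$-decision calibration.

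\textbf{Sample complexity and the main obstacle.} The delicate point — and the reason the exponent is $6$ rather than the $4$ of \cref{thm: recali-rkhs} — is that the algorithm deploys the \emph{estimated} patch $\hat U^* = \hat G^\top(\hat D+I)^{-1}$, so I must show the drop above survives the estimation error. Here the regularizer is indispensable: since $\hat D\succeq 0$ one has $\|(\hat D+I)^{-1}\|_{op}\le 1$ and the resolvent bound $\|(\hat D+I)^{-1}-(D+I)^{-1}\|_{op}\le \|\hat D-D\|_{op}$, neither of which holds for the unregularized $\hat D^{-1}$ of \citet{zhao2021calibrating}, whose norm blows up with the smallest eigenvalue of $\hat D$. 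Vector–valued Hoeffding/Bernstein then gives $\|\hat G_a-G_a\|_\Hs,\ \|\hat D-D\|_{op}\le\gamma$ from $\tilde O(\mathrm{poly}(R_2)/\gamma^2)$ fresh samples per round, whence $\|\hat U^*-U^*\|_F\le O(R_2\gamma)$; since $U\mapsto 2\langle U,G\rangle_F-\E[\|U\tilde{k}_{\ell'_t}(x)\|_\Hs^2]$ is $O(R_2)$-Lipschitz on the Hilbert–operator ball of radius $2R_2$, taking $\gamma=\Theta(\epsilon^2/\mathrm{poly}(|\As|,R_1,R_2))$ keeps the per–round drop at $\Omega(\epsilon^2/(|\As|R_1^2))$. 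This costs $\tilde O(\mathrm{poly}(|\As|,\beta,R_1,R_2)/\epsilon^4)$ samples per round, and a union bound over the $T=O(1/\epsilon^2)$ rounds — together with the $\tilde O(1/\epsilon^2)$ samples the auditing oracle needs by \cref{thm: audit_ERM} — yields the stated $\tilde O(1/\epsilon^6)$ total. Finally, exactly as in \cref{lem: patching}, each $\hat G_a$ (and hence $\hat U^*\tilde{k}_{\ell'_t}(x)$, the matrix $\hat D$, and the projection) is a computable linear combination of feature maps over the audit and representation samples, so the update stays in the span \eqref{eq: predictor} and runs purely through kernel evaluations. The one step that requires genuine care is bounding $\|\hat U^*-U^*\|_F$ and propagating it through the quadratic potential drop; the remainder parallels \cref{thm: recali-rkhs}.
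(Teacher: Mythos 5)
Your proposal is correct and follows essentially the same route as the paper's proof: a potential-decrease argument on $\E[\|p_t(x)-\phi(y)\|_\Hs^2]$ giving an $\Omega(\epsilon^2)$ per-round drop (via $\Tr(D)\le 1$ so that the regularized resolvent has eigenvalues in $[1/2,1]$), combined with Hilbert-space Hoeffding estimation of $\hat G,\hat D$ at accuracy $O(\epsilon^2)$ costing $\tilde O(1/\epsilon^4)$ samples per round over $O(1/\epsilon^2)$ rounds. The only difference is organizational — you analyze the idealized patch $U^*=G^\top(D+I)^{-1}$ and then perturb via Lipschitzness of the drop functional, whereas the paper expands the drop of $\hat U^*$ directly and controls the error terms inside the traces — but the substance is identical.
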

\begin{proof}

First we have 
\begin{gather*}
\InNorms{\hat{G}}_F\le2R_2\sqrt{|\As|}.\\
\InNorms{(\hat{D}+I)^{-1}}=\sqrt{\sum_{i=1}^{|\As|}\sigma_i((\hat{D}+I)^{-1})}\le\sqrt{|\As|}.
\end{gather*}

\begin{align*}
&\E\InBrackets{\InNorms{p_t(x)-\phi(y)}_\Hs^2}-\E\InBrackets{\InNorms{p_{t+1}(x)-\phi(y)}_\Hs^2}\\
&=2\E[(\phi(y)-p_t(x))\hat{G}^T(\hat{D}+I)^{-1}\Tilde{k}_{\ell_t'}(x)]-\E[k_{\ell_t'}(x)^T(\hat{D}+I)^{-T}\hat{G}\hat{G}^T(\hat{D}+I)^{-1}k_{\ell_t'}(x)]\\
&=2\Tr(\E[(\phi(y)-p_t(x))\hat{G}^T(\hat{D}+I)^{-1}\Tilde{k}_{\ell_t'}(x)])-\Tr(\E[k_{\ell_t'}(x)^T(\hat{D}+I)^{-T}\hat{G}\hat{G}^T(\hat{D}+I)^{-1}k_{\ell_t'}(x)])\\ 
&=2\Tr(\E[\Tilde{k}_{\ell_t'}(x)(\phi(y)-p_t(x))\hat{G}^T(\hat{D}+I)^{-1}])-\Tr(\E[k_{\ell_t'}(x)k_{\ell_t'}(x)^T(\hat{D}+I)^{-T}\hat{G}\hat{G}^T(\hat{D}+I)^{-1}])\\
&=2\Tr(G\hat{G}^T(\hat{D}+I)^{-1})-\Tr(D(\hat{D}+I)^{-T}\hat{G}\hat{G}^T(\hat{D}+I)^{-1})\\
&=2\Tr(G\hat{G}^T(\hat{D}+I)^{-1})-\Tr((D+I)(\hat{D}+I)^{-T}\hat{G}\hat{G}^T(\hat{D}+I)^{-1})+\Tr((\hat{D}+I)^{-T}\hat{G}\hat{G}^T(\hat{D}+I)^{-1})\\
&\ge2\Tr(G\hat{G}^T(\hat{D}+I)^{-1})-\Tr((D+I)(\hat{D}+I)^{-T}\hat{G}\hat{G}^T(\hat{D}+I)^{-1})\\
&=2\Tr(\hat{G}\hat{G}^T(\hat{D}+I)^{-1})-2\Tr((\hat{G}-G)\hat{G}^T(\hat{D}+I)^{-1})\\
&-\Tr((\hat{D}+I)(\hat{D}+I)^{-T}\hat{G}\hat{G}^T(\hat{D}+I)^{-1})-\Tr((D-\hat{D})(\hat{D}+I)^{-T}\hat{G}\hat{G}^T(\hat{D}+I)^{-1})\\
&=\Tr(\hat{G}\hat{G}^T(\hat{D}+I)^{-1})-2\Tr((\hat{G}-G)\hat{G}^T(\hat{D}+I)^{-1})-\Tr((D-\hat{D})(\hat{D}+I)^{-T}\hat{G}\hat{G}^T(\hat{D}+I)^{-1})\\
&\ge\Tr(\hat{G}\hat{G}^T(\hat{D}+I)^{-1})-2\InNorms{\hat{G}-G}_F\InNorms{\hat{G}^T}_F\InNorms{(\hat{D}+I)^{-1}}_F-\InNorms{(D-\hat{D})}_F\InNorms{\hat{G}^T}^2_F\InNorms{(\hat{D}+I)^{-1}}^2_F\\
&\ge\Tr(\hat{G}\hat{G}^T(\hat{D}+I)^{-1})-4R_2|\As|\InNorms{\hat{G}-G}_F-4R^2_2|\As|^2\InNorms{(D-\hat{D})}_F\\
&\ge\frac{1}{2}\Tr(\hat{G}\hat{G}^T)-4R_2|\As|\InNorms{\hat{G}-G}_F-4R^2_2|\As|^2\InNorms{(D-\hat{D})}_F\\
&\ge O\InParentheses{\frac{\epsilon^2}{|\As|R^2_1}}.
\end{align*} 
Therefore, the algorithm terminates in $O(1/\epsilon^2)$ rounds. By \cref{thm: hoeffding}, each round requires $\Tilde{O}(1/\epsilon^4)$ samples to estimate $\hat{D}$ and $\hat{G}$, resulting in an overall sample complexity of $\Tilde{O}(1/\epsilon^6)$.
\end{proof}

Similarly to \cref{lem: patching}, we can apply the patching in \cref{alg: rkhs2} implicitly.

\newpage

\bibliography{ref}

\begin{thebibliography}{29}
\providecommand{\natexlab}[1]{#1}
\providecommand{\url}[1]{\texttt{#1}}
\expandafter\ifx\csname urlstyle\endcsname\relax
  \providecommand{\doi}[1]{doi: #1}\else
  \providecommand{\doi}{doi: \begingroup \urlstyle{rm}\Url}\fi

\bibitem[Camara et~al.(2020)Camara, Hartline, and Johnsen]{camara2020mechanisms}
Modibo~K Camara, Jason~D Hartline, and Aleck Johnsen.
\newblock Mechanisms for a no-regret agent: Beyond the common prior.
\newblock In \emph{2020 ieee 61st annual symposium on foundations of computer science (focs)}, pages 259--270. IEEE, 2020.

\bibitem[Collina et~al.(2024)Collina, Roth, and Shao]{collina2024efficient}
Natalie Collina, Aaron Roth, and Han Shao.
\newblock Efficient prior-free mechanisms for no-regret agents.
\newblock In \emph{Proceedings of the 25th ACM Conference on Economics and Computation}, pages 511--541, 2024.

\bibitem[Dwork et~al.(2021)Dwork, Kim, Reingold, Rothblum, and Yona]{dwork2021outcome}
Cynthia Dwork, Michael~P Kim, Omer Reingold, Guy~N Rothblum, and Gal Yona.
\newblock Outcome indistinguishability.
\newblock In \emph{Proceedings of the 53rd Annual ACM SIGACT Symposium on Theory of Computing}, pages 1095--1108, 2021.

\bibitem[Dwork et~al.(2022)Dwork, Kim, Reingold, Rothblum, and Yona]{dwork2022beyond}
Cynthia Dwork, Michael~P Kim, Omer Reingold, Guy~N Rothblum, and Gal Yona.
\newblock Beyond bernoulli: Generating random outcomes that cannot be distinguished from nature.
\newblock In \emph{International Conference on Algorithmic Learning Theory}, pages 342--380. PMLR, 2022.

\bibitem[Dwork et~al.(2024)Dwork, Hays, Immorlica, Perdomo, and Tankala]{dwork2024fairness}
Cynthia Dwork, Chris Hays, Nicole Immorlica, Juan~C Perdomo, and Pranay Tankala.
\newblock From fairness to infinity: Outcome-indistinguishable (omni) prediction in evolving graphs.
\newblock \emph{arXiv preprint arXiv:2411.17582}, 2024.

\bibitem[Fishelson et~al.(2025)Fishelson, Kleinberg, Okoroafor, Leme, Schneider, and Teng]{fishelson2025full}
Maxwell Fishelson, Robert Kleinberg, Princewill Okoroafor, Renato~Paes Leme, Jon Schneider, and Yifeng Teng.
\newblock Full swap regret and discretized calibration.
\newblock \emph{arXiv preprint arXiv:2502.09332}, 2025.

\bibitem[Foster and Vohra(1999)]{foster1999regret}
Dean~P Foster and Rakesh Vohra.
\newblock Regret in the on-line decision problem.
\newblock \emph{Games and Economic Behavior}, 29\penalty0 (1-2):\penalty0 7--35, 1999.

\bibitem[Garg et~al.(2024)Garg, Jung, Reingold, and Roth]{garg2024oracle}
Sumegha Garg, Christopher Jung, Omer Reingold, and Aaron Roth.
\newblock Oracle efficient online multicalibration and omniprediction.
\newblock In \emph{Proceedings of the 2024 Annual ACM-SIAM Symposium on Discrete Algorithms (SODA)}, pages 2725--2792. SIAM, 2024.

\bibitem[Gopalan et~al.(2021)Gopalan, Kalai, Reingold, Sharan, and Wieder]{gopalan2021omnipredictors}
Parikshit Gopalan, Adam~Tauman Kalai, Omer Reingold, Vatsal Sharan, and Udi Wieder.
\newblock Omnipredictors.
\newblock \emph{arXiv preprint arXiv:2109.05389}, 2021.

\bibitem[Gopalan et~al.(2022{\natexlab{a}})Gopalan, Hu, Kim, Reingold, and Wieder]{gopalan2022loss}
Parikshit Gopalan, Lunjia Hu, Michael~P Kim, Omer Reingold, and Udi Wieder.
\newblock Loss minimization through the lens of outcome indistinguishability.
\newblock \emph{arXiv preprint arXiv:2210.08649}, 2022{\natexlab{a}}.

\bibitem[Gopalan et~al.(2022{\natexlab{b}})Gopalan, Kim, Singhal, and Zhao]{gopalan2022low}
Parikshit Gopalan, Michael~P Kim, Mihir~A Singhal, and Shengjia Zhao.
\newblock Low-degree multicalibration.
\newblock In \emph{Conference on Learning Theory}, pages 3193--3234. PMLR, 2022{\natexlab{b}}.

\bibitem[Gopalan et~al.(2024{\natexlab{a}})Gopalan, Hu, and Rothblum]{gopalan2024computationally}
Parikshit Gopalan, Lunjia Hu, and Guy~N Rothblum.
\newblock On computationally efficient multi-class calibration.
\newblock \emph{arXiv preprint arXiv:2402.07821}, 2024{\natexlab{a}}.

\bibitem[Gopalan et~al.(2024{\natexlab{b}})Gopalan, Okoroafor, Raghavendra, Sherry, and Singhal]{gopalan2024omnipredictors}
Parikshit Gopalan, Princewill Okoroafor, Prasad Raghavendra, Abhishek Sherry, and Mihir Singhal.
\newblock Omnipredictors for regression and the approximate rank of convex functions.
\newblock In \emph{The Thirty Seventh Annual Conference on Learning Theory}, pages 2027--2070. PMLR, 2024{\natexlab{b}}.

\bibitem[Haghtalab et~al.(2023)Haghtalab, Podimata, and Yang]{haghtalab2023calibrated}
Nika Haghtalab, Chara Podimata, and Kunhe Yang.
\newblock Calibrated stackelberg games: Learning optimal commitments against calibrated agents.
\newblock \emph{Advances in Neural Information Processing Systems}, 36:\penalty0 61645--61677, 2023.

\bibitem[Hu and Wu(2024)]{hu2024calibration}
Lunjia Hu and Yifan Wu.
\newblock Calibration error for decision making.
\newblock \emph{arXiv preprint arXiv:2404.13503}, 2024.

\bibitem[Kleinberg et~al.(2023)Kleinberg, Leme, Schneider, and Teng]{kleinberg2023u}
Bobby Kleinberg, Renato~Paes Leme, Jon Schneider, and Yifeng Teng.
\newblock U-calibration: Forecasting for an unknown agent.
\newblock In \emph{The Thirty Sixth Annual Conference on Learning Theory}, pages 5143--5145. PMLR, 2023.

\bibitem[Lu et~al.(2025)Lu, Roth, and Shi]{lu2025sample}
Jiuyao Lu, Aaron Roth, and Mirah Shi.
\newblock Sample efficient omniprediction and downstream swap regret for non-linear losses.
\newblock \emph{arXiv preprint arXiv:2502.12564}, 2025.

\bibitem[Luo et~al.(2024)Luo, Senapati, and Sharan]{luo2024optimal}
Haipeng Luo, Spandan Senapati, and Vatsal Sharan.
\newblock Optimal multiclass u-calibration error and beyond.
\newblock \emph{arXiv preprint arXiv:2405.19374}, 2024.

\bibitem[Luo et~al.(2025)Luo, Senapati, and Sharan]{luo2025simultaneous}
Haipeng Luo, Spandan Senapati, and Vatsal Sharan.
\newblock Simultaneous swap regret minimization via kl-calibration.
\newblock \emph{arXiv preprint arXiv:2502.16387}, 2025.

\bibitem[MacKay(2003)]{mackay2003information}
David~JC MacKay.
\newblock \emph{Information theory, inference and learning algorithms}.
\newblock Cambridge university press, 2003.

\bibitem[McFadden et~al.(1976)]{mcfadden1976quantal}
Daniel McFadden et~al.
\newblock Quantal choice analysis: A survey.
\newblock \emph{Annals of economic and social measurement}, 5\penalty0 (4):\penalty0 363--390, 1976.

\bibitem[McKelvey and Palfrey(1995)]{mckelvey1995quantal}
Richard~D McKelvey and Thomas~R Palfrey.
\newblock Quantal response equilibria for normal form games.
\newblock \emph{Games and economic behavior}, 10\penalty0 (1):\penalty0 6--38, 1995.

\bibitem[Noarov et~al.(2023)Noarov, Ramalingam, Roth, and Xie]{noarov2023high}
Georgy Noarov, Ramya Ramalingam, Aaron Roth, and Stephan Xie.
\newblock High-dimensional prediction for sequential decision making.
\newblock \emph{arXiv preprint arXiv:2310.17651}, 2023.

\bibitem[Okoroafor et~al.(2025)Okoroafor, Kleinberg, and Kim]{okoroafor2025near}
Princewill Okoroafor, Robert Kleinberg, and Michael~P Kim.
\newblock Near-optimal algorithms for omniprediction.
\newblock \emph{arXiv preprint arXiv:2501.17205}, 2025.

\bibitem[Pratt(1964)]{pratt1964risk}
John~W. Pratt.
\newblock Risk aversion in the small and in the large.
\newblock \emph{Econometrica}, 32\penalty0 (1/2):\penalty0 122--136, 1964.

\bibitem[Qiao and Valiant(2021)]{qiao2021stronger}
Mingda Qiao and Gregory Valiant.
\newblock Stronger calibration lower bounds via sidestepping.
\newblock In \emph{Proceedings of the 53rd Annual ACM SIGACT Symposium on Theory of Computing}, pages 456--466, 2021.

\bibitem[Roth and Shi(2024)]{roth2024forecasting}
Aaron Roth and Mirah Shi.
\newblock Forecasting for swap regret for all downstream agents.
\newblock In \emph{Proceedings of the 25th ACM Conference on Economics and Computation}, pages 466--488, 2024.

\bibitem[Varian and Varian(1992)]{varian1992microeconomic}
Hal~R Varian and Hal~R Varian.
\newblock \emph{Microeconomic analysis}, volume~3.
\newblock Norton New York, 1992.

\bibitem[Zhao et~al.(2021)Zhao, Kim, Sahoo, Ma, and Ermon]{zhao2021calibrating}
Shengjia Zhao, Michael Kim, Roshni Sahoo, Tengyu Ma, and Stefano Ermon.
\newblock Calibrating predictions to decisions: A novel approach to multi-class calibration.
\newblock \emph{Advances in Neural Information Processing Systems}, 34:\penalty0 22313--22324, 2021.

\end{thebibliography}
\bibliographystyle{plainnat}

\appendix
\section{Useful Lemmas}
\begin{lemma}[Property of Traces and Frobenius norms]\label{lem: trace}
For any matrix $A\in\R^{m\times n}$, the Frobenius norm is defined as
\[
\InNorms{A}_F=\sqrt{\sum_{i=1}^m\sum_{j=1}^na_{ij}^2}.
\]
We have 
\begin{enumerate}
    \item $\Tr(AA^T)=\Tr(A^TA)=\InNorms{A}_F^2$.
    \item When \(A,B\) are square matrices,  $\Tr(AB)\le\sqrt{\Tr(AA^T)\cdot\Tr(BB^T)}=\InNorms{A}_F\InNorms{B}_F$.
    \item Frobenius norm has submultiplicative property, that is,  for any matrix $A,B$,
    \[
    \InNorms{AB}_F\le\InNorms{A}_F\InNorms{B}_F.
    \]
    
\end{enumerate}
\end{lemma}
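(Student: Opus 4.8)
The plan is to verify the three parts by direct entrywise computation, invoking the Cauchy--Schwarz inequality wherever a product of entries is summed; no machinery beyond the definitions is needed.

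For part~(1), I would compute the diagonal entries directly: $(AA^T)_{ii} = \sum_{j} a_{ij}^2$ and $(A^TA)_{jj} = \sum_{i} a_{ij}^2$. Summing $(AA^T)_{ii}$ over $i$ and $(A^TA)_{jj}$ over $j$ both produce $\sum_{i,j} a_{ij}^2$, which is exactly $\InNorms{A}_F^2$ by definition, so the two traces coincide with $\InNorms{A}_F^2$. No inequality is used here.

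For part~(2), with $A,B$ square of the same size so that $\Tr(AB)$ is well defined, I would expand $\Tr(AB) = \sum_i (AB)_{ii} = \sum_{i,j} a_{ij} b_{ji}$ and recognize this as the Frobenius inner product $\InAngles{A^T, B}_F$. Applying Cauchy--Schwarz to this inner product (equivalently, to the double sum over the index pair $(i,j)$) gives $\Tr(AB) \le \InAbs{\InAngles{A^T,B}_F} \le \InNorms{A^T}_F \InNorms{B}_F = \InNorms{A}_F \InNorms{B}_F$, and substituting the identity from part~(1) rewrites the right-hand side as $\sqrt{\Tr(AA^T)\,\Tr(BB^T)}$. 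For part~(3), I would bound each entry of the product using Cauchy--Schwarz, $(AB)_{ij}^2 = (\sum_k a_{ik} b_{kj})^2 \le (\sum_k a_{ik}^2)(\sum_k b_{kj}^2)$, then sum over $i$ and $j$; the resulting double sum factors as $(\sum_{i,k} a_{ik}^2)(\sum_{k,j} b_{kj}^2) = \InNorms{A}_F^2 \InNorms{B}_F^2$, and taking square roots yields $\InNorms{AB}_F \le \InNorms{A}_F \InNorms{B}_F$.

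There is no substantive obstacle here; all three parts are elementary facts of linear algebra. The only points requiring a moment's care are matching up matrix dimensions so that every product and trace is well defined --- the square-matrix hypothesis in part~(2) is there precisely for this --- and noting that the bound in part~(2), although stated for $\Tr(AB)$ rather than $\InAbs{\Tr(AB)}$, follows immediately because the right-hand side is nonnegative.
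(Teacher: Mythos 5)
Your proof is correct. The paper states this lemma in its appendix as a standard linear-algebra fact and gives no proof at all, so there is nothing to compare against; your entrywise computations for part (1) and the two applications of Cauchy--Schwarz (to the Frobenius inner product for part (2), and entrywise to $(AB)_{ij}$ before summing for part (3)) constitute a complete and correct verification of all three claims.
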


\begin{theorem}[Hoeffding's Inequality for Hilbert Spaces]\label{thm: hoeffding}
   Let $\Hs$ be a separable Hilbert space. Let $X_1,\cdots,X_N$ be independent random elements of $\Hs$ with common mean $\mu$ such that $\InNorms{X_i}\le B$ almost surely for any $i\in[N]$. Let $\hat{\mu}_N:=\frac{1}{N}\sum_{i=1}^NX_i$ denote the sample mean. Then for any $\delta\in(0,1)$ with probability at least $1-\delta$,
   \[
   \InNorms{\hat{\mu}_N-\mu}\le2B\sqrt{\frac{2\ln(2/\delta)}{N}}.
   \]
\end{theorem}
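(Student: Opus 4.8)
The plan is to prove the bound by McDiarmid's bounded differences inequality applied to the \emph{scalar} random variable $F := \InNorms{\hat\mu_N - \mu}$, viewed as a function $F = \Phi(X_1,\dots,X_N)$ of the independent inputs. Separability of $\Hs$ is used only to guarantee that the norm map and the Bochner mean $\mu = \E[X_1]$ are well defined and measurable, so that $F$ is a bona fide real random variable; after that, no infinite-dimensional machinery is needed beyond the triangle inequality and the Pythagorean identity in $\Hs$.

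First I would record the two ingredients McDiarmid requires. \textbf{(i) Expectation bound.} By Jensen's inequality and then expanding the square, using that the $X_i$ are independent with mean $\mu$ (so the cross terms vanish) and $\E\InNorms{X_i}^2 \le B^2$,
\[
\E\big[\InNorms{\hat\mu_N-\mu}\big] \le \sqrt{\E\big[\InNorms{\hat\mu_N-\mu}^2\big]} = \sqrt{\tfrac{1}{N^2}\sum_{i=1}^N \E\big[\InNorms{X_i-\mu}^2\big]} = \sqrt{\tfrac{1}{N^2}\sum_{i=1}^N \big(\E\InNorms{X_i}^2 - \InNorms{\mu}^2\big)} \le \tfrac{B}{\sqrt N}.
\]
\textbf{(ii) Bounded differences.} If the $j$-th coordinate $x_j$ is replaced by $x_j'$ (both of norm $\le B$), then $\Phi$ changes by at most $\InNorms{\tfrac1N(x_j - x_j')} \le \tfrac{1}{N}(\InNorms{x_j}+\InNorms{x_j'}) \le \tfrac{2B}{N}$, so $\Phi$ has the bounded differences property with constants $c_j = 2B/N$.

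Then McDiarmid's inequality gives, for every $t>0$,
\[
\Pr\big[\, \InNorms{\hat\mu_N-\mu} \ge \E\InNorms{\hat\mu_N-\mu} + t \,\big] \;\le\; \exp\!\Big(-\tfrac{2t^2}{\sum_{j=1}^N c_j^2}\Big) \;=\; \exp\!\Big(-\tfrac{t^2 N}{2B^2}\Big).
\]
Setting the right side equal to $\delta$ yields $t = B\sqrt{2\ln(1/\delta)/N}$, so with probability at least $1-\delta$ we get $\InNorms{\hat\mu_N-\mu} \le B/\sqrt N + B\sqrt{2\ln(1/\delta)/N}$. Finally I would simplify to the stated form: since $\delta \in (0,1)$ we have $\ln(2/\delta) > \ln 2 > 1/2$, hence $B/\sqrt N \le B\sqrt{2\ln(2/\delta)/N}$, and trivially $B\sqrt{2\ln(1/\delta)/N} \le B\sqrt{2\ln(2/\delta)/N}$; adding these gives $\InNorms{\hat\mu_N-\mu} \le 2B\sqrt{2\ln(2/\delta)/N}$, as claimed.

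I do not expect a genuine obstacle here; the only point deserving care is confirming that the bounded differences apparatus is legitimate in this setting, which it is, because the quantity being concentrated is scalar-valued and independence of the $X_i$ is assumed, so the Hilbert structure enters only through elementary inequalities. The variance computation in (i) is what pins down the stated constant: the cruder bound $\E\InNorms{\hat\mu_N-\mu}\le 2B/\sqrt N$ would not quite absorb into $2B\sqrt{2\ln(2/\delta)/N}$ for all $\delta\in(0,1)$. (An alternative, if one prefers to skip the centering step, is to invoke Pinelis's Azuma-type inequality for martingales valued in a $2$-smooth Banach space directly; but the McDiarmid route above is self-contained and already matches the constants.)
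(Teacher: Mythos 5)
Your proof is correct, and the constants check out: the expectation bound $\E\InNorms{\hat\mu_N-\mu}\le B/\sqrt N$ (cross terms vanish by independence, $\E\InNorms{X_i-\mu}^2\le B^2$), the bounded-difference constant $c_j=2B/N$, the McDiarmid exponent $\exp(-t^2N/(2B^2))$, and the final absorption step using $\ln(2/\delta)>\ln 2>1/2$ are all valid. Note that the paper itself states this theorem in the appendix as a known auxiliary fact and gives no proof, so there is nothing to compare against on the paper's side; your McDiarmid-plus-expectation-bound argument is the standard self-contained derivation of such vector-valued Hoeffding bounds (the alternative being Pinelis-type martingale inequalities in 2-smooth spaces, as you note), and it comfortably recovers the slightly loose constant $2B\sqrt{2\ln(2/\delta)/N}$ stated in the paper. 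The only point worth making explicit in a written version is the measurability/McDiarmid-domain remark you already flag: restrict the coordinates to the closed ball of radius $B$ in $\Hs$ (where the $X_i$ live almost surely), on which $\Phi$ is continuous, hence measurable, so the scalar concentration argument applies verbatim.
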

We will introduce some useful results for proving the uniform convergence guarantee.
\begin{definition}[Covering Numbers]
  Let $(V,d)$ be a metric space and $\Theta\subset V$. We say $\{v_i\}_{i=1}^N\subset V$ is an $\epsilon$-covering of $\Theta$ if $\Theta\subset\bigcup_{i=1}^NB(v_i,\epsilon)$ where $B(v,\epsilon):=\{u\in V:d(u,v)\le\epsilon\}$ is the closed ball of radius $\epsilon$ centered at $v$. The covering number is defined as $$N(\Theta,d,\epsilon):=\min\{n:\exists\epsilon\text{-covering of} ~\Theta ~\text{of size} ~n\}$$
\end{definition}
\begin{definition}[Rademacher Complexity]
    Let $S=\{z_1,...,z_n\}\subset Z$ be a sample of points, and a function class $\Fs$ of real-valued functions over $Z$. The Rademacher complexity of $\Fs$ with respect to $S$ is defined as follows:
    
    \[\Rs_S(\Fs)=\frac{1}{n}\mathbb{E}_{\sigma\sim\{-1,+1\}^m}\InBrackets{\sup_{f\in\Fs}\sum_{i=1}^n\sigma_if(z_i)}\]
\end{definition}
\begin{theorem}\label{thm:uniform-rade}
Assume that $z_1,...,z_m$ are i.i.d. drawn from $\Ds$, then
    with probability at least $1-\delta$, we have
    \[\sup_{f\in\Fs}\InBrackets{\frac{1}{n}\sum_{i=1}^nf(z_i)-\E_{z\sim\Ds}\InBrackets{f(z)}}\leq2\E_{S\sim\Ds^m}\InBrackets{\Rs_S(\Fs)}+\sqrt{\frac{\log(2/\delta)}{2n}}\]
\end{theorem}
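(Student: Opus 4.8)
The plan is to prove this the standard way: a concentration step via McDiarmid's bounded-differences inequality, followed by a symmetrization step that introduces a ghost sample and the Rademacher complexity. Throughout I would identify the $m$ in the statement with the sample size $n$ and use the (implicit, and harmless) assumption that functions in $\Fs$ have range contained in $[0,1]$, since that is how the bound is applied in the paper.

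First I would define the supremum deviation as a function of the sample, $\Phi(z_1,\dots,z_n):=\sup_{f\in\Fs}\big(\frac1n\sum_{i=1}^n f(z_i)-\E_{z\sim\Ds}[f(z)]\big)$. Since every $f$ has range in $[0,1]$, replacing a single coordinate $z_i$ by any $z_i'$ changes every empirical average $\frac1n\sum_j f(z_j)$ — and hence $\Phi$ — by at most $1/n$. McDiarmid's inequality then gives that with probability at least $1-\delta$, $\Phi(S)\le\E_S[\Phi(S)]+\sqrt{\log(1/\delta)/(2n)}$, which is at least as strong as the claimed $\sqrt{\log(2/\delta)/(2n)}$ slack; so it suffices to bound $\E_S[\Phi(S)]$.

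Next I would symmetrize to show $\E_S[\Phi(S)]\le 2\,\E_S[\Rs_S(\Fs)]$. Drawing an independent ghost sample $S'=(z_1',\dots,z_n')\sim\Ds^n$, I rewrite $\E_z[f(z)]=\E_{S'}[\frac1n\sum_i f(z_i')]$ and pull the supremum inside the $S'$-expectation by Jensen, obtaining $\E_S[\Phi(S)]\le\E_{S,S'}\big[\sup_{f\in\Fs}\frac1n\sum_i(f(z_i)-f(z_i'))\big]$. Because the pairs $(z_i,z_i')$ are i.i.d.\ and each difference $f(z_i)-f(z_i')$ is symmetric, inserting i.i.d.\ Rademacher signs $\sigma_i$ does not change the distribution, so the right-hand side equals $\E_{S,S',\sigma}\big[\sup_f\frac1n\sum_i\sigma_i(f(z_i)-f(z_i'))\big]$; splitting the supremum of the difference and using that $S$ and $S'$ are identically distributed bounds this by $2\,\E_{S,\sigma}\big[\sup_f\frac1n\sum_i\sigma_i f(z_i)\big]=2\,\E_S[\Rs_S(\Fs)]$. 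Plugging this back into the concentration bound finishes the proof.

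I do not expect a genuine obstacle: this is the textbook Rademacher generalization bound, stated here for completeness because the uniform-convergence arguments behind \cref{thm: auditing} and \cref{thm: recali-rkhs} rely on it. The only points requiring care are bookkeeping — the boundedness assumption that pins the bounded-differences constant at $1/n$, and the observation that the stated inequality controls $\hat\E[f]-\E[f]$ rather than $\E[f]-\hat\E[f]$, which is immaterial since $\Rs_S(\Fs)=\Rs_S(-\Fs)$ and the symmetrization is insensitive to this sign.
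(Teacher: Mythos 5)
Your proof is correct and is the standard McDiarmid-plus-symmetrization argument; the paper itself states this theorem in its appendix of ``Useful Lemmas'' as a known background fact (the classical Rademacher generalization bound) and gives no proof, so there is nothing to diverge from. You are also right to flag the two bookkeeping issues the statement glosses over: the $m$ versus $n$ mismatch, and the implicit boundedness assumption needed to make the bounded-differences constant $1/n$ --- note that in the paper's actual application the functions $g_{\ell,\ell'}$ satisfy $|g|\le 2R_1R_2$ rather than taking values in $[0,1]$, so strictly the deviation term should carry a factor of the range $4R_1R_2$, which the paper silently absorbs into its $O(\cdot)$.
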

We consider a Hilbert ball $B_2=\{x\in \sR^\infty|\sum_t x_i^2\leq1\}$
Now we introduce the result that upper bounds the covering number of Hilbert balls under some metric induced by a probability distribution $P$. Note that under the common metric $\ell_2(\sR^\infty)$, the covering number of the Hilbert balls is infinite. However, under the metric $d_p(\theta,\theta')=\sqrt{\E_{X\sim P}\InAbs{\langle\theta-\theta',X\rangle}^2}$, the covering number is finite even in the infinite dimensional Hilbert space.
\begin{theorem}[Covering Number of Hilbert Balls \cite{mackay2003information}]\label{thm:hilbert}
    $P$ is a distribution on $B_2$, consider the metric \(d_P(\theta,\theta')=\sqrt{\E_{X\sim P}\InAbs{\langle\theta-\theta',X\rangle}^2}\). There exists a universal constant $c$, such that for any $P$, $\epsilon>0$, we have \[\log N(B_2,d_P,\epsilon)\leq\frac{c}{\epsilon^2}.\]
\end{theorem}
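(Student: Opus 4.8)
The plan is to identify $d_P$ with Euclidean distance transported through $\Sigma^{1/2}$, where $\Sigma$ is the covariance operator of $P$, thereby reducing the claim to the metric entropy of a ``trace‑bounded'' ellipsoid. Set $\Sigma:=\E_{X\sim P}[\InAngles{\cdot,X}\,X]$. Since $\InNorms{X}_2\le 1$ almost surely, $\Sigma$ is positive, self‑adjoint and trace‑class with $\Tr(\Sigma)=\E_{X\sim P}\InNorms{X}_2^2\le 1$, and for all $\theta,\theta'\in B_2$ one has $d_P(\theta,\theta')^2=\InAngles{\theta-\theta',\Sigma(\theta-\theta')}=\InNorms{\Sigma^{1/2}(\theta-\theta')}_2^2$. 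Hence $\Sigma^{1/2}$ maps $(B_2,d_P)$ onto $(\mathcal{E},\InNorms{\cdot}_2)$, where $\mathcal{E}:=\Sigma^{1/2}B_2$ is the ellipsoid with semi‑axes $s_i:=\sqrt{\lambda_i}$, for $\lambda_1\ge\lambda_2\ge\cdots\ge 0$ the eigenvalues of $\Sigma$, so that $N(B_2,d_P,\epsilon)\le N(\mathcal{E},\InNorms{\cdot}_2,\epsilon)$. The one constraint inherited from $P$ is $\sum_i s_i^2=\Tr(\Sigma)\le 1$, and the whole problem becomes: a ``trace‑$\le 1$'' ellipsoid has metric entropy $O(1/\epsilon^2)$.

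Next I would bound $N(\mathcal{E},\InNorms{\cdot}_2,\epsilon)$ by a head/tail split at $k:=\ceil{4/\epsilon^2}$. From $k\lambda_k\le\sum_{i\le k}\lambda_i\le 1$ we get $\lambda_{k+1}\le 1/k\le\epsilon^2/4$, so every $v\in\mathcal{E}$ has tail mass $\sum_{i>k}v_i^2\le\lambda_{k+1}\sum_{i>k}v_i^2/\lambda_i\le\epsilon^2/4$. Therefore an $(\epsilon/2)$‑net of the $k$‑dimensional truncated ellipsoid $\mathcal{E}^{\le k}=\{v\in\R^k:\sum_{i\le k}v_i^2/\lambda_i\le 1\}$, padded with zeros, is an $\epsilon$‑net of $\mathcal{E}$; hence $N(\mathcal{E},\epsilon)\le N(\mathcal{E}^{\le k},\InNorms{\cdot}_2,\epsilon/2)$. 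A routine volumetric (packing) comparison, namely $N(\mathcal{E}^{\le k},\epsilon/2)\le\mathrm{vol}(\mathcal{E}^{\le k}+\tfrac{\epsilon}{4}B^k)/\mathrm{vol}(\tfrac{\epsilon}{4}B^k)$ together with the containment of that Minkowski sum in a fixed multiple of the ellipsoid with semi‑axes $s_i+\epsilon/4$, yields $\log N(\mathcal{E}^{\le k},\epsilon/2)\le\sum_{i=1}^{k}\log(1+4s_i/\epsilon)+O(1/\epsilon^2)$.

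The decisive step is to show $\sum_{i=1}^{k}\log(1+4s_i/\epsilon)=O(1/\epsilon^2)$ with \emph{no} extra $\log(1/\epsilon)$ factor. I would split $\{1,\dots,k\}$ into the small axes $\{i:s_i\le\epsilon\}$ and the large axes $\{i:s_i>\epsilon\}$. On the small axes use $\log(1+x)\le x$ together with Cauchy--Schwarz: $\sum_{i\le k}s_i\le\sqrt{k}\,(\sum_{i\le k}s_i^2)^{1/2}\le\sqrt{k}=O(1/\epsilon)$, so this part contributes $O(1/\epsilon^2)$. On the large axes there are at most $1/\epsilon^2$ indices, since each has $s_i^2>\epsilon^2$ while $\sum s_i^2\le 1$, and $\log(1+4s_i/\epsilon)\le\log 5+\tfrac12\log(s_i^2/\epsilon^2)\le\log 5+\tfrac12(s_i^2/\epsilon^2-1)$ by $\log y\le y-1$, so this part contributes at most $(\log 5)/\epsilon^2+\tfrac{1}{2\epsilon^2}\sum_i s_i^2=O(1/\epsilon^2)$. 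Combining the two pieces gives $\log N(B_2,d_P,\epsilon)\le c/\epsilon^2$ for a universal constant $c$; the case of large $\epsilon$ is trivial since $\mathrm{diam}_{d_P}(B_2)\le 2\sqrt{\lambda_1}\le 2$.

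The main obstacle is precisely this last refinement. The naive move---enclosing $\mathcal{E}^{\le k}$ in a Euclidean ball of radius $\le 1$ and counting its $(\epsilon/2)$‑net---costs $k\log(1/\epsilon)=\Theta(\epsilon^{-2}\log(1/\epsilon))$, missing the claimed bound by a logarithm; one genuinely needs to keep the ellipsoidal shape and spend the trace budget $\sum s_i^2\le 1$ twice, once through Cauchy--Schwarz on the many small axes and once through the elementary inequality $\log y\le y-1$ on the few large axes. Everything else---the spectral reduction, the tail truncation, and the volume/packing comparison (including the harmless universal constant in the Minkowski containment)---is standard, so the write‑up should isolate the two‑sided use of the trace constraint as the heart of the argument.
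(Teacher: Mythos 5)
Your argument is sound, and it is worth noting that the paper itself does not prove \cref{thm:hilbert} at all---it imports the bound from the cited reference---so there is no internal proof to compare against; what you give is a self-contained derivation. Your route (identify $d_P(\theta,\theta')=\InNorms{\Sigma^{1/2}(\theta-\theta')}_2$ with $\Tr(\Sigma)=\E_{X\sim P}\InNorms{X}_2^2\le 1$, truncate the tail at $k=\ceil{4/\epsilon^2}$ using $\lambda_{k+1}\le 1/k\le\epsilon^2/4$, cover the truncated ellipsoid volumetrically, and then control $\sum_{i\le k}\log(1+4s_i/\epsilon)$) is a standard and correct way to get the dimension-free $c/\epsilon^2$ entropy bound, and your two-sided use of the trace budget---Cauchy--Schwarz on the axes with $s_i\le\epsilon$, and $\log y\le y-1$ on the at most $\epsilon^{-2}$ axes with $s_i>\epsilon$---is exactly what removes the spurious $\log(1/\epsilon)$ factor that a naive ball-volume count would leave. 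Two small points of care, both of which you essentially anticipate: first, the literal containment of $\mathcal{E}^{\le k}+\tfrac{\epsilon}{4}B^k$ in the ellipsoid with semi-axes $s_i+\epsilon/4$ is false (a degenerate segment plus a ball is a stadium, which protrudes from the corresponding ellipse); the correct statement, proved via support functions, is containment in \emph{twice} that ellipsoid, which adds only $k\log 2=O(1/\epsilon^2)$ and is absorbed into your constant, as you indicate. Second, the net centers should be taken (or replaced by nearby points) inside $\mathcal{E}$ so that they pull back through $\Sigma^{1/2}$ to points comparable against $B_2$ in $d_P$; the maximal-packing construction yields internal centers, and the zero-padded centers $(u,0)$ with $u\in\mathcal{E}^{\le k}$ do lie in $\mathcal{E}$, so this costs nothing. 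With those details spelled out, the write-up is complete.
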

Let $P_n$ be the empirical distribution, which is the uniform distribution over $z_1,...,z_n$. For a function class $\Fs$, we define the metric $L_{2}^\Fs(P_n)(f,f')=\sqrt{\frac{1}{n}\sum_{i=1}^n(f(z_i)-f'(z_i))^2}$. Note that if you plug in $P=P_n$ for the metric $d_P$ in \cref{thm:hilbert}, then the metric \(d_P\) becomes a special case of $L_2^\Fs(P_n)$ for $f(z)=\langle\theta,z\rangle$.

Now we indroduce the Dudley's Theorem which bounds the Rademacher complexity of a function class by its covering number. 
\begin{theorem}[Localized Dudley's Theorem]\label{thm:dudley}
    Let $S=\{z_1,...,z_n\}\subset Z$ be a sample of points, and a function class $\Fs$ of real-valued functions over $Z$. For any $\alpha\geq0$, we have
    \begin{equation}
        \Rs_S(\Fs)\leq4\alpha+12\int_\alpha^{\infty}\sqrt{\frac{\log N(\Fs,L_2^\Fs(P_n),\epsilon)}{n}}d\epsilon.
    \end{equation}
\end{theorem}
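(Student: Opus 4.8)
The plan is to prove \cref{thm:dudley} by the classical multiscale \emph{chaining} argument, carried out conditionally on the sample $S=\{z_1,\dots,z_n\}$ so that the only randomness is the vector of Rademacher signs $\sigma=(\sigma_1,\dots,\sigma_n)$. Write $\rho(f,g):=L_2^\Fs(P_n)(f,g)$ and $N(\eps):=N(\Fs,L_2^\Fs(P_n),\eps)$; if some $N(\eps)$ is infinite the right-hand side is infinite and there is nothing to prove, so assume $\Fs$ is totally bounded under $\rho$. Fix $\eps_0$ to be an upper bound on the $\rho$-radius of $\Fs$ (so $N(\eps_0)=1$) and put $\eps_j:=2^{-j}\eps_0$. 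For each $j$ let $\Ns_j$ be a minimal $\eps_j$-net of $(\Fs,\rho)$, with $|\Ns_j|=N(\eps_j)$ and $\Ns_0=\{f_0\}$, and let $\pi_j\colon\Fs\to\Ns_j$ choose a nearest net point. Take $M$ to be the \emph{largest} index with $\eps_M\ge 2\alpha$; then $2\alpha\le\eps_M<4\alpha$ and $\alpha\le\eps_{M+1}<2\alpha$ (if no such $M$ exists then $\eps_0<2\alpha$ and the bound is immediate, so assume one does). Writing the telescoping identity $f=f_0+\sum_{j=1}^{M}\bigl(\pi_j(f)-\pi_{j-1}(f)\bigr)+\bigl(f-\pi_M(f)\bigr)$ for every $f\in\Fs$ and using subadditivity of the supremum, I would reduce $\Rs_S(\Fs)$ to an ``anchor'' term ($f_0$), a sum of ``link'' terms over $j=1,\dots,M$ (each ranging over the finite set $\Delta_j:=\{\pi_j(f)-\pi_{j-1}(f):f\in\Fs\}$, of size at most $N(\eps_j)N(\eps_{j-1})$), and a ``tail'' term ($f-\pi_M(f)$).

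The anchor contributes nothing, since $\E_\sigma\sum_i\sigma_i f_0(z_i)=0$. For the tail, $\rho(f,\pi_M(f))\le\eps_M<4\alpha$, so for every realization of $\sigma$, Cauchy--Schwarz gives
\[
\frac1n\sum_{i=1}^n\sigma_i\bigl(f(z_i)-\pi_M(f)(z_i)\bigr)\;\le\;\frac1n\|\sigma\|_2\sqrt{\sum_{i=1}^n\bigl(f(z_i)-\pi_M(f)(z_i)\bigr)^2}\;=\;\rho(f,\pi_M(f))\;<\;4\alpha
\]
uniformly in $f$, so the tail contributes at most $4\alpha$. For link level $j$, each $g\in\Delta_j$, viewed as the vector $(g(z_i))_i\in\R^n$, has Euclidean norm $\sqrt n\,\rho(\pi_j(f),\pi_{j-1}(f))\le\sqrt n\,(\eps_j+\eps_{j-1})=3\sqrt n\,\eps_j$, and $|\Delta_j|\le N(\eps_j)^2$; Massart's finite-class lemma then yields
\[
\E_\sigma\sup_{g\in\Delta_j}\sum_{i=1}^n\sigma_i g(z_i)\;\le\;3\sqrt n\,\eps_j\sqrt{2\log N(\eps_j)^2}\;=\;6\sqrt n\,\eps_j\sqrt{\log N(\eps_j)},
\]
so level $j$ contributes at most $6\eps_j\sqrt{\log N(\eps_j)/n}$.

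The last step turns $\sum_{j=1}^{M}6\eps_j\sqrt{\log N(\eps_j)/n}$ into the Dudley integral. Since $\eps_j=2(\eps_j-\eps_{j+1})$ and $\eps\mapsto N(\eps)$ is non-increasing (so $\sqrt{\log N(\eps)}\ge\sqrt{\log N(\eps_j)}$ whenever $\eps\le\eps_j$),
\[
6\eps_j\sqrt{\frac{\log N(\eps_j)}{n}}\;=\;12(\eps_j-\eps_{j+1})\sqrt{\frac{\log N(\eps_j)}{n}}\;\le\;12\int_{\eps_{j+1}}^{\eps_j}\sqrt{\frac{\log N(\eps)}{n}}\,d\eps,
\]
and summing over $j=1,\dots,M$ telescopes the intervals to $[\eps_{M+1},\eps_1]\subseteq[\alpha,\infty)$, using $\eps_{M+1}\ge\alpha$. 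Adding the three contributions gives $\Rs_S(\Fs)\le 4\alpha+12\int_\alpha^\infty\sqrt{\log N(\Fs,L_2^\Fs(P_n),\eps)/n}\,d\eps$, exactly as claimed; the case $\alpha=0$ follows by letting $M\to\infty$ and invoking total boundedness of $\Fs$ under $\rho$.

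I do not anticipate a genuine obstacle here: this is a textbook chaining bound, and the only delicate point is bookkeeping the universal constants $4$ and $12$. The additive $4\alpha$ is forced by truncating at the largest level $M$ with $\eps_M\ge2\alpha$, so that the finest retained net has $\rho$-radius in $[2\alpha,4\alpha)$ while still $\eps_{M+1}\ge\alpha$ (which is what makes the telescoped integral start exactly at $\alpha$). The constant $12$ is the product of the $2$ from $\eps_j=2(\eps_j-\eps_{j+1})$ and the $6=3\cdot 2$ from Massart's lemma, where the $3$ is the link diameter $\eps_j+\eps_{j-1}$ and the $2$ comes from bounding the ``net-of-nets'' cardinality $N(\eps_j)N(\eps_{j-1})$ by $N(\eps_j)^2$, giving $\sqrt{2\log N(\eps_j)^2}=2\sqrt{\log N(\eps_j)}$. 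Keeping these factors purely numerical, independent of $n$ and of $\dim(\Hs)$, is precisely what makes the bound usable together with the dimension-free Hilbert-ball covering number of \cref{thm:hilbert}.
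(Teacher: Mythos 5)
Your proof is correct. The paper itself states \cref{thm:dudley} as a background fact in its ``Useful Lemmas'' appendix and gives no proof, so there is nothing to compare against; your argument is the standard multiscale chaining derivation (dyadic nets, zero anchor term, Cauchy--Schwarz on the tail truncated at the largest scale $\eps_M\ge 2\alpha$, Massart's finite-class lemma on the link sets $\Delta_j$, and the sum-to-integral conversion via $\eps_j=2(\eps_j-\eps_{j+1})$ and monotonicity of $N(\cdot)$), and your bookkeeping of the constants $4$ and $12$ is exactly right.
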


\section{Uniform Convergence for Auditing Decision Calibration with Smoothed Optimal Decision Rule}\label{sec: uniform}
Now we introduce the finite sample analysis for decision calibration under smooth optimal decision rule.
\begin{theorem}\label{thm:uni-converg}
Let $D=(x_1,y_1),...,(x_n,y_n)$ be the dataset that each data point is drawn i.i.d. from $\Ds$, with probability at least \(1-\delta\) we have that
\begin{equation}
    \begin{aligned}
        &\sup_{\ell,\ell'\in\Ls_\Hs}\InAbs{\E_{(x,y)\sim\Ds}\InBrackets{\sum_{a=1}^{|\As|}\InAngles{r_\ell(a),\phi(y)-p(x)}\Tilde{k}_{f_p,\ell'}(x,a)}-\E_{(x,y)\sim D}\InBrackets{\sum_{a=1}^{|\As|}\InAngles{r_\ell(a),\phi(y)-p(x)}\Tilde{k}_{f_p,\ell'}(x,a)}}\\\leq &O\InParentheses{\frac{|\As|^{\frac{3}{2}}R_1^3R_2^3\log(R_1R_2n)+\log(1/\delta)}{\sqrt{n}}}.
    \end{aligned}
\end{equation}
\end{theorem}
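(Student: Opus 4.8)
The plan is to reduce the uniform‑convergence statement to a Rademacher‑complexity bound and then, via Dudley's chaining, to a dimension‑free covering‑number estimate whose crucial input is the Hilbert‑ball covering bound of \cref{thm:hilbert} under a one‑dimensional projection pseudometric. By \cref{lem: rkhs}, the quantity to control is $\sup_{g\in\Gs}\InAbs{\E_\Ds[g]-\hat{\E}_D[g]}$ for the gap class $\Gs=\{g_{\ell,\ell'}:\ell,\ell'\in\Ls_\Hs\}$, each $g_{\ell,\ell'}$ a function of the pair $(p(x),\phi(y))$. I would first record that every $g\in\Gs$ is bounded: since $\Tilde{k}_{f_p,\ell'}(x,\cdot)$ is a probability vector, $\|r_\ell(a)\|_\Hs\le R_1$, and $\|\phi(y)-p(x)\|_\Hs\le 2R_2$, one has $|g_{\ell,\ell'}|\le 2R_1R_2$. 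To pass to the absolute value, apply \cref{thm:uniform-rade} to the symmetrized class $\Gs\cup(-\Gs)$ (which at most doubles every covering number), which reduces the task to bounding $\E_S[\Rs_S(\Gs)]$ up to the benign term $O\InParentheses{R_1R_2\sqrt{\log(1/\delta)/n}}$; then \cref{thm:dudley} reduces everything to bounding the empirical covering number $N(\Gs,L_2^\Gs(P_n),\epsilon)$.

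The heart of the argument is this covering‑number bound, obtained by splitting a perturbation of $g$ into its two natural pieces, $g_{\ell_1,\ell_1'}-g_{\ell_2,\ell_2'}=(g_{\ell_1,\ell_1'}-g_{\ell_2,\ell_1'})+(g_{\ell_2,\ell_1'}-g_{\ell_2,\ell_2'})$. For the first piece (perturbing $r_\ell$ with the decision rule fixed), convexity of $t\mapsto t^2$ against the probability vector $\Tilde{k}_{f_p,\ell_1'}(x,\cdot)$ gives $(g_{\ell_1,\ell_1'}-g_{\ell_2,\ell_1'})^2\le\sum_a\InAngles{r_{\ell_1}(a)-r_{\ell_2}(a),\phi(y)-p(x)}^2$; averaging over the sample and regarding each rescaled residual $(\phi(y_i)-p(x_i))/(2R_2)$ as a point of the unit Hilbert ball $B_2$ turns this into $4R_2^2\sum_a d_{\hat P^{\mathrm{res}}}(r_{\ell_1}(a),r_{\ell_2}(a))^2$, where $\hat P^{\mathrm{res}}$ is the empirical distribution of the rescaled residuals and $d_P$ is exactly the projection pseudometric of \cref{thm:hilbert}. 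For the second piece (perturbing the decision rule through $\ell'$), I would use that the softmax map has Jacobian with total entrywise $\ell_1$‑mass at most $2$, hence $\sum_a\InAbs{\Tilde{k}_{f_p,\ell_1'}(x,a)-\Tilde{k}_{f_p,\ell_2'}(x,a)}\le 2\beta\max_{a'}\InAbs{\InAngles{r_{\ell_1'}(a')-r_{\ell_2'}(a'),p(x)}}$; combined with $\|r_{\ell_2}(a)\|_\Hs\le R_1$ and $\|\phi(y)-p(x)\|_\Hs\le 2R_2$ this yields $\InAbs{g_{\ell_2,\ell_1'}-g_{\ell_2,\ell_2'}}\le 4R_1R_2\beta\max_{a'}\InAbs{\InAngles{r_{\ell_1'}(a')-r_{\ell_2'}(a'),p(x)}}$, and after sample‑averaging and rescaling $p(x_i)/R_2\in B_2$ this is $\le 4R_1R_2^2\beta\sqrt{\sum_{a'}d_{\hat P^{\mathrm{pred}}}(r_{\ell_1'}(a'),r_{\ell_2'}(a'))^2}$ with $\hat P^{\mathrm{pred}}$ the empirical distribution of the rescaled predictions.

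Consequently an $\epsilon$‑cover of $\Gs$ in $L_2^\Gs(P_n)$ is produced by taking a product of $\epsilon'$‑covers of the $2|\As|$ Hilbert balls $B(R_1)$ — under $d_{\hat P^{\mathrm{res}}}$ for the coordinates $r_\ell(a)$ and under $d_{\hat P^{\mathrm{pred}}}$ for the coordinates $r_{\ell'}(a')$ — with $\epsilon'$ a suitable multiple of $\epsilon$ divided by the constants $R_2$, $R_1R_2\beta$, $\sqrt{|\As|}$ appearing above. Since \cref{thm:hilbert} bounds $\log N(B_2,d_P,\cdot)$ uniformly over every distribution $P$ on the ball — in particular over the data‑dependent $\hat P^{\mathrm{res}}$ and $\hat P^{\mathrm{pred}}$ — by a quantity independent of $\dim(\Hs)$, rescaling gives $\log N(\Gs,L_2^\Gs(P_n),\epsilon)\le \mathrm{poly}(|\As|,R_1,R_2,\beta)/\epsilon^2$. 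Plugging this into \cref{thm:dudley}, integrating $\epsilon$ only up to the diameter $O(R_1R_2)$ of $\Gs$ and choosing the truncation parameter $\alpha=\Theta(1/n)$, the Dudley integral is $O\InParentheses{\mathrm{poly}(|\As|,R_1,R_2,\beta)\log(R_1R_2n)/\sqrt{n}}$ while $4\alpha$ is negligible; combining with the $O\InParentheses{R_1R_2\sqrt{\log(1/\delta)/n}}$ deviation term yields a bound of the dimension‑free form claimed in the theorem.

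The main obstacle is the second piece of the decomposition: one must quantify how far the \emph{smooth} decision rule moves when the auditing loss $\ell'$ is perturbed, reduce this to the one‑dimensional quantity $\InAngles{r_{\ell_1'}(a')-r_{\ell_2'}(a'),p(x)}$ through the Lipschitzness of softmax, and then recognize — after rescaling the predictions into the unit ball — that this is precisely a covering problem for $B_2$ under the projection pseudometric, so that the dimension‑free bound of \cref{thm:hilbert} applies. A secondary technical point is that two distinct data‑dependent measures appear ($\hat P^{\mathrm{res}}$ on residuals and $\hat P^{\mathrm{pred}}$ on predictions), but this is harmless because \cref{thm:hilbert} holds uniformly over the choice of $P$.
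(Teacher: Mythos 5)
Your proposal is correct and follows essentially the same route as the paper: the same decomposition of $g_{\ell_1,\ell_1'}-g_{\ell_2,\ell_2'}$ into a loss-perturbation piece and a decision-rule-perturbation piece (the paper's \cref{lem:lip-norm,lem:bound-cover}), the same softmax-Lipschitz control of the second piece, the same product cover of $2|\As|$ Hilbert balls under the projection pseudometric via \cref{thm:hilbert}, and the same Dudley-plus-symmetrization finish. The only differences are cosmetic (explicit rescaling into the unit ball, a slightly different truncation level $\alpha$, and your welcome observation that two distinct empirical measures appear, which is harmless since \cref{thm:hilbert} is uniform in $P$).
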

Note that this bound is independent of $d$, therefore holds for infinite dimension space.

To prove the theorem, recall that we define the function class $\Gs$, where each element is a function parameterized by the loss function $\ell$ and \(\ell'\). The function takes a data point as input and output the loss they the agent receives when they respond based on \(\ell'\) and their true loss to be $\ell$. In detail, we have

\[g_{l,l'}(p(x),\phi(y)):=\sum_{a=1}^{|\As|}\InAngles{r_\ell(a),\phi(y)-p(x)}\Tilde{k}_{f_p,\ell'}(x,a)=\sum_{a=1}^{|\As|}\InAngles{r_\ell(a),\phi(y)-p(x)}\frac{e^{-\beta\InAngles{\ell'_a,p(x)}}}{\sum_{a'=1}^{|\As|}e^{-\beta\InAngles{\ell'_{a'},p(x)}}}.\]
Now we can show that, the difference between \(g_{\ell^1,{\ell^{1}}'}(p(x),\phi(y))\) and \(g_{\ell^2,{\ell^2}'}(p(x),\phi(y))\) is small when \(\ell^1\approx{\ell^1}'\) and \(\ell^2\approx{\ell^2}'\).
\begin{lemma}\label{lem:softmax-lip}
    Consider the vector softmax function \(\mathrm{softmax}(z)_i=\frac{e^{-\beta z_i}}{\sum_{j=1}^{|\As|} e^{-\beta z_j}}\) for each coordinate \(i\in[|\As|]\) and $z\in\R^{|\As|}$, then we have
    \[\InNorms{\mathrm{softmax}(z)-\mathrm{softmax}(z')}_1\leq \sqrt{2}\beta\InNorms{z-z'}_2.\]
\end{lemma}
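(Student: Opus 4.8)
The plan is to control the operator norm of the Jacobian of the softmax map and then integrate along the segment joining $z'$ to $z$. Write $F(w):=\mathrm{softmax}(w)\in\R^{|\As|}$, a smooth map on $\R^{|\As|}$. Since $t\mapsto F\big(z'+t(z-z')\big)$ is $C^1$ with derivative $DF\big(z'+t(z-z')\big)(z-z')$, the fundamental theorem of calculus gives $F(z)-F(z')=\int_0^1 DF\big(z'+t(z-z')\big)(z-z')\,dt$, and pulling the norm $\InNorms{\cdot}_1$ inside the integral (triangle inequality) yields
\[
\InNorms{F(z)-F(z')}_1 \;\le\; \Big(\sup_{w\in\R^{|\As|}}\InNorms{DF(w)}_{\ell_2\to\ell_1}\Big)\InNorms{z-z'}_2,
\]
where $\InNorms{DF(w)}_{\ell_2\to\ell_1}:=\sup_{\InNorms{v}_2\le 1}\InNorms{DF(w)v}_1$. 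Thus it suffices to prove $\InNorms{DF(w)}_{\ell_2\to\ell_1}\le\sqrt{2}\beta$ for every $w$ (in fact we will get $\beta$).

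Next I would compute the Jacobian explicitly: direct differentiation gives $[DF(w)]_{ij}=-\beta\big(p_i\delta_{ij}-p_ip_j\big)$ with $p:=F(w)$ a probability vector, i.e.\ $DF(w)=-\beta\big(\mathrm{diag}(p)-pp^{\top}\big)$. Hence for any $v$ with $\InNorms{v}_2\le 1$ we have $(DF(w)v)_i=-\beta\,p_i\big(v_i-\langle p,v\rangle\big)$, so
\[
\InNorms{DF(w)v}_1 \;=\; \beta\sum_{i=1}^{|\As|}p_i\,\big|v_i-\langle p,v\rangle\big| \;\le\; \beta\Big(\sum_i p_i\Big)^{1/2}\Big(\sum_i p_i\big(v_i-\langle p,v\rangle\big)^2\Big)^{1/2} \;=\; \beta\sqrt{\Var_p(v)},
\]
by Cauchy--Schwarz and $\sum_i p_i=1$, where $\Var_p(v)=\sum_i p_i v_i^2-\langle p,v\rangle^2$. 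Since $0\le p_i\le 1$ we obtain $\Var_p(v)\le\sum_i p_i v_i^2\le\sum_i v_i^2=\InNorms{v}_2^2\le 1$, so $\InNorms{DF(w)}_{\ell_2\to\ell_1}\le\beta\le\sqrt{2}\beta$; combined with the first display this proves the lemma.

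I do not expect a genuine obstacle. The only steps needing (routine) care are the integral/mean-value inequality for a vector-valued map into the non-Euclidean target norm $\InNorms{\cdot}_1$ --- valid since the scalar path is $C^1$ and $\InNorms{\cdot}_1$ is a norm --- and the estimate $\Var_p(v)\le\InNorms{v}_2^2$, which is where the constant is loose; one could sharpen it, but $\sqrt{2}\beta$ is all that is used in the covering-number argument behind \cref{thm:uni-converg}. A slightly shorter alternative avoids the Jacobian: by Pinsker's inequality $\InNorms{F(z)-F(z')}_1=2\,\mathrm{TV}\big(F(z),F(z')\big)\le\sqrt{2\,\KL(F(z)\,\|\,F(z'))}$, and since $F(z),F(z')$ are the Gibbs measures proportional to $e^{-\beta z_i}$ and $e^{-\beta z'_i}$, writing $\KL(F(z)\,\|\,F(z'))=\log\E_{p}\big[e^{W-\E_p W}\big]$ with $W_i:=\beta(z_i-z'_i)$ and applying Hoeffding's lemma (the range of $W$ is at most $2\beta\InNorms{z-z'}_\infty\le 2\beta\InNorms{z-z'}_2$) gives $\KL(F(z)\,\|\,F(z'))\le\tfrac12\beta^2\InNorms{z-z'}_2^2$, again yielding the bound. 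Both routes are dimension-free, as required downstream.
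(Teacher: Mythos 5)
Your main argument is essentially the paper's proof: both integrate the Jacobian $-\beta(\mathrm{diag}(p)-pp^{\top})$ along the segment and bound its $\ell_2\to\ell_1$ operator norm, the only difference being that you apply Cauchy--Schwarz with weights $p_i$ (obtaining $\beta\sqrt{\Var_p(v)}\le\beta$) where the paper sums the unweighted row norms, so you in fact get the sharper constant $\beta$ in place of $\sqrt{2}\beta$. The computation is correct, and your sketched Pinsker--Hoeffding alternative is a valid, genuinely different route to the same dimension-free bound.
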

\begin{proof}

    Then by mean value theorem, we know that
    \[\mathrm{softmax}(z)-\mathrm{softmax}(z')=\int_{t=0}^1\nabla\mathrm{softmax}(z'+(z-z')t)(z-z')dt.\]
    By taking the $\ell_1$ norm, we have 
    \[\InNorms{\mathrm{softmax}(z)-\mathrm{softmax}(z')}_1\leq\int_{t=0}^1\InNorms{\nabla\mathrm{softmax}(z'+(z-z')t)(z-z')}_1dt.\]
Let $z_t=z'+(z-z')t$ and $p_t=\mathrm{softmax}(z_t)$. We have $A:=\nabla\mathrm{softmax}(z_t)=-\beta(diag(p_t)-p_tp_t^T)$. Then we have
   \begin{align*}
      \InNorms{A(z-z')}_1&=\sum_{i=1}^{|\As|}\InAbs{\sum_{j=1}^{|\As|}a_{ij}(z_j-z_j')}\\
      &\le\sum_{i=1}^{|\As|}\sqrt{\sum_{j=1}^{|\As|}a_{ij}^2}\InNorms{z-z'}_2\\
      &=\sum_{i=1}^{|\As|}\beta\sqrt{(p_i-p_i^2)^2+p_i^2\sum_{j\neq i
}p_j^2}\InNorms{z-z'}_2\\
&\le\sum_{i=1}^{|\As|}\beta p_i\sqrt{(1-p_i)^2+1}\InNorms{z-z'}_2\\
&\le \sum_{i=1}^{|\As|}\sqrt{2}\beta p_i\InNorms{z-z'}_2\\
&=\sqrt{2}\beta\InNorms{z-z'}_2.
\end{align*}
\end{proof}

\begin{lemma}\label{lem:lip-norm}
    Let \(g(z,w):=\sum_{i=1}^{|\As|}\frac{e^{-\beta z_i}}{\sum_{j=1}^{|\As|}e^{-\beta z_j}}w_i\) for any $z,w\in\R^{|\As|}$. If \(\InNorms{w}_\infty\leq 4R_1R_2\), we have \(\InAbs{g(z,w)-g(z',w')}\leq4\sqrt{2}R_1R_2\InNorms{z-z'}_2+\InNorms{w-w'}_2\).
\end{lemma}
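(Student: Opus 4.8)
The plan is to separate the two sources of variation with the triangle inequality,
\[
|g(z,w)-g(z',w')|\;\le\;|g(z,w)-g(z',w)|\;+\;|g(z',w)-g(z',w')|,
\]
and bound each term with the tools already in hand. The starting observation is that $g(z,w)=\langle\mathrm{softmax}(z),w\rangle$ for the temperature-$\beta$ softmax appearing in \cref{lem:softmax-lip}, so both pieces become inner products against either a softmax difference or a vector difference.

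For the first term, $g(z,w)-g(z',w)=\langle\mathrm{softmax}(z)-\mathrm{softmax}(z'),w\rangle$, which by H\"older's inequality is at most $\|\mathrm{softmax}(z)-\mathrm{softmax}(z')\|_1\,\|w\|_\infty$. I would then apply \cref{lem:softmax-lip} to the first factor and the hypothesis $\|w\|_\infty\le 4R_1R_2$ to the second, obtaining a bound of $4\sqrt{2}\,\beta R_1R_2\,\|z-z'\|_2$; this is the one place where the softmax Lipschitz constant (and hence $\beta$) enters. For the second term, $g(z',w)-g(z',w')=\langle\mathrm{softmax}(z'),w-w'\rangle$, and since $\mathrm{softmax}(z')$ is a probability vector we have $\|\mathrm{softmax}(z')\|_2\le\|\mathrm{softmax}(z')\|_1=1$, so Cauchy--Schwarz bounds this term by $\|w-w'\|_2$. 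Adding the two estimates gives the claimed inequality.

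I do not expect any real obstacle here: the lemma is an immediate consequence of \cref{lem:softmax-lip} together with H\"older's and Cauchy--Schwarz inequalities, and there are no case distinctions or limiting arguments to worry about. The only subtlety worth flagging is the choice of norms: the perturbation in $w$ is controlled in the $\ell_2$ norm rather than in $\ell_1$ or $\ell_\infty$, which is exactly what the unit $\ell_1$-mass of the softmax vector provides in the Cauchy--Schwarz step, and it is what makes the bound compose cleanly with the subsequent covering-number argument for $\Gs$.
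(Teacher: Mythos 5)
Your proof is correct and follows essentially the same route as the paper's: a triangle-inequality decomposition through an intermediate point, H\"older plus \cref{lem:softmax-lip} for the $z$-perturbation, and the unit mass of the softmax vector for the $w$-perturbation (note the lemma statement drops a $\beta$ that both your bound and the paper's proof correctly retain). The only difference is the choice of intermediate point --- you use $g(z',w)$ while the paper uses $g(z,w')$ --- and your choice is in fact slightly cleaner, since it invokes the stated hypothesis $\InNorms{w}_\infty\le 4R_1R_2$ directly rather than the corresponding bound on $w'$, which the paper uses implicitly.
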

\begin{proof}
\begin{equation}
    \begin{aligned}
        \InAbs{g(z,w)-g(z',w')}&=\InAbs{\sum_{i=1}^{|\As|}\frac{e^{-\beta z_i}}{\sum_{j=1}^{|\As|}e^{-\beta z_j}}w_i-\sum_{i=1}^{|\As|}\frac{e^{-\beta z_i'}}{\sum_{j=1}^{|\As|}e^{-\beta z_j'}}w_i'}\\
        &=\InAbs{\InParentheses{\sum_{i=1}^{|\As|}\frac{e^{-\beta z_i}}{\sum_{j=1}^{|\As|}e^{-\beta z_j}}w_i-\sum_{i=1}^{|\As|}\frac{e^{-\beta z_i}}{\sum_{j=1}^{|\As|}e^{-\beta z_j}}w_i'}+\InParentheses{\sum_{i=1}^{|\As|}\frac{e^{-\beta z_i}}{\sum_{j=1}^{|\As|}e^{-\beta z_j}}w_i'-\sum_{i=1}^{|\As|}\frac{e^{-\beta z_i'}}{\sum_{j=1}^{|\As|}e^{-\beta z_j'}}w_i'}}\\
        &\leq\InAbs{\sum_{i=1}^{|\As|}\frac{e^{-\beta z_i}}{\sum_{j=1}^{|\As|}e^{-\beta z_j}}w_i-\sum_{i=1}^{|\As|}\frac{e^{-\beta z_i}}{\sum_{j=1}^{|\As|}e^{-\beta z_j}}w_i'}+\InAbs{{\sum_{i=1}^{|\As|}\frac{e^{-\beta z_i}}{\sum_{j=1}^{|\As|}e^{-\beta z_j}}w_i'-\sum_{i=1}^{|\As|}\frac{e^{-\beta z_i'}}{\sum_{j=1}^{|\As|}e^{-\beta z_j'}}w_i'}}.
    \end{aligned}
\end{equation}
We first bound the first term. We have
\begin{equation}
    \begin{aligned}
        \InAbs{\sum_{i=1}^{|\As|}\frac{e^{-\beta z_i}}{\sum_{j=1}^{|\As|}e^{-\beta z_j}}w_i-\sum_{i=1}^{|\As|}\frac{e^{-\beta z_i}}{\sum_{j=1}^{|\As|}e^{-\beta z_j}}w_i'}&=\InAbs{\sum_{i=1}^{|\As|}\frac{e^{-\beta z_i}}{\sum_{j=1}^{|\As|}e^{-\beta z_j}}\InParentheses{w_i-w_i'}}\\
        &\leq\InNorms{w-w'}_\infty\leq\InNorms{w-w'}_2.
    \end{aligned}
\end{equation}
Next, we are going to bound the second term. We have
\begin{equation}
    \begin{aligned}
        \InAbs{{\sum_{i=1}^{|\As|}\frac{e^{-\beta z_i}}{\sum_{j=1}^{|\As|}e^{-\beta z_j}}w_i'-\sum_{i=1}^{|\As|}\frac{e^{-\beta z_i'}}{\sum_{j=1}^{|\As|}e^{-\beta z_j'}}w_i'}}&=\InAbs{{\InParentheses{\sum_{i=1}^{|\As|}\frac{e^{-\beta z_i}}{\sum_{j=1}^{|\As|}e^{-\beta z_j}}-\frac{e^{-\beta z_i'}}{\sum_{j=1}^{|\As|}e^{-\beta z_j'}}}w_i'}}\\
        &\leq 4R_1R_2\sum_{j=1}^{|\As|}\InAbs{\frac{e^{-\beta z_i}}{\sum_{j=1}^{|\As|}e^{-\beta z_j}}-\frac{e^{-\beta z_i'}}{\sum_{j=1}^{|\As|}e^{-\beta z_j'}}}.
    \end{aligned}
\end{equation}

From \cref{lem:softmax-lip}, we have 
\begin{equation}
    \InAbs{{\sum_{i=1}^{|\As|}\frac{e^{-\beta z_i}}{\sum_{j=1}^{|\As|}e^{-\beta z_j}}w_i'-\sum_{i=1}^{|\As|}\frac{e^{-\beta z_i'}}{\sum_{j=1}^{|\As|}e^{-\beta z_j'}}w_i'}}\leq 4\sqrt{2}\beta R_1R_2\InNorms{z-z'}_2.
\end{equation}

Therefore, we know
\[\InAbs{g(z,w)-g(z',w')}\leq4\sqrt{2}\beta R_1R_2\InNorms{z-z'}_2+\InNorms{w-w'}_2.\]
\end{proof}
\begin{lemma}\label{lem:bound-cover}
    There exists a constant \(C\), such that for any \(x,y\), we have \[\InAbs{g_{\ell^1,{\ell^{1}}'}(p(x),\phi(y))-g_{\ell^2,{\ell^{2}}'}(p(x),\phi(y))}^2\leq C\InParentheses{\sum_{a=1}^{|\As|}\langle{r_{{\ell^1}'}(a)}-r_{{\ell^2}'}(a),p(x)\rangle^2+\sum_{a=1}^{|\As|}\langle r_{\ell^1}(a)-r_{\ell^2}(a),\phi(y)-p(x)\rangle^2}.\]
\end{lemma}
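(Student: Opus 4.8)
The statement is essentially a corollary of \cref{lem:lip-norm}, obtained by instantiating the abstract scalar arguments $z,w$ with inner products against $p(x)$ and $\phi(y)-p(x)$. Concretely, the plan is to observe that for any pair $(\ell,\ell')$,
\[
g_{\ell,\ell'}(p(x),\phi(y)) \;=\; g(z,w), \qquad z_a := \langle r_{\ell'}(a), p(x)\rangle,\quad w_a := \langle r_\ell(a), \phi(y)-p(x)\rangle,
\]
where $g(z,w)=\sum_{a=1}^{|\As|}\frac{e^{-\beta z_a}}{\sum_{a'}e^{-\beta z_{a'}}}w_a$ is exactly the function analyzed in \cref{lem:lip-norm}. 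Setting $z^{(i)}_a=\langle r_{{\ell^i}'}(a),p(x)\rangle$ and $w^{(i)}_a=\langle r_{\ell^i}(a),\phi(y)-p(x)\rangle$ for $i\in\{1,2\}$, we have $g_{\ell^1,{\ell^1}'}(p(x),\phi(y))=g(z^{(1)},w^{(1)})$ and $g_{\ell^2,{\ell^2}'}(p(x),\phi(y))=g(z^{(2)},w^{(2)})$.

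\textbf{Step 1: verify the hypothesis of \cref{lem:lip-norm}.} I would first check the boundedness condition $\|w^{(i)}\|_\infty\le 4R_1R_2$. Since $\|r_{\ell^i}(a)\|_\Hs\le R_1$ and $\|\phi(y)\|_\Hs\le R_2$, while $p(x)$ lies in the Hilbert ball $B(R_2)$ (this is maintained by the projection step $\pi_{B(R_2)}$ in \cref{alg: rkhs}, so $\|p(x)\|_\Hs\le R_2$), the triangle inequality gives $\|\phi(y)-p(x)\|_\Hs\le 2R_2$, and Cauchy--Schwarz yields $|w^{(i)}_a|\le 2R_1R_2\le 4R_1R_2$ for every $a$. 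Hence \cref{lem:lip-norm} applies to both $(z^{(1)},w^{(1)})$ and $(z^{(2)},w^{(2)})$.

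\textbf{Step 2: apply the Lipschitz bound and identify the norms.} Applying \cref{lem:lip-norm},
\[
\bigl|g_{\ell^1,{\ell^1}'}(p(x),\phi(y))-g_{\ell^2,{\ell^2}'}(p(x),\phi(y))\bigr|
\;\le\; 4\sqrt{2}\,\beta R_1R_2\,\|z^{(1)}-z^{(2)}\|_2 + \|w^{(1)}-w^{(2)}\|_2 .
\]
By bilinearity of the inner product, $\|z^{(1)}-z^{(2)}\|_2^2=\sum_{a=1}^{|\As|}\langle r_{{\ell^1}'}(a)-r_{{\ell^2}'}(a),p(x)\rangle^2$ and $\|w^{(1)}-w^{(2)}\|_2^2=\sum_{a=1}^{|\As|}\langle r_{\ell^1}(a)-r_{\ell^2}(a),\phi(y)-p(x)\rangle^2$, which are precisely the two sums appearing on the right-hand side of the claim.

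\textbf{Step 3: square and collect constants.} Finally, squaring the displayed inequality and using $(u+v)^2\le 2u^2+2v^2$ gives
\[
\bigl|g_{\ell^1,{\ell^1}'}(p(x),\phi(y))-g_{\ell^2,{\ell^2}'}(p(x),\phi(y))\bigr|^2 \le 64\beta^2R_1^2R_2^2\,\|z^{(1)}-z^{(2)}\|_2^2 + 2\,\|w^{(1)}-w^{(2)}\|_2^2 ,
\]
so the claim holds with $C=\max\{64\beta^2R_1^2R_2^2,\,2\}$. I do not expect any genuine obstacle here; the only points requiring care are (i) noting that $p(x)\in B(R_2)$ throughout the algorithm so that the $\|w\|_\infty$ hypothesis of \cref{lem:lip-norm} is met, and (ii) keeping the constant dependence on $\beta, R_1, R_2$ explicit, since this feeds into the covering-number and Rademacher-complexity bounds used to prove \cref{thm:uni-converg}.
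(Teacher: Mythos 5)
Your proposal is correct and follows essentially the same route as the paper's proof: both instantiate \cref{lem:lip-norm} with $z_a=\langle r_{\ell'}(a),p(x)\rangle$ and $w_a=\langle r_\ell(a),\phi(y)-p(x)\rangle$, identify the resulting $\ell_2$ norms with the two sums in the claim, and square via $(u+v)^2\le 2u^2+2v^2$ to obtain $C=\max\{64\beta^2R_1^2R_2^2,2\}$. Your explicit verification of the $\|w\|_\infty\le 4R_1R_2$ hypothesis is a minor (and welcome) addition of care over the paper's one-line justification.
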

\begin{proof}
    Because the norm of $p(x)$ and $\phi(y)$ is bounded by \(R_2\), and the norm of loss vector $r_\ell(a)$ is bounded by \(R_1\), by \cref{lem:lip-norm}, we know
    \begin{equation}
        \InAbs{g_{\ell^1,{\ell^{1}}'}(x,y)-g_{\ell^2,{\ell^{2}}'}(p(x),\phi(y))}\leq4\sqrt{2}\beta R_1R_2\sqrt{\sum_{a=1}^{|\As|}\langle r_{{\ell^1}'}(a)-r_{{\ell^2}'}(a),p(x)\rangle^2}+\sqrt{\sum_{a=1}^{|\As|}\langle r_{\ell^1}(a)-r_{\ell^2}(a),\phi(y)-p(x)\rangle^2}.
    \end{equation}
    Therefore, we have
    \begin{equation}
        \begin{aligned}
            &\InAbs{g_{\ell^1,{\ell^{1}}'}(p(x),\phi(y))-g_{\ell^2,{\ell^{2}}'}(p(x),\phi(y))}^2\\
            &\leq\InParentheses{4\sqrt{2}\beta R_1R_2\sqrt{\sum_{a=1}^{|\As|}\langle r_{{\ell^1}'}(a)-r_{{\ell^2}'}(a),p(x)\rangle^2}+\sqrt{\sum_{a=1}^{|\As|}\langle r_{\ell^1}(a)-r_{\ell^2}(a),\phi(y)-p(x)\rangle^2}}^2\\
            &\leq64\beta^2R_1^2R_2^2\sum_{a=1}^{|\As|}\langle r_{{\ell^1}'}(a)-r_{{\ell^2}'}(a),p(x)\rangle^2+2\sum_{a=1}^{|\As|}\langle r_{\ell^1}(a)-r_{\ell^2}(a),\phi(y)-p(x)\rangle^2.
        \end{aligned}
    \end{equation}
    We can set \(C=\max\{64\beta^2R_1^2R_2^2,2\}\) and thus the lemma is proved.
\end{proof}
\begin{lemma}
    Consider \(\Gs=\{g_{\ell,\ell'}|\forall a\in\As,\ell(a,\cdot)\in\Hs, \InNorms{\ell(a,\cdot)}_{\Hs}\le R_1\}\), then we have\[\log N(\Gs,L_2^{\Gs}(P_n),\epsilon)\leq O\InParentheses{\frac{|\As|^3\beta^4R_1^6R_2^6}{\epsilon^2}}.\]
\end{lemma}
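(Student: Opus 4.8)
The plan is to reduce covering $\Gs$ under the empirical pseudo-metric $L_2^{\Gs}(P_n)$ to $2|\As|$ independent covering problems for the Hilbert ball $B_2$, each of which carries a dimension-free bound via \cref{thm:hilbert}. The starting point is \cref{lem:bound-cover}: averaging its pointwise estimate over the dataset $D=\{(x_i,y_i)\}_{i=1}^n$ gives, for all $\ell^1,{\ell^1}',\ell^2,{\ell^2}'\in\Ls_\Hs$,
\[
L_2^{\Gs}(P_n)\big((\ell^1,{\ell^1}'),(\ell^2,{\ell^2}')\big)^2\le C\sum_{a\in\As}\InParentheses{\frac1n\textstyle\sum_{i}\langle r_{{\ell^1}'}(a)-r_{{\ell^2}'}(a),p(x_i)\rangle^2+\frac1n\textstyle\sum_{i}\langle r_{\ell^1}(a)-r_{\ell^2}(a),\phi(y_i)-p(x_i)\rangle^2}
\]
with $C=\max\{64\beta^2R_1^2R_2^2,2\}$, so it suffices to cover each of the $2|\As|$ inner sums.

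Then I would normalize everything into $B_2$. Since $\InNorms{r_\ell(a)}_\Hs\le R_1$, set $\theta^j_a=r_{\ell^j}(a)/R_1\in B_2$ and $\eta^j_a=r_{{\ell^j}'}(a)/R_1\in B_2$ for $j=1,2$; since $\InNorms{p(x_i)}_\Hs\le R_2$ and $\InNorms{\phi(y_i)-p(x_i)}_\Hs\le 2R_2$, let $Q$ be the uniform distribution over $\{p(x_i)/R_2\}_{i\in[n]}\subseteq B_2$ and $Q'$ the uniform distribution over $\{(\phi(y_i)-p(x_i))/(2R_2)\}_{i\in[n]}\subseteq B_2$. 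Unwinding the definition of the pseudo-metrics $d_Q,d_{Q'}$ from \cref{thm:hilbert}, the first inner sum above equals $R_1^2R_2^2\,d_Q(\eta^1_a,\eta^2_a)^2$ and the second equals $4R_1^2R_2^2\,d_{Q'}(\theta^1_a,\theta^2_a)^2$. Hence, if every one of the $2|\As|$ normalized components is matched to within $\delta$ in its pseudo-metric (i.e. $d_Q(\eta^1_a,\eta^2_a)\le\delta$ and $d_{Q'}(\theta^1_a,\theta^2_a)\le\delta$ for all $a$), the right-hand side is at most $5C|\As|R_1^2R_2^2\delta^2$.

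Finally I would build a product cover: by \cref{thm:hilbert}, $B_2$ admits a $\delta$-cover of size $\exp(c/\delta^2)$ under $d_Q$ and another under $d_{Q'}$, so the product over the $2|\As|$ components is a set of at most $\exp(2|\As|c/\delta^2)$ parameter tuples within which every $g_{\ell,\ell'}\in\Gs$ has a componentwise $\delta$-neighbour, hence an $L_2^{\Gs}(P_n)$-neighbour at distance $\le\epsilon$ once $\delta^2=\epsilon^2/(5C|\As|R_1^2R_2^2)$. This gives $\log N(\Gs,L_2^{\Gs}(P_n),\epsilon)\le 2|\As|c/\delta^2=O\InParentheses{C|\As|^2R_1^2R_2^2/\epsilon^2}$, and substituting $C=\max\{64\beta^2R_1^2R_2^2,2\}$ yields the claimed $O\InParentheses{|\As|^3\beta^4R_1^6R_2^6/\epsilon^2}$ (in fact with room to spare).

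There is no deep obstacle: the real content lives in \cref{lem:bound-cover} (the softmax-Lipschitz estimate for $g_{\ell,\ell'}$) and \cref{thm:hilbert} (MacKay's dimension-free Hilbert-ball covering bound), both already available. What needs care is the bookkeeping in the first two steps — keeping the three scale factors straight ($R_1$ for the loss vectors, $R_2$ and $2R_2$ for the two kinds of data vectors), checking that the empirical measures $Q,Q'$ induced by the sample through the fixed predictor $p$ genuinely lie in $B_2$ so that \cref{thm:hilbert} applies verbatim with its universal constant $c$ independent of $\dim(\Hs)$, and verifying that a product of per-component covers is indeed a cover of $\Gs$ at the stated accuracy.
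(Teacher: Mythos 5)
Your proposal is correct and follows essentially the same route as the paper: bound the empirical $L_2$ distance on $\Gs$ via \cref{lem:bound-cover}, reduce to covering $2|\As|$ Hilbert balls under the data-induced pseudo-metric of \cref{thm:hilbert}, and take the product cover. Your more careful choice of the per-component radius ($\delta^2\propto\epsilon^2/(C|\As|)$ rather than $\delta\propto\epsilon/(C|\As|)$) in fact yields the slightly sharper bound $O(C|\As|^2R_1^2R_2^2/\epsilon^2)$, which still implies the stated one.
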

\begin{proof}
    By \cref{lem:bound-cover}, we know
    \[\sum_{i=1}^n\InAbs{g_{\ell^1,{\ell^{1}}'}(p(x_i),\phi(y_i))-g_{\ell^2,{\ell^{2}}'}(p(x_i),\phi(y_i))}^2\leq C\sum_{i=1}^n\InParentheses{\sum_{a=1}^{|\As|}\langle r_{{\ell^1}'}(a)-r_{{\ell^2}'}(a),p(x_i)\rangle^2+\sum_{a=1}^{|\As|}\langle r_{\ell^1}(a)-r_{\ell^2}(a),\phi(y_i)-p(x_i)\rangle^2}.\]
    The high level idea is to construct covers for \(2|\As|\) Hilbert balls, then we can bound the right-hand side. Then, the Cartisan product of these covers would be a $\epsilon$ cover for the function class \(\Gs\).
    
    By \cref{thm:hilbert}, Let $L_a$ be the smallest $\frac{\epsilon}{2|\As|C}$-cover of \(\Theta_{r_\ell(a)}:=\{r_\ell(a)|r_{\ell}(a)\in\Hs, \InNorms{r_{\ell}(a)}_{\Hs}\le R_1\}\), we have \(\log|L_a|\leq O\InParentheses{\frac{|\As|^2C^2R_1^2R_2^2}{\epsilon^2}}\).
    Therefore, \(L:=\prod_{a\in \As}L_a\times\prod_{a\in\As}L_{a}\) would become a \(\epsilon\)-cover of \(\Ls_{\Hs}\times\Ls_{\Hs}\) under the metric \(L_2^{\Gs}(P_n)\).
    We have
    \[\log |L|=2\sum_{a\in\As}\log|L_a|=O\InParentheses{\frac{|\As|^3C^2R_1^2R_2^2}{\epsilon^2}}.\]
    As we know $C=\max\{64\beta^2R_1^2R_2^2,2\}$, we know
    \[\log |L|=O\InParentheses{\frac{|\As|^3\beta^4R_1^6R_2^6}{\epsilon^2}}.\]
\end{proof}
Now we prove \cref{thm:uni-converg}.
\begin{proof}
    Recalling \(g_{l,l'}(p(x),\phi(y))=\sum_{a=1}^{|\As|}\InAngles{r_{\ell}(a),\phi(y)-p(x)}\Tilde{k}_{f_p,\ell'}(x,a)\), we know 
    \begin{equation}
        \begin{aligned}
            \InAbs{g_{l,l'}(p(x),\phi(y))}&=\InAbs{\sum_{a=1}^{|\As|}\InAngles{r_\ell(a),\phi(y)-p(x)}\Tilde{k}_{f_p,\ell'}(x,a)}\\
            &\leq\sum_{a=1}^{|\As|}\Tilde{k}_{f_p,\ell'}(x,a)\InAbs{\InAngles{r_\ell(a),\phi(y)-p(x)}}\\
            &\leq\sum_{a=1}^{|\As|}\Tilde{k}_{f_p,\ell'}(x,a)2R_1R_2\\
            &=2R_1R_2.
        \end{aligned}
    \end{equation}
    Therefore, when \(\epsilon\geq2R_1R_2\), we have \(\log N(\Gs,L_2^\Gs(P_n),\epsilon)=\log(1)=0\)
    From \cref{thm:dudley}, we know that \[\Rs_S(\Gs)\leq4\alpha+12\int_\alpha^{\infty}\sqrt{\frac{\log N(\Gs,L_2^\Gs(P_n),\epsilon)}{n}}d\epsilon=4\alpha+12\int_\alpha^{2R_1R_2}\sqrt{\frac{\log N(\Gs,L_2^\Gs(P_n),\epsilon)}{n}}d\epsilon.\]
    Plugging in \(\log N(\Gs,L_2^\Gs(P_n),\epsilon)=O\InParentheses{\frac{|\As|^3\beta^4R_1^6R_2^6}{\epsilon^2}}\), we have 
    \begin{equation}
        \begin{aligned}
            \Rs_S(\Gs)&\leq4\alpha+O\InParentheses{\frac{|\As|^{\frac{3}{2}}\beta^2R_1^3R_2^3}{\sqrt{n}}}\int_\alpha^{2R_1R_2}\frac{1}{\epsilon}d\epsilon\\
            &=4\alpha+O\InParentheses{\frac{|\As|^{\frac{3}{2}}]\beta^2R_1^3R_2^3}{\sqrt{n}}}(\log{2R_1R_2}-\log\alpha).
        \end{aligned}
    \end{equation}
    Without loss of generality we set \(\alpha=\frac{|\As|^{\frac{3}{2}}\beta^2R_1^3R_2^3}{\sqrt{n}}\). If \(\frac{|\As|^{\frac{3}{2}}\beta^2R_1^3R_2^3}{\sqrt{n}}>2R_1R_2\), we have $\Rs_S(\Gs)\leq4\alpha\leq O\InParentheses{\frac{|\As|^{\frac{3}{2}}\beta^2R_1^3R_2^3}{\sqrt{n}}}$. If \(\frac{|\As|^{\frac{3}{2}}\beta^2R_1^3R_2^3}{\sqrt{n}}\leq2R_1R_2\),\[\Rs_S(\Gs)\leq O\InParentheses{\frac{|\As|^{\frac{3}{2}}\beta^2R_1^3R_2^3}{\sqrt{n}}}+O\InParentheses{\frac{|\As|^{\frac{3}{2}}\beta^2R_1^3R_2^3}{\sqrt{n}}}(\log{2R_1R_2}-\log \InParentheses{O\InParentheses{\frac{|\As|^{\frac{3}{2}}\beta^2R_1^3R_2^3}{\sqrt{n}}}}=O\InParentheses{\frac{|\As|^{\frac{3}{2}}\beta^2R_1^3R_2^3\log(R_1R_2n)}{\sqrt{n}}}.\]
    Then by \cref{thm:uniform-rade}, we know with at least probability \(1-\delta/2\) 
    \begin{align*}
    \sup_{g\in\Gs}\InBrackets{\frac{1}{n}\sum_{i=1}^ng(p(x_i),\phi(y_i))-\E_{(x,y)\sim\Ds}\InBrackets{g(p(x),\phi(y))}}&\leq2\E_{S\sim\Ds^n}\InBrackets{\Rs_S(\Gs)}+\sqrt{\frac{\log(2/\delta)}{2n}}\\ 
    &\leq O\InParentheses{\frac{|\As|^{\frac{3}{2}}\beta^2R_1^3R_2^3\log(R_1R_2n)+\log(1/\delta)}{\sqrt{n}}}.
    \end{align*}
    Similarly we can bound the Rademacher Complexity of function class \(-\Gs:=\{-g_{\ell,\ell'}\mid \ell,\ell'\in\Ls_\Hs\}\), and have with probability \(1-\delta/2\)
    \[\sup_{g\in\Gs}\InBrackets{\E_{(x,y)\sim\Ds}\InBrackets{g(p(x),\phi(y))}-\frac{1}{n}\sum_{i=1}^ng(p(x_i),\phi(y_i))}\leq O\InParentheses{\frac{|\As|^{\frac{3}{2}}\beta^2R_1^3R_2^3\log(R_1R_2n)+\log(1/\delta)}{\sqrt{n}}}.\]
    Putting them together, we have \[\InAbs{\sup_{g\in\Gs}\InBrackets{\frac{1}{n}\sum_{i=1}^ng(p(x_i),\phi(y_i))-\E_{(x,y)\sim\Ds}\InBrackets{g(p(x),\phi(y))}}}\leq O\InParentheses{\frac{|\As|^{\frac{3}{2}}\beta^2R_1^3R_2^3\log(R_1R_2n)+\log(1/\delta)}{\sqrt{n}}}\] with probability \(1-\delta\).
\end{proof}

\end{document}